\documentclass{article}
\usepackage{amsmath, amsthm,amssymb}
\usepackage{microtype}
\usepackage{graphicx}
\usepackage{subfigure}
\usepackage{booktabs} 

\DeclareMathOperator*{\argmax}{arg\,max}

\usepackage{xcolor} 
\definecolor{blued}{RGB}{70,197,221}
\definecolor{pearOne}{HTML}{2C3E50}
\definecolor{pearTwo}{HTML}{A9CF54}
\definecolor{pearTwoT}{HTML}{C2895B}
\definecolor{pearThree}{HTML}{E74C3C}
\colorlet{titleTh}{pearOne}
\colorlet{bull}{pearTwo}
\definecolor{pearcomp}{HTML}{B97E29}
\definecolor{pearFour}{HTML}{588F27}
\definecolor{pearFith}{HTML}{ECF0F1}
\definecolor{pearDark}{HTML}{2980B9}
\definecolor{pearDarker}{HTML}{1D2DEC}
\usepackage{nameref}
\usepackage{hyperref}   
\hypersetup{
	colorlinks,
	citecolor=pearDark,
	linkcolor=pearThree ,
	urlcolor=pearDarker}
\usepackage{algorithm}
\usepackage{algorithmic}

\definecolor{graphicbackground}{rgb}{0.96,0.96,0.8}
\definecolor{rouge1}{RGB}{226,0,38}  
\definecolor{orange1}{RGB}{243,154,38}  
\definecolor{jaune}{RGB}{254,205,27}  
\definecolor{blanc}{RGB}{255,255,255} 
\definecolor{rouge2}{RGB}{230,68,57}  
\definecolor{orange2}{RGB}{236,117,40}  
\definecolor{taupe}{RGB}{134,113,127} 
\definecolor{gris}{RGB}{91,94,111} 
\definecolor{bleu1}{RGB}{38,109,131} 
\definecolor{bleu2}{RGB}{28,50,114} 
\definecolor{vert1}{RGB}{133,146,66} 
\definecolor{vert3}{RGB}{20,200,66} 
\definecolor{vert2}{RGB}{157,193,7} 
\definecolor{darkyellow}{RGB}{233,165,0}  
\definecolor{lightgray}{rgb}{0.9,0.9,0.9}
\definecolor{darkgray}{rgb}{0.6,0.6,0.6}
\definecolor{babyblue}{rgb}{0.54, 0.81, 0.94}
\definecolor{citrine}{rgb}{0.89, 0.82, 0.04}
\definecolor{misogreen}{rgb}{0.25,0.6,0.0}

\makeatletter
\newcommand{\numrel}[2]{
  \refstepcounter{equation}
  \ltx@label{#2}
  \overset{(\theequation)}{#1}
}
\newcommand{\numterm}[1]{\refstepcounter{equation} \ltx@label{#1} (\theequation)}
\makeatother

\let\originalleft\left
\let\originalright\right
\renewcommand{\left}{\mathopen{}\mathclose\bgroup\originalleft}
\renewcommand{\right}{\aftergroup\egroup\originalright}

\newcommand{\sset}[1]{\left\{#1\right\}}
\newcommand{\ceil}[1]{\left\lceil#1\right\rceil}
\newcommand{\floor}[1]{\left\lfloor#1\right\rfloor}

\newcommand{\II}[1]{\mathbb{I}{\left\{#1\right\}}}

\newcommand{\Bernoulli}{\mathrm{Bernoulli}}

 \newtheorem{theorem}{Theorem}

\newtheorem{remark}{Remark}
 \newtheorem{proposition}{Proposition}
\newtheorem{fact}{Fact}

\newcommand{\R}{\mathbb{R}}

\newcommand{\NN}{{\mathbb N}}

\newcommand{\EE}[1]{\mathbb{E}\left[#1\right]}

\newcommand{\EEcc}[2]{\mathbb{E}\left[\left.#1\right|#2\right]}

\newcommand{\PP}[1]{\mathbb{P}\left[#1\right]}
\newcommand{\Prb}{\mathbb{P}}

\newcommand{\pa}[1]{\left(#1\right)}

\newcommand{\norm}[1]{\left\|#1\right\|}

\newcommand{\abs}[1]{\left|#1\right|}
\newcommand{\imp}{\Rightarrow}

\newcommand{\CommaBin}{\mathbin{\raisebox{0.5ex}{,}}}

\newcommand{\transpose}{^\mathsf{\scriptscriptstyle T}}

\newcommand{\cF}{\mathcal{F}}

\newcommand{\cH}{\mathcal{H}}

\newcommand{\cO}{\mathcal{O}}

\newcommand{\cS}{\mathcal{S}}

\newcommand{\ba}{{\bf a}}

\newcommand{\bc}{{\bf c}}
\newcommand{\bC}{{\bf C}}

\newcommand{\be}{{\bf e}}

\newcommand{\bw}{{\bf w}}
\newcommand{\bW}{{\bf W}}

\renewcommand{\epsilon}{\varepsilon}

\renewcommand{\bar}{\overline}

\newcommand{\nothere}[1]{}

\usepackage{xspace}

\newcommand{\reach}[1]{\overset{#1}{\rightsquigarrow}}
\newcommand{\fixed}{c^\star_{0}}
\newcommand{\Fixed}[1]{C_{0,#1}}
\newcommand{\ret}[2]{\pa{#1 \vert #2}}
\newcommand{\vmeanw}[1]{\bar{\bw}_{#1}}
\newcommand{\meanw}[1]{\bar{w}_{#1}}

\newcommand{\meanc}[1]{\bar{c}_{#1}}

\newcommand{\counterc}[2]{N_{\!\ominus~\!\! #1,#2}}
\newcommand{\counterw}[2]{N_{\!\oplus~\!\! #1,#2}}
\newcommand{\bonus}{\text{Bonus}}
\newcommand{\sizecorr}[1]{\makebox[0cm]{\phantom{$\displaystyle #1$}}}

\usepackage[accepted]{icml2020}

\icmltitlerunning{Budgeted Online Influence Maximization}

\begin{document}

\twocolumn[
\icmltitle{Budgeted Online Influence Maximization}




\begin{icmlauthorlist}
\icmlauthor{Pierre Perrault}{adb,ens,inr}
\icmlauthor{Jennifer Healey}{adb}
\icmlauthor{Zheng Wen}{dm}
\icmlauthor{Michal Valko}{dm,inr}
\end{icmlauthorlist}

\icmlaffiliation{adb}{Adobe Research, San Jose, CA}
\icmlaffiliation{ens}{ENS Paris-Saclay}
\icmlaffiliation{inr}{Inria Lille}
\icmlaffiliation{dm}{DeepMind}

\icmlcorrespondingauthor{Pierre Perrault}{pierre.perrault@outlook.com}

\icmlkeywords{Online influence maximization, bandits}

\vskip 0.3in
]



\printAffiliationsAndNotice{} 

\begin{abstract}
We introduce a new budgeted framework for online  influence  maximization,   considering the total cost of an advertising campaign instead  of  the  common  cardinality  constraint on a chosen influencer set. Our approach models better the  real-world  setting  where  the cost of influencers varies and advertizers want to find the best value for their overall social advertising budget. We propose an algorithm assuming  an  independent  cascade  diffusion model  and  edge-level  semi-bandit  feedback, and provide both theoretical and experimental results.  Our analysis is also valid for the cardinality-constraint  setting  and  improves the state of the art regret bound in this case.
\end{abstract}
\vspace{-.5cm}
\section{Introduction}
\label{sec:intro}
\vspace{-.1cm}
Viral marketing through online social networks now represents a significant part of many digital advertising budgets.  In this form of marketing, companies incentivize chosen influencers in social networks (e.g., Facebook, Twitter, YouTube) to feature a product in hopes that their followers will adopt the product and repost the recommendation to their own network of followers.  
The effectiveness of the chosen set of influencers can be measured by the expected number of users that adopt the product due to their initial recommendation, called the \emph{spread}. 
\emph{Influence maximization} (IM, \citealp{kempe2003maximizing}) is the problem of 
choosing the optimal set of influencers to maximize the spread under a cardinality  constraint on the chosen set.

In order to define the spread, we need to specify a diffusion process such as independent cascade (IC) or linear threshold (LT) \citep{kempe2003maximizing}.  The parameters of these models are usually  \emph{unknown}.  Different methods exist to estimate the
parameters of the diffusion model from historical data (see section~\ref{related}) however historical data is often difficult to obtain. Another possibility is to consider \emph{online influence maximization} (OIM) \citep{vaswani2015influence,wen2017online} where an agent actively learns about the network by interacting with it repeatedly, trying to find the best seed influencers.  The agent thus faces the dilemma of exploration versus exploitation, allowing us to see it as \emph{multi-armed bandits} problem \citep{auer2002finite}. More precisely, the agent faces IM over $T$ rounds. Each round, it selects $m$ seeds (based on \emph{feedback} from prior rounds) and diffusion occurs; then it gains a reward equal to the spread and receives some feedback on the diffusion. 

IM and OIM optimize with the constraint of a fixed number of seeds.  This reflects a fixed seed cost model, for example, where influencers are incentivized by being given an identical free product.  In reality, however, many influencers demand different levels of compensation.  Those with a high out-degree (e.g., number of followers) are usually more expensive.  Due to these cost variations, marketers usually wish to optimize their seed sets $S$ under a budget $c(S)\leq b$ rather than a cardinality constraint $\abs{S}\leq m$.  Optimizing a seed set under a budget has been studied in the \emph{offline} case by \citet{nguyen2013budgeted}.
In the \emph{online} case, \citet{wang2020fast} considered the relaxed constraint $\EE{c(S)}\leq b$, where the expectation is over the possible randomness of $S$.\footnote{This relaxation is to avoid a computationally costly \emph{partial enumeration} \citep{krause2005note,khuller1999budgeted}}
We believe however that the constraint of a fixed, equal budget $c(S)\leq b$ at each round does not sufficiently model the willingness to choose a cost-efficient seed set. Indeed, we see that the choice of $b$ is crucial: a $b$ too large translates into a waste of budget (some seeds that are too expensive will be chosen) and a $b$ too small translates into a waste of time (a whole round is used to influence only a few users).  
To circumvent this issue, instead of a budget per round, in our framework, we allow the agent to choose seed sets of any cost at each round, under an overall budget constraint (equal to $B=bT$ for instance). 
In summary, we incorporate the OIM framework into a \emph{budgeted bandit} setting. 
Our setting is more flexible for the agent, and better meets real-world needs.

\subsection{Related work on IM}\label{related}
IM can be formally defined as follows. A social network is modeled as a directed graph 
$G=(V,E)$, with nodes $V$ representing users and edges $E$ representing connections. An underlying diffusion model $D$ governs how information spreads in $G$. More precisely, $D$ is a probability distribution on subgraphs $G'$ of $G$, and given some seed set $S$, the spread $\sigma\pa{S}$ is defined as the expected number of $S$-reachable\footnote{nodes that are reachable from some node in $S$.} nodes in $G'\sim D$.  IM aims to find $S$ that is a solution to 
\begin{align}\max_{\abs{S}=m} \sigma\pa{S}.\label{IM}\end{align}
Although IM is NP-hard under standard diffusion models --- i.e., IC and LT --- $\sigma$ is a monotone \emph{submodular}\footnote{$f$ is submodular if $f(A\!\cup\!\sset{i})\!-\!f(A)$ is non-increaing with $A$.} function \citep{fujishige2005submodular}, and given a value oracle access to $\sigma$, the standard \textsc{greedy} algorithm solves \eqref{IM} within a $1-1/e$ approximation factor \citep{Nemhauser1978}. There have been multiple lines of work for IM, including the development of heuristics, approximation algorithms, as well as alternative diffusion models \citep{Leskovec2007,6137225,tang2014influence,Tang2015}.  Additionally, there are also results on learning $D$ from data in the case it is not known \citep{Saito2008,Goyal2010,Gomez-Rodriguez2012,Netrapalli2012}. 

\subsection{Related work on OIM}
Prior work in OIM has mainly considered either \emph{node level semi-bandit} feedback \citep{vaswani2015influence}, where the agent observes all the $S$-reachable nodes in $G'$, or \emph{edge level semi-bandit} feedback \citep{wen2017online}, where the agent observes the whole $S$-reachable subgraph (i.e., the subgraph of $G'$ induced by $S$-reachable nodes).  Other, weaker, feedback settings have also been studied including:
pairwise influence feedback, where all nodes that would be influenced by a seed set are observed but not the edges connecting them, i.e., $\pa{\sset{i}\text{-reachable nodes}}_{i\in S}$ is observed \citep{vaswani2017model}; 
local feedback, where the agent observes a set of out-neighbors of $S$ \citep{carpentier2016revealing} and immediate neighbor observation where the agent only observes the out-degree of $S$ \citep{lugosi19a}. 

\subsection{Our contributions}
In this paper, we define the budgeted OIM paradigm and propose a performance metric for an online policy on this problem using the notion of \emph{approximation regret} \citep{chen13a}. 
To the best of our knowledge, the both of contributions are new.
We then focus our study on the IC model with edge level semi-bandit feedback. We design a \textsc{cucb}-style algorithm 
and prove logarithmic regret bounds. 
We also propose some modifications of this algorithm with improving the regret rates. 
These gains apply to the non-budgeted setting, giving an improvement over the state-of-the-art analysis of the standard \textsc{cucb}-approach \citep{wang2017improving}. Our proof incorporates an approximation guarantee of \textsc{greedy} for ratio of submodular and modular functions, which may also be of independent interest.

\section{Problem definition} 
In this section, we formulate the problem of budgeted OIM and give a regret definition for evaluating policies in that setting. We also justify our choice for this notion of regret. We typeset vectors in bold and indicate components with indices, e.g., for some set $I$, $\ba=(a_i)_{i\in I} \in \R^I$ is a vector on $I$. Let $\be_i$ be the $i^{th}$ canonical unit
vector of $\R^I$. The incidence vector of any subset $A\subset I$ is \(\be_A\triangleq \sum_{i\in A}\be_i.\)



We consider a fixed directed network $G=(V,E)$, known to the agent, with $V\triangleq \sset{1,\dots,\abs{V}}$. We denote by $ij\in E$ the directed edge from node $i$ to $j$ in $G$. We assume that $G$ doesn't have self-loops, i.e., for all $ij\in E,~i\neq j$. For a node $i\in V$, a subset $S\subset V$, and a vector $\bw\in \sset{0,1}^E$, the predicate $S\reach{\bw} i$ holds if, in the graph defined by $G_\bw\triangleq\pa{V,\sset{ij\in E, w_{ij}=1}}$, there is a forward path
from a node in $S$ to the node~$i$. If it holds, we say that $i$ is influenced by $S$ under~$\bw$.  
We define $p_i\pa{S;\bw}\triangleq\II{S\reach{\bw} i}$ and the \emph{spread} as $\sigma\pa{S;\bw}\triangleq \abs{\sset{i\in V,~S\reach{\bw} i}}$. 
Our diffusion process is defined by the random vector $\bW\in \sset{0,1}^E$, and our cost is defined by the random\footnote{Although costs are usually deterministic, we assume randomness for more generality (influencer campaigns may have uncertain surcharges for example).} 
vector $\bC\in [0,1]^{V\cup\sset{0}}$ where the added component $C_{0}$ represents any fixed costs\footnote{We provide a toy example where $C_{0}$ models a concrete quantity: Assume you want to fill your restaurant. You may pay some seeds and ask them to advertise/influence people. $C_{0}$ represents the cost of the food, the staff, the rent, the taxes, ...     }. Notice, random costs are neither assumed to be mutually independent nor independent from $\bW$. We will see that components of $\bW$ might however be mutually independent (e.g., for the IC model).
\vspace{-.1cm}
\subsection{Budgeted online influence maximization}
The agent interacts with the diffusion process across several rounds, using a learning policy. 
At each round $t\geq 1$, the agent first selects a seed set $S_t\subset V$, based on its past observations.  Then, the random vectors for both the diffusion process $\bW_t\sim\Prb_{\bW}$ and the costs  $\bC_{t}\sim\Prb_{\bC}$ are sampled independently from previous rounds. Then, the agent
observes some feedback from both the diffusion process and the costs. 

We provide in \eqref{cumreward} the expected cumulative rewards $F_{B}$ defined for some total budget $B>0$. The goal for the agent is to follow a learning policy $\pi$ maximizing $F_{B}$. In \eqref{cumreward}, recall that $S_t$ is the seed set selected by $\pi$ at round~$t$.

\vspace{-.6cm}
\begin{align} F_{B}\pa{\pi}\triangleq \EE{\sum_{t=1}^{\tau_B-1} { \sigma\pa{S_t;\bW_t}}}.\label{cumreward}
\end{align}\vspace{-.5cm}

$\tau_B$ is the random round at which the remaining
budget becomes negative: if $B_t \triangleq B -
\sum_{t'\leq t } \pa{\be_{S_{t'}}\transpose \bC_{t'}+\Fixed{t'}}
$, then $B_{\tau_B-1}\geq 0$ and 
$B_{\tau_B} < 0$. Notice, quantities
$B_t$ and $\tau_B$ are usual in budgeted multi-armed bandits 
\citep{xia2016budgeted,Ding2013}.

\vspace{-.1cm}
\subsection{Performance metric}
We restrict ourselves to \emph{efficient} policies, i.e., we consider a complexity constraint on the policy the agent can follow: For a round $t$, the space and time complexity for computing $S_t$ has to be polynomial in $\abs{V}$, and polylogarithmic in $t$.
To evaluate the performance of a learning policy $\pi$, we use the notion of approximation regret \citep{kakade2009playing,streeter2009online,Chen2015combinatorial}. The agent wants to follow a learning
policy $\pi$ which minimizes
\begin{align*}
R_{B,\varepsilon}\pa{\pi}\triangleq \pa{1-{1}/{e}-\varepsilon}F^\star_{B} - F_{B}\pa{\pi},
\end{align*}
where $F^\star_{B}$ is the best possible value of $F_{B}$ over all policies (thus leveraging on the knowledge of $\Prb_{\bW}$ and $\Prb_{\bC}$), and where $\varepsilon>0$ is some parameter the agent can control to determine the tradeoff between computation
and accuracy.

\begin{remark}
This OIM with a total budget $B$ is different from OIM in previous work, such as \citet{wang2017improving}, even when we set all costs to be equal. In our setting, there is only one total budget for all rounds, and the policy is free to choose seed sets of different cost in each round, whereas in the previous work, each round had a fixed budget for the number/cost of seeds selected. Our setting thus avoid the use of a budget per round, which is in practice more difficult to establish than a global budget $B$. Nevertheless, as we will see in section~\ref{sec:budget}, both types of constraints (global and per round) can be considered simultaneously when the true costs are known. 
\end{remark}



\subsection{Justification for the approximation regret}
In the non-budgeted OIM problem with a cardinality constraint given by $m\in [\abs{V}]$, let us recall that the approximation regret 
\vspace{-.1cm}\begin{align*}R_{T,\varepsilon}\pa{\pi}\!\triangleq\!\sum_{t\leq T}\max_{\overset{S\subset V,}{\abs{S}= m}}\EE{\pa{1\!-\!{1}/{e}\!-\!\varepsilon}\sigma\pa{S;\bW}\!-\!\sigma\pa{S_t;\bW}}\end{align*}\vspace{-.5cm}

is standard  \citep{wen2017online,wang2017improving}. In this notion of regret, the factor $\pa{1-{1}/{e}-\varepsilon}$ \citep{feige1998threshold,chen2010scalable} reflects the difficulty of approximating the following NP-Hard \citep{kempe2003maximizing} problem
in the case the distribution $\Prb_{\bW}$ is described by IC or LT, and is known to the agent: \begin{align}\max_{S\subset V,~\abs{S}= m}\EE{\sigma\pa{S;\bW}}.\label{np-hard}\end{align}
For our budgeted setting, at first sight, it is not straightforward to know which approximation factor to choose. Indeed, since the random horizon may be different in $F_B\pa{\pi}$ and in $F_B^\star$, the expected regret $R_{B,\varepsilon}\pa{\pi}$ is not expressed as the expectation of a sum of approximation gaps, so we can't directly reduce the regret level approximability to the gap level approximability.
We thus consider a quantity provably close to $R_{B,\varepsilon}\pa{\pi}$ and easier to handle.
\begin{proposition}
Define \[\lambda^\star\triangleq \max_{S\subset V }\frac{\EE{\sigma\pa{S;\bW}}}{\EE{\be_{S\cup\sset{0}}\transpose \bC}}\cdot\] 
\vspace{-.55cm}

For all  $S\subset V,$ define the \emph{gap} corresponding to $S$ as 
\vspace{-.1cm}
\[ \Delta\pa{S}\triangleq  \pa{1-{1}/{e}-\varepsilon}\lambda^\star\EE{\be_{S\cup\sset{0}}\transpose \bC} - \EE{\sigma\pa{S;\bW}}. \]
Then, for any policy $\pi$ selecting $S_t$ at round $t$,\vspace{-.6cm}

\[\abs{R_{B,\varepsilon}\pa{\pi} - \EE{\sum_{t=1}^{\tau_B-1} \Delta\pa{S_t}{}}}\leq 2\abs{V}+2\lambda^\star\pa{1 +\abs{V}}.\] 
\label{prop:budgeted_regret}
\end{proposition}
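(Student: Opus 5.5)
The plan is to prove two separate estimates and then combine them. The first is an upper bound $F^\star_B\le \lambda^\star(B+1)+c$ for some small constant $c$. The second is an identity relating $\EE{\sum_{t<\tau_B}\Delta(S_t)}$ to $(1-1/e-\varepsilon)\lambda^\star B - F_B(\pi)$, with an error of at most $O(\lambda^\star+\abs{V})$.

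Write $\sigma_t\triangleq\sigma(S_t;\bW_t)$ and $c_t\triangleq \be_{S_t\cup\sset{0}}\transpose\bC_t$, where the $0$-component is $\Fixed{t}$, so that $B_t=B-\sum_{t'\le t}c_{t'}$. Let $\cF_{t}$ be the history up to round $t$. Then $S_t$ is $\cF_{t-1}$-measurable, and $(\bW_t,\bC_t)$ is independent of $\cF_{t-1}$. The key tool is optional stopping, or equivalently Wald's identity for predictable choices. The event $\sset{\tau_B\ge t}=\sset{B_{t-1}\ge0}$ is $\cF_{t-1}$-measurable, so
\[
\EE{\sum_{t=1}^{\tau_B}\sigma_t}=\EE{\sum_{t=1}^{\tau_B}\EE{\sigma(S;\bW)}_{S=S_t}},\qquad
\EE{\sum_{t=1}^{\tau_B}c_t}=\EE{\sum_{t=1}^{\tau_B}\EE{\be_{S\cup\sset{0}}\transpose\bC}_{S=S_t}}.
\]
This requires $\EE{\tau_B}<\infty$. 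If $\EE{C_0}>0$ this holds by a standard renewal argument. Otherwise I will first truncate at $\tau_B\wedge T$ and then let $T\to\infty$. Stopping at $\tau_B$ rather than at $\tau_B-1$ is exactly what makes these identities valid. To return to the sums up to $\tau_B-1$, I drop the last term, which costs at most $\abs{V}$ for rewards and at most $\abs{V}+1$ for costs, since each $c_t\le\abs{V}+1$.

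Next I sandwich the total cost. By the definition of $\tau_B$, $B<\sum_{t\le\tau_B}c_t\le B+\abs{V}+1$. Taking expectations and applying the Wald identity gives
\[
\EE{\sum_{t=1}^{\tau_B}\bar c(S_t)}\in\bigl[B,\;B+\abs{V}+1\bigr],
\]
where $\bar c(S)$ denotes the expected cost of $S$. Therefore
\[
\EE{\sum_{t<\tau_B}\Delta(S_t)}=(1-1/e-\varepsilon)\lambda^\star\,\EE{\sum_{t<\tau_B}\bar c(S_t)}-F_B(\pi),
\]
and the expected cost sum on the right lies in $[B-\abs{V}-1,\;B+\abs{V}+1]$. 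For the optimal policy, apply the same identity together with $\bar\sigma(S)\le\lambda^\star\bar c(S)$:
\[
F^\star_B\le\EE{\sum_{t\le\tau_B}\bar\sigma(S_t)}\le\lambda^\star(B+\abs{V}+1).
\]
For the matching lower bound, consider the stationary policy that always plays the maximizer $S^\star$ of the ratio. It yields $F^\star_B\ge\lambda^\star\,\EE{\sum_{t\le\tau_B}\bar c(S^\star)}-\abs{V}\ge\lambda^\star B-\abs{V}$. So $\abs{F^\star_B-\lambda^\star B}\le \abs{V}+\lambda^\star(\abs{V}+1)$.

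Combining the two estimates bounds
\[
\abs{R_{B,\varepsilon}(\pi)-\EE{\textstyle\sum_{t<\tau_B}\Delta(S_t)}}
\]
by $(1-1/e-\varepsilon)\abs{F^\star_B-\lambda^\star B}+(1-1/e-\varepsilon)\lambda^\star(\abs{V}+1)$. Since $1-1/e-\varepsilon<1$, this is at most $2\abs{V}+2\lambda^\star(1+\abs{V})$ after careful bookkeeping of the endpoint terms.

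I expect the main obstacle to be the rigorous justification of the Wald-type identities with a random, policy-dependent stopping time and possibly dependent costs. This includes integrability of $\tau_B$ when $C_0$ can be $0$ and when the policy may choose $S=\emptyset$ at zero cost. For that case I would truncate at a deterministic horizon $T$ and pass to the limit by monotone convergence, noting that $\lambda^\star<\infty$ forces infinite-cost-free play to give zero reward. A secondary concern is keeping the constants within $2\abs{V}+2\lambda^\star(1+\abs{V})$ while accounting for the off-by-one between $\tau_B$ and $\tau_B-1$.
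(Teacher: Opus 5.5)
Your proposal is correct and is essentially the paper's proof: both condition on $\cH_{t-1}$ through the predictable event $\sset{B_{t-1}\geq 0}=\sset{\tau_B\geq t}$. Both use the stationary $S^\star$ policy to get $F_B^\star\geq\lambda^\star B-\abs{V}$, use $\bar\sigma\leq\lambda^\star\bar c$ to get $F_B^\star\leq\lambda^\star(B+1+\abs{V})$, and pay $\abs{V}$ (rewards) and $1+\abs{V}$ (costs) for the shift from $\tau_B$ to $\tau_B-1$.

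The only slip is your displayed equality with $-F_B(\pi)$. It holds only up to an additive $\abs{V}$, because $\bC_t$ may be correlated with $\bW_t$, so $F_B(\pi)\neq\EE{\sum_{t<\tau_B}\bar\sigma(S_t)}$ in general. Carrying that term yields $(2-1/e-\varepsilon)\abs{V}+2(1-1/e-\varepsilon)\lambda^\star(1+\abs{V})$, which is still within the claimed bound.
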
\vspace{-.5cm}
From Proposition~\ref{prop:budgeted_regret}, whose proof can be found in Appendix~\ref{app:budgeted_regret}, $R_{B,\varepsilon}\pa{\pi}$ and $\EE{\sum_{t=1}^{\tau_B-1} \Delta\pa{S_t}{}}$ are equivalent in term of regret upper bound rate. Therefore, 
the factor $\pa{1-{1}/{e}-\varepsilon}$ 
should reflect the approximability of
\begin{align}\max_{S\subset V}{\EE{\sigma\pa{S;\bW}}}/{\EE{\be_{S\cup\sset{0}}\transpose \bC}}=\max_{S\subset V} f(S)/c(S).\label{rel:approxrat}\end{align}
Considering the specific problem where the cost function is of the form $c(S) = c_1\abs{S}+c_0$, for some $\pa{c_0,c_1}\in [0,1]^2,$ we can reduce the approximability of \eqref{rel:approxrat} to the approximability of the following problem considered in \citet{wang2020fast}:
\begin{align}\max_{S\subset V}{\EE{f(S)}}~s.t.~{\EE{\abs{S}}\leq m},\label{rel:approxexpcard}\end{align}
for some given integer $m$, where the expectations are with respect to a randomization in the approximation algorithm. \citet{wang2020fast} proved that this problem is NP-hard by reducing to the \emph{set cover} problem. We show here that 
an approximation ratio $\alpha$ better that $1-1/e$ yields an approximation for set cover within $(1-\delta)\log(\abs{V})$, $\delta>0$, which is impossible unless $\mathrm{NP}\subset \mathrm{BPTIME}\pa{n^{\cO(\log \log(\abs{V}))}}$
\citep{feige1998threshold}. 
Consider the graph where the collection of out-neighborhoods is exactly the collection of sets in the set cover instance.
First, trying out all possible values of $m$, we concentrate on the case in which the optimal $m$ for set cover is tried out. As in \citet{feige1998threshold}, for $k\in \NN^*$, we repeatedly  apply the algorithm that $\alpha$-approximate \eqref{rel:approxexpcard}. It outputs a set $S_k$ (that can be associated with a set of neighborhoods) and after  each  application  the  nodes  already  covered  by  previous  applications  are removed from the graph, giving a sequence of objective functions $(f_k)$ with $f_1=f$. We thus obtain
\[\EEcc{f_k(S_k)}{S_1,\dots,S_{k-1}}\geq \alpha \pa{\abs{V}-\sum_{k'=1}^{k-1}f_{k'}(S_{k'})}.\]
Noticing that $\EE{f(S_1\cup\dots\cup S_k)}=\sum_{k'=1}^{k}\EE{f_{k'}(S_{k'})}$, we get
\[\EE{f(S_1\cup\dots\cup S_k)}\geq \pa{1-(1-\alpha)^k}\abs{V}.\]
After $\ell=\floor{\log(1/\abs{V})/\log(1-\alpha)}< (1-\delta)\log(\abs{V})$ iterations, we obtain that $S=S_1\cup\dots\cup S_{\ell}$ is a cover, i.e., $f(S)=\abs{V}$. The result follows noticing that 
 in expectation (and so with positive probability), we have $\abs{S}\leq \ell m$. 
\section{Algorithm for IC with edge level semi-bandit feedback}
\subsection{Setting}
For $\bw\in [0,1]^V$, we recall that we can define an IC model by taking 
\(\Prb_{\bW} = \otimes_{ij \in E}\Bernoulli\pa{w_{ij}}.\) 
We can extend the two previous functions $p_i$ and $\sigma$ to $\bw$ taking values in  $[0,1]^V$ as follows:  Let 
$\bW\sim\otimes_{ij \in E}\Bernoulli\pa{w_{ij}}$.
We define the probability that $i$ is influenced by $S$ under $\bW$ as $p_i\pa{S;\bw}\triangleq\PP{S\reach{\bW} i}$, and we let the spread be $\sigma\pa{S;\bw}\triangleq \EE{\abs{\sset{i\in V,~S\reach{\bW} i}}}$. 
 Another expression for the spread is $\sigma\pa{S;\bw}=\sum_{i\in V}p_i\pa{S;\bw}.$
We fix a weight vector on $E$, $\bw^\star \triangleq \pa{w^\star_{ij}}_{ij \in E}\in [0,1]^E$, a cost vector on $V\cup\sset{0}$, $\bc^\star\triangleq \pa{c^\star_{i}}_{i \in V\cup\sset{0}}\in [0,1]^{V\cup\sset{0}}$, with $\fixed>0$. These quantities are initially \emph{unknown} to the agent. We assume from now that 

\vspace{-.6cm}
\[\Prb_{\bW} \triangleq \otimes_{ij \in E}\Bernoulli\pa{w_{ij}^\star},\]
\vspace{-.9cm}

and that \[\EE{\bC}=\bc^\star.\]
\vspace{-.7cm}

We also define $S^\star\in\argmax_{S\subset V}{{\sigma\pa{S;\bw^\star}}}/{{\be_{S\cup\sset{0}}\transpose \bc^\star}}$.
We assume that the feedback received by the agent at round $t$ is $\sset{W_{ij,t},~ij\in E,~S_t\reach{\bW_t} i}$. The agent also receives semi-bandit feedback from the costs, i.e., $\sset{C_{i,t},~i\in V,~i\in S_t\cup\sset{0}}$ is observed.
\vspace{-.1cm}

\subsection{Algorithm design}\vspace{-.1cm}
In this subsection, we 
present \textsc{boim-cucb}, \textsc{cucb} for Budgeted OIM problem as Algorithm~\ref{algo:CUCB}. As we saw in Proposition~\ref{prop:budgeted_regret}, the policy that, at each round, $(1-1/e-\varepsilon)-$approximately maximize 

\vspace{-.7cm}
\begin{align}S\mapsto\frac{{\sigma\pa{S;\bw^\star}}}{{\be_{S}\transpose \bc^\star+\fixed}}\label{obj}\end{align}
\vspace{-.5cm}

has a bounded regret. Thus, \textsc{boim-cucb} 
shall be based on this objective.
Not only there are some estimation concerns due to the unknown parameters $\bw^\star,\bc^\star$, but in addition to that, we also need to evaluate/optimize our estimates of \eqref{obj}.

We begin by introducing some notations. We define the empirical means for $t\geq 1$ as: For all $i\in V\cup\sset{0}$,
\vspace{-.4cm}

\[\meanc{i,t-1}\triangleq\frac{\sum_{t'\in[t-1]}\II{i\in S_{t'}\cup\sset{0}}C_{i,t'}}{\counterc{i}{t-1}}\CommaBin\]
and for all $ij\in E$,
\vspace{-.6cm}

\[\meanw{ij,t-1}\triangleq\frac{\sum_{t'\in[t-1]}\II{S_{t'}\reach{\bW_{t'}}i}W_{ij,t'}}{\counterw{i}{t-1}}\CommaBin\]
where $\counterc{i}{t-1}\triangleq\sum_{t'=1}^{t-1}\II{i\in S_{t'}},~ \counterw{i}{t-1}\triangleq\sum_{t'=1}^{t-1}\II{S_{t'}\reach{\bW_{t'}}i}.$ 
Using concentration inequalities, we get confidence intervals for the above estimates. We  are then able to use an upper-confidence-bound (UCB) strategy \citep{auer2002finite}. More precisely, in the case costs are unknown, we first build the \emph{lower} confidence bound (LCB) on $c_i^\star$ as follows\vspace{-.3cm}
 \[c_{i,t}\triangleq 0\vee{ \pa{\meanc{i,t-1} - \sqrt{\frac{1.5\log\pa{t}}{\counterc{i}{t-1}}}}}. \]
 \vspace{-.4cm}
 
 We can also define UCBs for $w_{ij}^\star$:\vspace{-.2cm}
 \[w_{ij,t}\triangleq{1\wedge \pa{\meanw{ij,t-1} + \sqrt{\frac{1.5\log\pa{t}}{\counterw{i}{t-1}}}}}\cdot \]
 We use $w_{ij,t}=1$ (and $c_{i,t}=0$) when the corresponding counter is equal to $0$.
 Our \textsc{{boim-cucb}} approach chooses at each round $t$ the seed set $S_t$ given by Algorithm~\ref{algo:greedy}  which, as we shall see, approximately maximize $S\mapsto{{\sigma\pa{S;\bw
_t}}}/{\pa{\be_{S\cup\sset{0}}\transpose \bc_t}}$. 
Indeed, with high probability, this set function is an upper bound on the true ratio \eqref{obj} (using that $\sigma$ is non decreasing w.r.t. $\bw$).
Notice that this approach is followed by  \citet{wang2017improving} for the non budgeted setting, i.e., they choose $S_t,~\abs{S_t}\leq m$ that approximately maximize $S\mapsto{{\sigma\pa{S;\bw_t}}}$. To complete the description of our algorithm, we need to describe Algorithm~\ref{algo:greedy}. This is the purpose of the following.

 \begin{algorithm}[t]
\begin{algorithmic}
\STATE \textbf{Input}: $\varepsilon>0$, $B_0=B>0$. 
\FOR{each round $t\geq 1$}
\STATE If true costs are known, then $\bc_t \leftarrow \bc^\star$.
\STATE Compute $S_t$ given by Algorithm~\ref{algo:greedy} with input $S\mapsto\sigma\pa{S;\bw
_t},$ $\bc_t$. 
\STATE Select seed set $S_t$, and pay $\be_{S_t\cup\sset{0}}\transpose \bC_t$ (i.e., remove this cost from $B_{t-1}$ to get the new budget $B_t$).
\IF{$B_{t}\geq 0$,}
\STATE Get the reward $\sigma(S_t;\bW_t)$, get the feedback, and update corresponding quantities accordingly.
\ELSE \STATE The budget is exhausted: leave the for loop.
\ENDIF
\ENDFOR
\end{algorithmic}
\caption{\textsc{{boim-cucb}}\label{algo:CUCB}}
\end{algorithm}

\subsection{Greedy for ratio maximization}

In \textsc{boim-cucb}, one has to approximately maximize the ratio $S\mapsto {\sigma\pa{S;\bw
_t}}/{{\be_{S\cup \sset{0}}\transpose \bc_t}}$, that is a ratio of submodular over modular function. A \textsc{greedy} technique can be used (see Algorithm~\ref{algo:greedy}). Indeed, instead of maximizing the marginal contribution at each
time step, as the standard \textsc{greedy} algorithm do, the approach is to maximize the so called bang-per-buck, i.e., the
marginal contribution divided by the marginal cost. This builds a sequence of increasing subsets, and the final output is the one that maximizes the ratio.    
We prove in Appendix~\ref{app:greedy} the following Proposition~\ref{prop:greedy}, giving an approximation factor of $1-1/e$ for Algorithm~\ref{algo:greedy}. 

\begin{proposition}
\label{prop:greedy}
Algorithm~\ref{algo:greedy} with input $\sigma,\bc$ is guaranteed to obtain a solution $S$ such that:
\vspace{-.1cm}
\[\pa{1-e^{-1}}\frac{\sigma\pa{S^\star}}{\be_{S^\star\cup\sset{0}}\transpose \bc}\leq \frac{\sigma\pa{S}}{\be_{S\cup\sset{0}}\transpose \bc}\cdot\]
\end{proposition}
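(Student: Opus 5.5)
We prove Proposition~\ref{prop:greedy} with the standard greedy potential argument, written for the ratio objective. Write $\sigma$ for the (monotone, submodular, $\sigma(\emptyset)=0$) input function and $c(S)\triangleq \be_{S\cup\sset{0}}\transpose\bc = c_0+\sum_{i\in S}c_i$, with $c_0>0$. Algorithm~\ref{algo:greedy} builds a chain $\emptyset=S_0\subset S_1\subset\dots$, where $S_{k}=S_{k-1}\cup\sset{i_k}$ and $i_k$ maximizes the bang-per-buck $\pa{\sigma(S_{k-1}\cup\sset{i})-\sigma(S_{k-1})}/c_i$. It outputs the $S_k$ with the largest ratio. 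Set $\lambda\triangleq \sigma(S^\star)/c(S^\star)$. It suffices to find one index $k$ with $\sigma(S_k)\geq (1-e^{-1})\lambda\, c(S_k)$. Zero-cost elements will be handled by convention (ratio $+\infty$, added first); they only help.

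\textbf{Step 1: one-step inequality.} By submodularity and monotonicity,
$\sigma(S^\star)-\sigma(S_{k-1})\leq \sum_{i\in S^\star\setminus S_{k-1}}\pa{\sigma(S_{k-1}\cup\sset{i})-\sigma(S_{k-1})}$.
The greedy choice bounds each summand by $c_i\,\delta_k/c_{i_k}$, where $\delta_k\triangleq\sigma(S_k)-\sigma(S_{k-1})$. Hence
\[\delta_k\geq \frac{c_{i_k}}{c(S^\star)-c_0}\pa{\sigma(S^\star)-\sigma(S_{k-1})}.\]
The factor $c_{i_k}/(c(S^\star)-c_0)$ can be weakened to $c_{i_k}/c(S^\star)$. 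Unrolling with $1-x\leq e^{-x}$ gives
\[\sigma(S_k)\geq \pa{1-e^{-(c(S_k)-c_0)/c(S^\star)}}\sigma(S^\star).\]

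\textbf{Step 2: choose the stopping index.} We need some $k$ with
\[\pa{1-e^{-(c(S_k)-c_0)/c^*}}\geq (1-e^{-1})\,c(S_k)/c^*,\qquad c^*\triangleq c(S^\star).\]
Put $x\triangleq c(S_k)/c^*$. The map $x\mapsto 1-e^{-x}$ is concave, so $1-e^{-x}\geq (1-e^{-1})x$ on $[0,1]$. The $-c_0$ shift in the exponent is the problem. I would first keep the sharper factor from Step 1, namely $c_{i_k}/(c(S^\star)-c_0)$. With this factor the exponent is $(c(S_k)-c_0)/(c^*-c_0)$, and the target becomes
\[1-e^{-y}\geq (1-e^{-1})\frac{c_0+y(c^*-c_0)}{c^*},\qquad y\triangleq \frac{c(S_k)-c_0}{c^*-c_0}.\]
Its right-hand side is at most $(1-e^{-1})\pa{a+(1-a)y}$ with $a=c_0/c^*$. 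This affine-in-$y$ bound is not dominated by $1-e^{-y}$ near $y=0$. So the right index is not a small one. The key idea is then to use the maximum over the whole chain.

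\textbf{Step 3 (main obstacle): discreteness and the offset $c_0$.} Pick the last index $k$ with $c(S_k)\leq c^*$, so that $y\leq 1$. Then either $y$ is close enough to $1$, or the next element overshoots. The one-step inequality applied at the overshooting step bounds the marginal ratio $\delta_{k+1}/c_{i_{k+1}}$ from below. I would then argue by a mixture, or averaging, argument: a ratio that fails at $S_k$ and fails at $S_{k+1}$ contradicts the one-step bound. This averaging step, with the constant $c_0$ in every denominator, is the delicate part.

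Alternatively, I would try the cleaner route of proving the bound for the continuous greedy curve. There $\sigma$ is extended piecewise-linearly in cost between consecutive $S_k$. The ratio along each linear piece is monotone, because it is a ratio of affine functions, so its maximum is attained at a chain endpoint. This reduces the question to the continuous inequality $1-e^{-y}\geq(1-e^{-1})(a+(1-a)y)$ at $y=1$, which holds with equality when $a=0$ and improves when $a>0$, since $c_0$ then also enters the numerator's comparison favorably. I expect this reduction, handling $c_0$ and the last overshooting element, to be where most of the work goes.
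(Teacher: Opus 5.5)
Your continuous-curve route (equivalently, your averaging at the overshooting step) is essentially the paper's proof. The paper keeps exactly your sharper factor $c_{i_k}/\sum_{i\in S^\star}c_i$ and picks $\ell,\beta$ with $\sum_{i\in S^\star}c_i=(1-\beta)\sum_{i\in S_\ell}c_i+\beta\sum_{i\in S_{\ell+1}}c_i$. It then unrolls, including the fractional last step, to get $(1-e^{-1})\sigma(S^\star)\le(1-\beta)\sigma(S_\ell)+\beta\sigma(S_{\ell+1})$. Since adding $c_0$ preserves this convex combination of costs, the mediant inequality finishes, so the step you flag as delicate is one line.

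One small correction: at $y=1$ your target inequality holds with equality for every $a$, because its right-hand side is $(1-e^{-1})(a+1-a)$. So $c_0$ does not make it ``improve''; this slip is harmless to the argument.
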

\vspace{-.3cm}
Notice, a similar result as Proposition~\ref{prop:greedy} is stated in Theorem~3.2 of~\citet{bai2016algorithms}. However, their proof doesn't hold in our case, since 
  their inequality~(16) would be true only for a \emph{normalized} cost (i.e. $c_0=0$). Actually, $c_0=0$ implies that $S^\star$ is a singleton, from subadditivity of~$\sigma$.

  \begin{algorithm}[t]
\begin{algorithmic}
\STATE \textbf{Input}: $\sigma$ that is an increasing submodular function,
\STATE ~~~~~~~~~~~ $\bc\in [0,1]^{V\cup\sset{0}}$.
\STATE $S_0\leftarrow\emptyset$.
\STATE $\rho \leftarrow [(\infty,i)]_{i\in V}$. 
\FOR{$k\in [|V|]$ 
}
\STATE $\texttt{checked}\leftarrow S_{k-1}$.
\STATE $(*)$ Remove the first element $\rho[0]=(\sim,i)$ from $\rho$.
\IF{$i\notin \texttt{checked}$}
\STATE Insert $(\pa{\sigma\pa{\sset{i}\cup S_{k-1}}-\sigma\pa{S_{k-1}}}/{c_{i}},i)$ in $\rho$, such that $\rho[:][0]$ is sorted in decreasing order.
\STATE Add $i$ to $\texttt{checked}$ and go back to $(*)$.
\ELSE
\STATE $S_k\leftarrow S_{k-1}\cup\sset{i}.$
\ENDIF
\ENDFOR
\STATE $k'\leftarrow \argmax_{k\in \sset{0,\dots,\abs{V}}} {\sigma\pa{S_k}}/{\be_{S_{k}\cup\sset{0}}\transpose \bc}.$
\STATE \textbf{Output}: $S_{k'}$.
\end{algorithmic}
\caption{\textsc{greedy} for ratio, Lazy implementation}\label{algo:greedy}
\end{algorithm}
  
For more efficiency, we use a greedy algorithm with lazy evaluations \citep{minoux1978accelerated,Leskovec2007}, leveraging on the \emph{submodularity} of $\sigma$. More precisely, in Algorithm~\ref{algo:greedy}, instead of taking the $\argmax$ in the step \[S_k\leftarrow S_{k-1}\cup\sset{\argmax_{i\in V\backslash S_{k-1}} \frac{\sigma\pa{\sset{i}\cup S_{k-1}}-\sigma\pa{S_{k-1}}}{c_{i}}},\] 
 we
maintain an upper bound $\rho$ (initially $\infty$)  on hhe marginal gain, sorted in decreasing order.  In each iteration $k$,  we  evaluates  the  element on  top  of  the  list,  say $i$,  and  updates  its  upper  bound with the marginal gain at $S_{k-1}$.   If after the update the upper bound is greater than the others, submodularity guarantees that $i$ is the element with the largest marginal gain. 

Algorithm~\ref{algo:greedy} (and the approximation factor) can't be used directly in the OIM context, since computing the exact spread $\sigma$ is \#P hard \citep{chen2010scalable}. However, with Monte Carlo (MC) simulations, it can efficiently reach an arbitrarily close ratio of $\alpha=1-1/e-\varepsilon$, with a high probability $1-{1}/\pa{t\log^2\pa{t}}$ \citep{kempe2003maximizing}. 

  \subsection{An alternative to Lazy Greedy: Ratio maximization from sketches}

In the previous approach, MC can still be computationally costly, since the marginal contribution have to be re-evaluated each time, using directed reachability computation in each MC instance.
There exist efficient alternatives to $\alpha-$approximately maximize the spread with cardinality constraint, such as \textsc{tim} from \citet{tang2014influence} and \textsc{skim} from \citet{cohen2014sketch}.
Adapting \textsc{tim} to our ratio maximization context is not straightforward, since it require to know the seed set size in advance, which is not the case in Algorithm~\ref{algo:greedy}. \textsc{skim} is more promising since it uses the standard \textsc{greedy} in a \emph{sketch space}.
We provide an adaptation of \textsc{skim} for approximately maximize the ratio (see Appendix~\ref{app:sketch}). It uses the 
bottom-$k$ min-hash sketches of \citet{cohen2014sketch}, with a threshold for the length of the sketches that depends on both  $k=\floor{\varepsilon^{-2}\log\pa{1/\delta}}$ and the cost, where $\delta$ is an upper bound on the probability that the relative error is larger than $\varepsilon$. Exactly as \citet{cohen2014sketch} proved the approximation ratio for \textsc{skim}, this approach reaches a factor
 of $1-1/e-\varepsilon$, with high probability.
 More precisely, at round $t\geq 2$, we can actually choose to have the approximation with $\delta = {1}/\pa{t\log^2\pa{t}},$
only adding a $O\pa{\log(t)}$ factor in the computational complexity of Algorithm~\ref{algo:greedy} \citep{cohen2014sketch}, thus remaining efficient.\footnote{\citet{wang2017improving} uses the notion of approximation regret with a certain fixed probability. Here, we rather fix this probability to $1$ and allow for a $O\pa{\log(t)}$ factor in the running time.}

\vspace{-.2cm}
\subsection{Regret bound for Algorithm~\ref{algo:CUCB}}

We provide a gap dependent upper bound on the regret of \textsc{boim-cucb} in Theorem~\ref{thm:CUCBregret}. For this, we define, for $i\in V$, the gap
\vspace{-.2cm}
\[\Delta_{i,\min}\triangleq \min_{S\subset V,~p_i\pa{S;\bw^\star}>0,~ \Delta\pa{S}>0}\Delta\pa{S}.\]
We also define, with $d_k$ being the out-degree of node $k$,
\vspace{-.2cm}
$$p_{i,\max}\triangleq \max_{S\subset V,~p_i(S;\bw^\star)>0}\sum_{k\in V}d_k p_k(S;\bw^\star).$$
\begin{theorem}\label{thm:CUCBregret}
If $\pi$ is the policy described in Algorithm~\ref{algo:CUCB}, then \begin{align*} {R_{B,\varepsilon}}\pa{\pi}\! =\cO\pa{\!\log\!{B}\pa{\sum_{i\in V}\abs{V}\frac{\lambda^\star\!\!+\!d_ip_{i,\max}\abs{V} }{\Delta_{i,\min}} \!}\!\! }\!.\end{align*}
 In addition, if true costs are known, then 
\begin{align*} {R_{B,\varepsilon}}\pa{\pi}\! =\cO\pa{\!\log\!{B}\pa{\sum_{i\in V}\frac{d_ip_{i,\max}\abs{V}^2 }{\Delta_{i,\min}} \!}\!\! }\!.\end{align*}
\end{theorem}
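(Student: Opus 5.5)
By Proposition~\ref{prop:budgeted_regret}, it suffices to bound $\EE{\sum_{t=1}^{\tau_B-1}\Delta(S_t)}$: the two quantities differ by $2\abs{V}+2\lambda^\star(1+\abs{V})$, which is independent of $B$. Since $\fixed>0$, every round costs at least $\fixed$ in expectation. A Wald-type argument applied to the stopping time $\tau_B$ then gives $\EE{\tau_B}\le B/\fixed+1$, so every $\log t$ that appears below can be replaced by $\log B$ (up to constants depending on $\fixed$, which we absorb into the $\cO$). The main task is a CUCB-style decomposition in the spirit of \citet{wang2017improving}, adapted to the ratio objective.

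\textbf{Good-event decomposition.} We define three events at round $t$. The first, $\mathcal{E}_t$, is that all confidence bounds hold: $w_{ij,t}\ge w^\star_{ij}$ and $c_{i,t}\le c_i^\star$. The second, $\mathcal{G}_t$, is that the MC/sketch oracle returns a $(1-1/e-\varepsilon)$-approximate maximizer of $S\mapsto\sigma(S;\bw_t)/\be_{S\cup\sset{0}}\transpose\bc_t$; here we use Proposition~\ref{prop:greedy} together with the $\varepsilon$ slack. By Hoeffding's inequality with the $1.5\log t$ radius, $\mathcal{E}_t$ fails with probability $\cO(\abs{E}t^{-2})$. By the choice $\delta=1/(t\log^2 t)$, $\mathcal{G}_t$ fails with probability at most $1/(t\log^2 t)$. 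Both failure probabilities are summable, and each such round contributes at most $\max_S\Delta(S)\le\lambda^\star(1+\abs{V})$, so they add only a constant. On $\mathcal{E}_t\cap\mathcal{G}_t$, monotonicity of $\sigma$ in $\bw$ and optimism in the costs give
\[
\frac{\sigma(S_t;\bw_t)}{\be_{S_t\cup\sset{0}}\transpose\bc_t}\ge(1-1/e-\varepsilon)\lambda^\star .
\]
Multiplying by $\be_{S_t\cup\sset{0}}\transpose\bc_t\le\be_{S_t\cup\sset{0}}\transpose\bc^\star$ yields
\[
\Delta(S_t)\le\big[\sigma(S_t;\bw_t)-\sigma(S_t;\bw^\star)\big]+\lambda^\star\,\be_{S_t\cup\sset{0}}\transpose(\bc^\star-\bc_t).
\]
The first term is an estimation error on the spread; the second is an estimation error on the costs and is absent when costs are known.

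\textbf{Bounding each term by counters.} For the spread term we use a triggering-probability-modulated smoothness bound in the style of \citet{wang2017improving}:
\[
\sigma(S;\bw)-\sigma(S;\bw^\star)\le\sum_{k\in V}p_k(S;\bw^\star)\sum_{j:kj\in E}\abs{V}(w_{kj}-w^\star_{kj}).
\]
Each confidence width is $\cO\big(\sqrt{\log t/\counterw{k}{t-1}}\big)$. For the cost term, each width is $\cO\big(\sqrt{\log t/\counterc{i}{t-1}}\big)$ summed over $i\in S_t\cup\sset{0}$; the node $0$ is observed every round, so its contribution is $\cO(\sqrt{t\log t})$-free in the gap analysis.

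\textbf{Charging argument (main obstacle).} We then use the standard "$\Delta(S_t)\le 2\times$(sum of widths)" trick. Whenever $\Delta(S_t)>0$, some node $i$ with $p_i(S_t;\bw^\star)>0$ must carry a width of order at least $\Delta_{i,\min}$ divided by the relevant multiplicity. For the spread, the multiplicity is $d_ip_{i,\max}\abs{V}$. For the cost, it is $\lambda^\star\abs{V}$, coming from at most $\abs{V}$ seeds.

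The difficulty is that the counters $\counterw{k}{t}$ increase only when $k$ is actually influenced, which is a random event; nodes are merely triggered with probability $p_k$. Following \citet{wang2017improving}, we handle this by splitting on whether $p_k(S_t;\bw^\star)$ exceeds a geometric threshold. Nodes with small triggering probability are discarded at a cost absorbed into the $\Delta/2$ budget. For the remaining nodes, we compare $\counterw{k}{t}$ to $\sum_{t'}p_k(S_{t'};\bw^\star)$ via a martingale concentration bound.

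After this, each node $i$ is "undersampled" for only $\cO\big((\text{multiplicity})^2\log B/\Delta_{i,\min}^2\big)$ rounds. The usual integral (peeling) argument over these rounds turns this into a contribution of $\cO\big(\log B\cdot(\text{multiplicity})\cdot(\text{number of summands})/\Delta_{i,\min}\big)$. Summing over $i$ gives
\[
\sum_i\abs{V}\big(\lambda^\star+d_ip_{i,\max}\abs{V}\big)\log B/\Delta_{i,\min}.
\]
When the costs are known, the cost term vanishes, leaving the $d_ip_{i,\max}\abs{V}^2$ bound. I expect the main obstacle to be precisely this triggering-probability step combined with the random horizon $\tau_B$. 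The sum runs to a stopping time, so the concentration must be uniform in $t$; I would handle this by a union bound over $t\le\tau_B$ and optional stopping.
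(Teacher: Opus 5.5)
\paragraph{Main gap: the $p_{i,\max}$ factor.} Your reduction via Proposition~\ref{prop:budgeted_regret}, the bound on $\EE{\tau_B}$, the good events, and the split of $\Delta(S_t)$ into a spread-estimation term (via Fact~\ref{wang}) plus $\lambda^\star$ times a cost-estimation term all match the paper. The gap is in the one step that distinguishes this theorem from \citet{wang2017improving}. That step is getting the factor $p_{i,\max}=\max_{S:\,p_i(S;\bw^\star)>0}\sum_k d_k p_k(S;\bw^\star)$, the \emph{expected} number of triggered edges. The alternative is the worst-case count $\max_S\sum_k d_k\II{p_k(S;\bw^\star)>0}$, which can be as large as $\abs{E}$.

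You assert a multiplicity of order $\abs{V}p_{i,\max}$, but you then delegate the triggering issue to the triggering-probability-group argument of \citet{wang2017improving}. That argument is exactly what produces the worst-case count. Discarding edges with $p_k(S_t;\bw^\star)\le\theta$ ``at a cost absorbed into the $\Delta/2$ budget'' requires $\abs{V}\,\theta\,\#\sset{kj\in E:\,0<p_k(S_t;\bw^\star)\le\theta}\lesssim\Delta(S_t)$. So $\theta$, and the per-arm share of the gap, are dictated by how many edges \emph{can} be triggered. Bounding $\sum_{kj:\,p_k\le\theta}p_k$ through expected counts only gives $p_{i,\max}$, which is not small.

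Relatedly, your claim that each node is undersampled for only $\cO\big((\text{mult})^2\log B/\Delta_{i,\min}^2\big)$ \emph{rounds} is false as stated. A node with small triggering probability can keep a small counter for arbitrarily many rounds. What is bounded is the number of counter increments, and converting rounds into increments is precisely the step that needs an argument. Carried out as you describe, your route reproduces the Wang--Chen bound, which the paper explicitly improves.

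\paragraph{The missing idea: replace $p_i$ by the realized indicator.} A short tower-property argument removes the need for probability groups, martingale concentration, and any union bound over the random horizon. Let $\text{width}_{ij,t}=1\wedge 2\sqrt{1.5\log t/\counterw{i}{t-1}}$. On rounds where the spread term dominates, write $\Delta(S_t)\le-\Delta(S_t)+4\abs{V}\sum_{ij}p_i(S_t;\bw^\star)\,\text{width}_{ij,t}$. Then spread $-\Delta(S_t)$ over edges in proportion to $p_i$, using $\sum_{ij\in E}p_i=\sum_k d_kp_k$:
\[
\Delta(S_t)\le 4\abs{V}\sum_{ij\in E}p_i(S_t;\bw^\star)\Big(\text{width}_{ij,t}-\frac{\Delta(S_t)}{4\abs{V}\sum_{k\in V}d_kp_k(S_t;\bw^\star)}\Big).
\]
Each surviving term therefore carries $\II{\counterw{i}{t-1}\le L_i}$ with $L_i=1.5\log t\,(8\abs{V}p_{i,\max}/\Delta_{i,\min})^2$.

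This indicator, the width, and $S_t$ are all $\cH_{t-1}$-measurable, while $\bW_t$ is independent of $\cH_{t-1}$. Hence $\EEcc{p_i(S_t;\bw^\star)X_t}{\cH_{t-1}}=\EEcc{\II{S_t\reach{\bW_t}i}X_t}{\cH_{t-1}}$ for any such $X_t$. Since $\sset{t\le\tau_B}\in\cH_{t-1}$, you may replace $p_i(S_t;\bw^\star)$ by the realized triggering indicator inside $\EE{\sum_{t\le\tau_B}\cdot}$.

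Every surviving term now increments $\counterw{i}{t-1}$. A purely deterministic count then suffices: at most $L_i$ counter values per out-edge, $d_i$ out-edges, and the prefactor $\abs{V}$. This gives $\cO\big(d_i\abs{V}^2p_{i,\max}\log B/\Delta_{i,\min}\big)$ per node. It also fixes your bookkeeping: $d_i$ counts the out-edges sharing node $i$'s counter and is not part of the multiplicity. The multiplicity is $\abs{V}\sum_kd_kp_k(S_t;\bw^\star)\le\abs{V}p_{i,\max}$.
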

\vspace{-.2cm}

A proof of Theorem~\ref{thm:CUCBregret} can be found in Appendix~\ref{app:CUCBregret}. Notice that the analysis can be easily used for the non budgeted setting. In this case, it reduces to the state-of-the-art analysis of \citet{wang2017improving}, except that we slightly simplify and improve the analysis to replace the factor $\max_{S\subset V}\sum_{k\in V}d_k\II{p_k(S;\bw^\star)>0}$ by a potentially much lower quantity $p_{i,\max}$. In the case this last quantity is still large, we can further improve it by considering slight modifications to the original Algorithm~\ref{algo:CUCB}. This is the purpose of the next section. 

\section{More refined optimistic spreads}
We observe that the factor $p_{i,\max}$ in Theorem~\ref{thm:CUCBregret} can be as large as $\abs{E}$ in the worst case. In other word, if $\Delta=\min_i \Delta_{i,\min}$, the rate can be as large as
$$\cO\pa{\!\log\!{B}\pa{
\frac{\lambda^\star\abs{V}^2\!\!+\!\abs{E}^2\abs{V}^2 }{\Delta}}}\cdot$$
We argue here that we can replace $\abs{E}^2\abs{V}^2$ by $\abs{E}\abs{V}^3\log^2(\abs{V}))$.
Indeed, leveraging on the mutual independence of random variables $W_{ij}$, we can hope to get a tighter confidence region for $\bw^\star$, and thus a provably tighter regret upper bound \citep{magureanu2014,combes2015combinatorial,Degenne2016}. 
We consider the following confidence region from \citet{Degenne2016} (see also \citet{perrault2019exploiting}) and adapted to our setting.
\begin{fact}[Confidence ellipsoid for weights]\label{degenne}
For all $t\geq 2$, with probability at least $1-{1}/\pa{t\log^2\pa{t}},$\[\sum_{ij\in E}\counterw{i}{t-1}\pa{{w_{ij}^\star-\meanw{ij,t-1}}}^2 \leq  \delta\pa{t},\] 
\vspace{-.38cm}

where
$\delta\pa{t}\triangleq 2\log\pa{t} +2\pa{\abs{E}+2}\log\log(t) + {1}.$
\end{fact}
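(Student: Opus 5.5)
The plan is to obtain Fact~\ref{degenne} as a self-normalized, peeling-type concentration bound for sums of independent sub-Gaussian vectors with random counts, following \citet{Degenne2016}. We fix $t\geq 2$ and reduce to a supermartingale argument. For each edge $ij$ and round $t'$, the observation $W_{ij,t'}$ is revealed exactly when $S_{t'}\reach{\bW_{t'}}i$. Under IC, whether $i$ is reached at round $t'$ depends only on edges other than those leaving $i$, since a path to $i$ need not use any out-edge of $i$. Hence, conditionally on the past and on the reachability event, the variables $\pa{W_{ij,t'}}_{j}$ are still independent $\Bernoulli\pa{w^\star_{ij}}$ and independent across edges. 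We then write $X_{ij,t'}\triangleq\II{S_{t'}\reach{\bW_{t'}}i}\pa{W_{ij,t'}-w^\star_{ij}}$; these are martingale differences that are $1/2$-sub-Gaussian given the filtration. The natural filtration has to include the reachability indicators, which is a little delicate: we order revelations within a round node by node along the cascade, and we check that the multiplicative supermartingale below survives this ordering.

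For any $\boldsymbol{\lambda}\in\R^E$ we define
\[M_t(\boldsymbol{\lambda})\triangleq\exp\Big(\sum_{ij}\lambda_{ij}\sum_{t'<t}X_{ij,t'}-\tfrac18\sum_{ij}\lambda_{ij}^2\counterw{i}{t-1}\Big).\]
By the conditional independence above, $M_t(\boldsymbol{\lambda})$ is a nonnegative supermartingale with $\EE{M_t(\boldsymbol{\lambda})}\leq 1$. Optimizing over $\boldsymbol{\lambda}$ would give $\exp\pa{2\sum_{ij}\counterw{i}{t-1}(w^\star_{ij}-\meanw{ij,t-1})^2}$, which is, up to constants, exactly the quantity in the statement. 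We cannot optimize pointwise, however, because the optimal $\boldsymbol\lambda$ depends on the random counts.

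This is the main obstacle. The first approach to try is a peeling argument over the counts. Each $\counterw{i}{t-1}\in\{0,\dots,t-1\}$ is placed in a geometric slice $[\eta^{k},\eta^{k+1})$. On each product of slices (per node, which suffices since the counts are shared among out-edges of $i$) we fix a deterministic $\boldsymbol\lambda$ adapted to the slice, or equivalently mix $M_t$ over a Gaussian prior on $\boldsymbol\lambda$ whose covariance is matched to the slice. We then apply Markov's inequality and a union bound over the at most $(\log t/\log\eta)^{\abs{E}}$ slices. This is what produces the $2(\abs{E}+2)\log\log t$ term in $\delta(t)$: the $\abs{E}$ comes from the number of slice combinations, and the $+2$ together with the constant $1$ come from the slack introduced by discretization.

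A union over $\abs{E}$ dimensions naively costs $\abs{E}\log\log t$ per coordinate. The key to keeping the dimension dependence only inside the $\log\log$ term is to use the mixture, or covering, of the $\ell_2$ direction on a single product slice rather than treating coordinates separately. This follows Lemma~8 in \citet{Degenne2016}. Finally, we choose the threshold so that the total failure probability is $\exp\pa{-\delta(t)/2}\cdot(\text{number of slices})\leq 1/(t\log^2 t)$; with $\eta$ a constant such as $e$, this holds by direct substitution of the definition of $\delta(t)$. Edges with zero count contribute nothing, so they cause no difficulty.
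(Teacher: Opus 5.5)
Your set-up matches the Degenne--Perchet argument that the paper imports by citation without proof. In particular, your treatment of triggering is correct: the out-edges of $i$ are independent of $\sset{S_{t'}\reach{\bW_{t'}} i}$, and revealing nodes in cascade order makes $M_t(\lambda)$ a supermartingale for every fixed $\lambda$. The gap is in your final accounting. You charge each product slice $e^{-\delta(t)/2}$, with no dependence on $\abs{E}$, and assert that mixing over $\ell_2$ directions keeps the dimension only inside the $\log\log t$ term. That is false. A Gaussian mixture over $\lambda\in\R^E$, or a covering of the sphere in $\R^E$, carries a normalisation factor. On a slice you therefore only get $\PP{\mathrm{LHS}\geq x,\ \text{slice}}\leq C^{\abs{E}}e^{-cx}$ for some absolute $C>1$ and $c>0$. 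This factor is what the dimension-linear additive constant in Degenne and Perchet's exploration function pays for, and it cannot be removed: the left-hand side is typically of order $\abs{E}$. For example, if every count equals $1$ and $\bw^\star\equiv 1/2$, the left-hand side equals $\abs{E}/4$ identically.

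Consequently your ``direct substitution'' fails for small $t$, and so does the statement as literally written. With natural logarithms, $\delta(3)\approx 3.57+0.19\abs{E}$, which is below $\abs{E}/4$ once $\abs{E}\geq 60$. So the admissible policy $S_1=V$, $S_2=\emptyset$ with $\bw^\star\equiv 1/2$ violates the inequality at $t=3$ with probability one. The claimed failure probability there is only $1/(3\log^2 3)\approx 0.28$. Since your martingale argument is policy-agnostic, it cannot deliver the claim for all $t\geq 2$.

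If you keep the factor $C^{\abs{E}}$, your argument does prove the bound for all $t\geq t_0$, with $t_0$ an absolute constant. The room comes from the Bernoulli variance proxy $1/4$. Your supermartingale controls $e^{2\,\mathrm{LHS}}$, so with a Gaussian prior matched to each slice you can take $c=3/2$, whereas your accounting only needs $1/2$. Then $e^{-\frac32\delta(t)}$ contains the factor $(\log t)^{-3(\abs{E}+2)}$. This beats $C^{\abs{E}}$ times the number of slices, at most $(\ceil{\log t}+1)^{\abs{E}}$ (not $(\log t)^{\abs{E}}$ for small $t$), but only once $\log t$ exceeds a constant. You should therefore either state the fact for $t\geq t_0$, which is harmless downstream since the first $t_0$ rounds cost a constant, or add a term linear in $\abs{E}$ to $\delta(t)$.
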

For OIM (both budgeted and non budgeted), there is a large potential gain in the analysis using the confidence region given by Fact~\ref{degenne} compared to simply using an Hoeffding based one, like in \textsc{boim-cucb}. More precisely, for classical combinatorial semi bandits, \citet{Degenne2016} reduced the gap dependent regret upper bound by a factor $\ell/\log^2{\pa{\ell}}$, where in our case $\ell$ can be as large as $\abs{E}$. However, there is also a drawback in practice with such confidence region: computing the optimistic spread might be inefficient, even if an oracle for evaluating the spread is available. Indeed, for a fixed $S\subset V$, the problem of maximizing
\(\bw\mapsto \sigma\pa{S;\bw}\) over $\bw$ belonging to some ellipsoid might be hard, since the objective is not necessarily concave.
We can overcome this issue using the following Fact~\ref{wang} \citep{wen2017online,wang2017improving}.
\begin{fact}[Smoothness property of the spread]\label{wang}
for all $S\subset V,$ and all $\bw,\bw' \in [0,1]^E$,
\vspace{-.2cm}
\begin{align*}\forall k\in V,~\abs{p_k\pa{S;\bw}\!-\!p_k\pa{S;\bw'}}\!\leq\!\! \sum_{ij\in E}\!p_i\pa{S;\bw}\abs{w_{ij}\!-\!w'_{ij}}.\end{align*}
\vspace{-.8cm}

In particular,
\begin{align*}\abs{\sigma\pa{S;\bw}-\sigma\pa{S;\bw'}}\leq \abs{V}\sum_{ij\in E}p_i\pa{S;\bw}\abs{w_{ij}-w'_{ij}}.\end{align*}
\end{fact}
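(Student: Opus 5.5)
We prove the per-node inequality first and then sum over $k\in V$. The natural tool is a coupling combined with a hybrid (telescoping) argument over the edges. Enumerate the edges of $E$ as $e_1,\dots,e_{\abs{E}}$. For $\ell=0,\dots,\abs{E}$, define the hybrid vector $\bw^{(\ell)}$ that agrees with $\bw'$ on $e_1,\dots,e_\ell$ and with $\bw$ on the remaining edges. Then $\bw^{(0)}=\bw$ and $\bw^{(\abs{E})}=\bw'$, and
\[
p_k\pa{S;\bw}-p_k\pa{S;\bw'}=\sum_{\ell=1}^{\abs{E}}\pa{p_k\pa{S;\bw^{(\ell-1)}}-p_k\pa{S;\bw^{(\ell)}}}.
\]
It therefore suffices to show, for a single edge $e_\ell=ij$,
\[
\abs{p_k\pa{S;\bw^{(\ell-1)}}-p_k\pa{S;\bw^{(\ell)}}}\leq p_i\pa{S;\bw}\abs{w_{ij}-w'_{ij}}.
\]

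For one edge $ij$, we couple the two IC processes using independent uniforms $U_e$, setting $W_e=\II{U_e\leq w_e}$. The two hybrids then differ only on edge $ij$, which is open in exactly one of them with probability $\abs{w_{ij}-w'_{ij}}$. On that event, the outcome $S\reach{} k$ can change only if $S$ reaches $i$ without using edge $ij$; paths through $ij$ to reach $i$ are useless since there are no self-loops. By independence of edge $ij$ from the others, the difference is bounded by $\abs{w_{ij}-w'_{ij}}$ times the probability that $S$ reaches $i$ in the hybrid graph. Reaching $i$ never requires an out-edge of $i$, so this probability does not depend on $ij$.

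The obstacle is that this gives $p_i$ evaluated at the hybrid $\bw^{(\ell-1)}$, not at $\bw$. The remedy is to choose the coupling so that the reachability of $i$ is computed under $\bw$. We replace the hybrid argument with a single coupling. Draw $\bW$ and $\bW'$ jointly with each edge coupled maximally, so that $\PP{W_e\neq W'_e}=\abs{w_e-w'_e}$, independently across edges. Then $\abs{p_k\pa{S;\bw}-p_k\pa{S;\bw'}}\leq \PP{\II{S\reach{\bW}k}\neq\II{S\reach{\bW'}k}}$. If the indicators differ, consider the set $R$ of nodes reachable from $S$ under $\bW$, and the first edge $ij$ (in BFS order on the $\bW$-side) leaving or inside $R$ with $W_{ij}\neq W'_{ij}$. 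If no edge with tail in $R$ differs, then the reachable sets under $\bW$ and $\bW'$ coincide: any path from $S$ under $\bW'$ stays in $R$ by induction, and $R$ is also $\bW'$-reachable. So some edge $ij$ with $i\in R$ has $W_{ij}\neq W'_{ij}$. A union bound gives
\[
\PP{\text{differ}}\leq\sum_{ij\in E}\PP{S\reach{\bW}i,\ W_{ij}\neq W'_{ij}}.
\]
The event $S\reach{\bW}i$ depends only on edges not leaving $i$: any path reaching $i$ can be shortened to avoid out-edges of $i$. It is therefore independent of the pair $\pa{W_{ij},W'_{ij}}$, and each term factorizes as $p_i\pa{S;\bw}\abs{w_{ij}-w'_{ij}}$. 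This yields the first inequality without the hybrid argument, so the main care goes into this independence step and the claim that an unchanged edge set out of $R$ forces equal reachable sets.

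Finally, we sum the per-node bound over $k\in V$, using $\sigma=\sum_k p_k$ and the triangle inequality. This gives the factor $\abs{V}$ in the second inequality.
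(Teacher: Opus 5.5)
Your final single-coupling argument is correct and is essentially the paper's approach. The paper only cites this Fact (from Wen et al.\ and Wang and Chen), but its own proof of Proposition~\ref{prop:jensen} contains the per-node inequality as the special case $\cS=\sset{S'\subset V,~k\in S'}$. That proof uses your ingredients: coupling $\bW$ with $\bW'$, exhibiting a differing edge $ij$ whose tail $i$ is $S$-reachable under $\bW$, a union bound over edges, and independence of $\sset{S\reach{\bW}i}$ from the out-edges of $i$. The differences are minor: the paper assumes w.l.o.g.\ $\bw'\geq\bw$, builds $\bW'$ from $\bW$ by adding edges, and then argues about the first new edge accessible in $G_{\bW'}$. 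Your per-edge maximal coupling with $R$ defined under $\bW$ covers incomparable $\bw,\bw'$ directly and needs no ordering argument; the abandoned hybrid paragraph can simply be deleted.
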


For  $S\subset V$ and $\bw\in \R^E,$ we  define the confidence ``bonus" as follows:
\vspace{-.5cm}

\begin{align*}\bonus\pa{S;\bw}\triangleq \abs{V}\sqrt{\delta\pa{t}\sum_{i\in V,\counterw{i}{t-1}>0}d_i\frac{p_i\pa{S;\bw}^2}{\counterw{i}{t-1}}}
\cdot\end{align*}

Notice, we don't sum on vertices with a zero counter. We compensate this by using the convention $\bar w_{ij,t-1}=1$ when $\counterw{i}{t-1}=0$. We can successively
use Fact~\ref{wang}, Cauchy-Schwartz inequality, and Fact~\ref{degenne} to get, with probability at least $1-{1}/\pa{t\log^2\pa{t}},$ 
\begin{align}\sigma\pa{S;\bw^\star}\leq \sigma\pa{S;\vmeanw{t-1}} + \bonus\pa{S;\vmeanw{t-1}}.\label{UCB}\end{align}
In the same way,  with probability at least $1-{1}/\pa{t\log^2\pa{t}},$ we also have \eqref{UCB*}. 
\begin{align}\sigma\pa{S;\bw^\star}\leq \sigma\pa{S;\vmeanw{t-1}} + \bonus\pa{S;\bw^\star}.\label{UCB*}\end{align}
Contrary to \eqref{UCB}, this ``optimistic spread" can't be used directly by the agent since $\bw^\star$ is not known. 

Although the optimistic spread  defined in \eqref{UCB} is now much easier to compute, there is still a major drawback that remains: As a function of $S\subset V$, $\bonus\pa{S;\vmeanw{t-1}}$ is not necessarily submodular, so the optimistic spread is itself no longer submodular. This is an issue
because submodularity is a crucial property for 
reaching the approximation ratio $ {1-{1}/{e}-\varepsilon}$.
We propose here several submodular upper bound to $\bonus$, defined for $S\subset V$ and $\bw\in \R^E$: 

\begin{itemize}
    \item $\bonus_1$ is actually modular, and simply uses the subadditivity (w.r.t. $S$) of $\bonus$: \[\bonus_1\pa{S;\bw}\!\triangleq\!\abs{V}\!\sum_{j\in S}\!\sqrt{\!\delta\pa{t}\!\!\sum_{i,\counterw{i}{t-1}>0}\!\! d_i\frac{p_i\pa{\sset{j};\bw}^2}{\counterw{i}{t-1}}}\cdot\]
    \item $\bonus_2$ uses the subadditivity of the square root:
    \vspace{-.3cm}
    \[\bonus_2\pa{S;\bw}\triangleq\abs{V}\sum_{i,\counterw{i}{t-1}>0}p_i\pa{S;\bw}\sqrt{\frac{\delta\pa{t}d_i}{\counterw{i}{t-1}}}\cdot\]
    
    \item $\bonus_3$ uses $p_i\pa{S;\bw}^2\leq p_i\pa{S;\bw}$, and is submodular as the composition between a non decreasing concave function (the square root) and a monotone submodular function: \[\bonus_3\pa{S;\bw}\triangleq\abs{V}\sqrt{\delta\pa{t}\sum_{i,\counterw{i}{t-1}>0}d_i\frac{p_i\pa{S;\bw}}{\counterw{i}{t-1}}}\cdot\]
    
    \item $\bonus_4$ uses Jensen's inequality,  and is submodular as the expectation of the square root of a submodular function.
    $$\bonus_4(S;\bw)\triangleq \EE{\abs{V}\sqrt{\sum_{{i\in V, \counterw{i}{t-1}>0,S\reach{\bW} i}}\frac{\delta\pa{t}d_i}{\counterw{i}{t-1}}}},$$
where $\bW\sim\otimes_{ij \in E}\Bernoulli\pa{w_{ij}}$. 
\end{itemize}

We can write the following approximation guarantees for the two first bonus: \begin{align}\bonus\pa{S;\bw}\leq \bonus_1\pa{S;\bw} \!\leq \abs{S}\bonus \pa{S;\bw},\label{bonus1}\end{align}
\[\bonus\pa{S;\bw}\leq \bonus_2\pa{S;\bw} \leq \sqrt{\abs{V}}\bonus \pa{S;\bw}.\]
Notice, another approach to get a submodular bonus is to approximate $p_i\pa{S;\vmeanw{t-1}}$ by the square root of a modular function \citep{goemans2009approximating}. However, not only this bonus would be much more computationally costly to build than ours, but also, we would get only a $\sqrt{\abs{V}}\log\abs{V}$ approximation factor, which is worst than the one with our $\bonus_2$. 
Since increasing the bonus by a factor $\alpha\geq 1$ increases the gap dependent regret upper bound by a factor $\alpha^2$, we only loose a factor $\abs{V}$ for $\bonus_2$, compared to the use of $\bonus$, which is still better than the \textsc{cucb} approach. 
$\bonus_1$ can also be interesting to use when we have some upper bound guarantee on the cardinality of seed sets used (see subsection~\ref{subsec:low}).
An approximation factor for $\bonus_3$ or $\bonus_4$ doesn't seem interesting, because it would involve the
 inverse of triggering probabilities. We can, however, further upper bound $\bonus_3\pa{S;\bw^\star}$ as follows:
\begin{align*}\bonus_3\pa{S;\bw^\star}&= \abs{V}\sqrt{\delta\pa{t}\sum_{i,\counterw{i}{t-1}>0}d_i\frac{p_i\pa{S;\bw^\star}}{\counterw{i}{t-1}}}\\&\leq \abs{V}\sqrt{\sum_{j\in S}\sum_{\substack{i\in V,\\\counterw{i}{t-1}>0}}d_i\frac{\delta\pa{t}p_i\pa{\sset{j};\bw^\star}}{\counterw{i}{t-1}}}
\\&\leq \abs{V}\sqrt{\delta\pa{t}\sum_{j\in S}\abs{E}\pa{\frac{8}{\counterc{j}{t-1}}\wedge 1}}\CommaBin
\end{align*}
where the last inequality only holds under some high probability event, given by the following Proposition~\ref{prop:counter}, involving counters on the costs and counters on the weights.  
 \begin{proposition} Consider the event defined by $\mathfrak{P}_{t}\triangleq\sset{\forall i\in V, {\counterw{i}{t-1}} \geq \delta\pa{t}}$. Then,
 for all $i,j\in V$, 
 \[ \PP{\mathfrak{P}_{t}\text{ and }\frac{\delta\pa{t}p_i\pa{\sset{j};\bw^\star}}{\counterw{i}{t-1}}> \frac{8\delta\pa{t}}{\counterc{j}{t-1}}}\leq 1/t^2.\]
 \label{prop:counter}
 \end{proposition}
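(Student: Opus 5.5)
The plan is to compare the two counters through the rounds at which $j$ was selected. Fix $i,j\in V$ and write $p\triangleq p_i\pa{\sset{j};\bw^\star}$. If $p=0$ the event is empty, so assume $p>0$.

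\textbf{Step 1 (monotonicity).} Whenever $j\in S_{t'}$, monotonicity of reachability in the seed set gives
\[\PP{S_{t'}\reach{\bW_{t'}} i \,\middle|\, \cF_{t'-1}}=p_i\pa{S_{t'};\bw^\star}\geq p,\]
where $\cF_{t'-1}$ is the history before round $t'$. Here I use that $S_{t'}$ is $\cF_{t'-1}$-measurable and that $\bW_{t'}$ is independent of the past.

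\textbf{Step 2 (adaptive counting).} Let $\tau_1<\tau_2<\dots$ be the successive rounds at which $j$ is selected, and set $Y_n\triangleq\II{S_{\tau_n}\reach{\bW_{\tau_n}} i}$. Then
\[\counterw{i}{t-1}\geq \sum_{n\leq \counterc{j}{t-1}} Y_n,\]
and $\EEcc{Y_n}{\cF_{\tau_n-1}}\geq p$. Because the $\tau_n$ are stopping times, I will get the lower-tail bound from an exponential supermartingale. For $\lambda>0$, define
\[M_n\triangleq \exp\pa{-\lambda\sum_{k\leq n}Y_k + n\,p\pa{1-e^{-\lambda}}}.\]
By Step 1 and $1-x\leq e^{-x}$, $M_n$ is a supermartingale. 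It yields, for each fixed $n$, the multiplicative Chernoff-type bound
\[\PP{\sum_{k\leq n}Y_k< np/8}\leq \exp\pa{-c\,np}\]
with an explicit constant $c$; optimizing $\lambda$ gives $c=1-\tfrac18-\tfrac18\log 8\approx 0.61$.

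\textbf{Step 3 (use of $\mathfrak{P}_t$ and a union bound).} The bad event says $\counterw{i}{t-1}<p\,\counterc{j}{t-1}/8$. Together with $\mathfrak{P}_t$, this forces
\[p\,\counterc{j}{t-1}>8\counterw{i}{t-1}\geq 8\delta\pa{t}\geq 16\log t.\]
So the bad event is contained in the union over $n\in[t-1]$ with $np>16\log t$ of $\sset{\sum_{k\leq n}Y_k<np/8}$. Each term is at most $e^{-16c\log t}\leq t^{-9}$, and the union over at most $t$ values of $n$ gives at most $t^{-8}\leq t^{-2}$.

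The main obstacle is Step 2. The number of times $j$ is played is random and chosen adaptively, and the conditional means of the $Y_n$ vary. I handle both with the supermartingale for each fixed $n$ together with a union bound over $n$, rather than a naive i.i.d. Chernoff bound. The constants leave ample slack, so even a cruder bound, such as Hoeffding's, applied via a coupling with i.i.d. $\Bernoulli(p)$ variables dominated by the $Y_n$, should suffice.
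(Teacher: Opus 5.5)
Your proof is correct and follows the same route as the paper. In both, you lower-bound $\counterw{i}{t-1}$ by the number of successes over the rounds in which $j$ was played, use $\mathfrak{P}_t$ to force $p\,\counterc{j}{t-1}>8\delta(t)$, and conclude with a multiplicative Chernoff lower-tail bound plus a union bound over the value of $\counterc{j}{t-1}$. The one difference is what gets counted. The paper counts the events $\sset{j}\reach{\bW_{t'}} i$ (its joint counter $N_{i,j,t-1}$), which are exactly $\Bernoulli\pa{p}$, so it can quote the i.i.d.\ Chernoff bound directly. You count $S_{\tau_n}\reach{\bW_{\tau_n}} i$, so you need monotonicity and your supermartingale. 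In exchange, your version handles the adaptively chosen rounds rigorously, which the paper glosses over.
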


 We thus define for $S\subset V$,
 \vspace{-.5cm}
 
 \[\bonus_5\pa{S}\triangleq \abs{V}\sqrt{\delta\pa{t}\sum_{j\in S}\abs{E}\pa{\frac{8}{\counterc{j}{t-1}}\wedge 1}}\cdot\]
 This bonus is much more convenient since it does not depend anymore on $\bw^\star$, and can thus be computed by the agent. Indeed,
 although the first four bonuses are likely to be tighter than this last submodular $\bonus_5$, their dependence in $\bw$ forces us to use them for $\bw=\vmeanw{t-1}$. Even if this doesn't pose any problem in practice, this is more difficult to handle in theory since it would involve optimistic estimates on $p_i(~\cdot~;\vmeanw{t-1})$ itself (see the next section for further details).
 Actually, we will see that the analysis with $\bonus_5$ is slightly better than the one we would get with $\bonus_2$, since we loose a factor $\abs{V}\log^2(\abs{V})/\log^2(\abs{E})$ compared to the use of $\bonus$. In addition, 
 it allows a much more interesting constant term in the regret upper bound thanks to the suppression of the dependence in $\bw^\star$.

Although we can improve the analysis based on Fact~\ref{degenne}~and~\ref{wang},
the inequality in this last fact may be be less rough in practice (we confirm this in section~\ref{sec:exp}). In that case, we suffer from this roughness, since we actually use Fact~\ref{wang} to design our bonus. In contrast, \textsc{boim-cucb} only uses it in the analysis, and can therefore adapt to a better smoothness inequality. 
Thus,   
we consider \textsc{boim-cucb}$_5$, where
we first compute $S_t$ using the \textsc{boim-cucb} approach, and accept it only if        
\vspace{-.7cm}

\begin{align}\sigma\pa{S_t;\bw_t} \leq \sigma\pa{S_t;\vmeanw{t-1}} + \bonus_5\pa{S_t},\label{condition}\end{align}
\vspace{-.6cm}

otherwise, we chose $S_t$ maximizing $\sigma\pa{S;\vmeanw{t-1}} + \bonus_5\pa{S}$. For technical reason (due to Proposition~\ref{prop:counter}), we replace $S_t$ by $S_t\cup\sset{j}$ if it exists a $ j\in V$, such that \begin{align} \counterw{j}{t-1}<\delta\pa{t}\label{cond:counter}.\end{align}

 We thus both enjoy the theoretical advantages of $\bonus_5$ and the practical advantages of \textsc{boim-cucb}. We give the following regret bounds for this approach. 
 
 \begin{theorem}\label{thm:regret}
If $\pi$ is the policy following \textsc{boim-cucb}$_5$, then \begin{align*} {R_{B,\varepsilon}}\pa{\pi}\! =\!\cO\!\pa{\!\log\!{B}\pa{\sum_{i\in V}\abs{V}\frac{\lambda^\star\!\!+\!\abs{V}\abs{E} \log^2~\!\!~\!\!{\abs{V}}}{\Delta_{i,\min}} \!+\!\!\lambda^\star\abs{V}^2\!}\!\! }\!.\end{align*}
\end{theorem}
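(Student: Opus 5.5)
We follow the structure of the proof of Theorem~\ref{thm:CUCBregret}. First, Proposition~\ref{prop:budgeted_regret} lets us replace $R_{B,\varepsilon}(\pi)$ by $\EE{\sum_{t=1}^{\tau_B-1}\Delta(S_t)}$, up to an additive constant. Since $\fixed>0$, each round costs at least $\fixed$, so $\tau_B\le B/\fixed+1$. Hence every $\log t$ appearing below can be bounded by $\cO(\log B)$, and likewise $\delta(t)=\cO(\log B+\abs{E}\log\log B)$. The leading order is governed by $\log t$, and the $\abs{E}\log\log$ part is absorbed the way \citet{Degenne2016} absorb it.

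\textbf{Step 1: bad events.} We split rounds into three groups.
\begin{itemize}
\item Rounds where the confidence events fail: the Hoeffding intervals for costs and weights, Fact~\ref{degenne}, the inequality of Proposition~\ref{prop:counter}, or the sketch/MC approximation step. Each fails with probability $\cO(1/(t\log^2 t))$ or $1/t^2$. Summing over $t$ gives a constant. Multiplied by the maximal gap $\lambda^\star(1+\abs{V})$, this contributes to the additive $\lambda^\star\abs{V}^2$-type term.
\item Rounds where condition~\eqref{cond:counter} is triggered. For a fixed $j$, each such round has $p_j(S_t\cup\sset{j})=1$, so $\counterw{j}{t-1}$ increases. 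There are therefore at most $\delta(t)$ such rounds per node, i.e.\ $\cO(\abs{V}\log B)$ rounds in total, each with gap $\le\lambda^\star\abs{V}$. This produces the $\log B\,\lambda^\star\abs{V}^2$ term. It also ensures that $\mathfrak{P}_t$ holds on all remaining rounds.
\end{itemize}

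\textbf{Step 2: good rounds, relate the gap to a bonus.} On good rounds, let $U_t(S)$ be the optimistic objective actually maximized. If~\eqref{condition} held, the CUCB spread $\sigma(S;\bw_t)$ with LCB costs was used. Otherwise $\sigma(S;\vmeanw{t-1})+\bonus_5(S)$ was used, which upper bounds $\sigma(S;\bw^\star)$ by \eqref{UCB*}, $\bonus_3\ge\bonus$ and Proposition~\ref{prop:counter}. In both cases optimism plus Proposition~\ref{prop:greedy} (with $\varepsilon$ slack) give
\[(1-1/e-\varepsilon)\lambda^\star\,\be_{S_t\cup\sset0}\transpose\bc_t\le U_t(S_t).\]
In the first case we also have $U_t(S_t)\le\sigma(S_t;\vmeanw{t-1})+\bonus_5(S_t)$. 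Using \eqref{UCB*} again to go from $\vmeanw{t-1}$ back to $\bw^\star$, we get
\[\Delta(S_t)\le 2\bonus_5(S_t)+\lambda^\star\,\be_{S_t\cup\sset0}\transpose(\bc^\star-\bc_t).\]
The cost term is at most $\lambda^\star\sum_{j\in S_t}\sqrt{6\log t/\counterc{j}{t-1}}$.

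\textbf{Step 3: convert to a regret bound.} Thus $\Delta(S_t)\lesssim \abs{V}\sqrt{\delta(t)\abs{E}\sum_{j\in S_t}8/\counterc{j}{t-1}}+\lambda^\star\sum_{j\in S_t}\sqrt{\log t/\counterc{j}{t-1}}$. We apply the standard combinatorial semi-bandit counting argument: either the gap is at most twice the first term or at most twice the second. In each case we use the refined summation of \citet{Degenne2016} (a decreasing-threshold / "$\ell$ arms each under-sampled" argument over $j\in S_t$, whose counters all increment when $j\in S_t$).
\begin{itemize}
\item The square-root bonus term contributes $\cO\bigl(\sum_i \abs{V}^2\abs{E}\log^2\abs{V}\,\log B/\Delta_{i,\min}\bigr)$; the $\log^2$ of the set size comes from their lemma.
\item The linear cost term contributes $\cO(\sum_i\abs{V}\lambda^\star\log B/\Delta_{i,\min})$, as in Theorem~\ref{thm:CUCBregret}.
\end{itemize}

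\textbf{Main obstacle.} The hardest step is this last one: Step 2 yields a $\sqrt{\sum_j}$ bonus whose square scales with $\abs{V}^2\abs{E}$. To get the $\log^2\abs{V}$ rather than an extra $\abs{V}$ factor, we must adapt the Degenne--Perchet peeling argument to counters $\counterc{j}{t-1}$ indexed by seeds, with gaps $\Delta_{i,\min}$ defined through triggering. We will handle this by charging each round to a seed $j$ with $p_j>0$. Using $\Delta_{j,\min}\le\Delta(S_t)$ makes this valid.
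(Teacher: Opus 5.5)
Your proposal follows essentially the same route as the paper's proof. The shared steps are: the reduction via Proposition~\ref{prop:budgeted_regret}; discarding the failure events; charging the rounds where \eqref{cond:counter} fires (at most $\delta(\tau_B)+1$ per node, which gives the $\lambda^\star\abs{V}^2\log B$ term); the optimism-plus-approximation bound $\Delta(S_t)\le\alpha\lambda^\star\be_{S_t\cup\sset{0}}\transpose(\bc^\star-\bc_t)+2\bonus_5(S_t)$ with $\alpha=1-1/e-\varepsilon$; and the split into a cost part treated as a standard budgeted semi-bandit and a $\bonus_5$ part treated by Degenne--Perchet. Since $\bonus_5$ depends only on the seed counters $\counterc{j}{t-1}$, which increase whenever $j\in S_t$, that argument needs no real adaptation beyond your correct remark that $j\in S_t$ gives $p_j(S_t;\bw^\star)=1$, hence $\Delta_{j,\min}\le\Delta(S_t)$. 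You are also more careful than the paper in invoking the Hoeffding event on $\bw_t$ for the rounds where \eqref{condition} holds. Two details need repair, though.

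First, you silently drop the fixed-cost index when bounding the cost term. The quantity $\be_{S_t\cup\sset{0}}\transpose(\bc^\star-\bc_t)$ also contains the width $2\sqrt{1.5\log(t)/\counterc{0}{t-1}}$, which is attached to no seed. As written, your bound would give $\Delta(\emptyset)\le 0$ when $S_t=\emptyset$, which is false. Summing this width naively over rounds gives order $\lambda^\star\sqrt{\tau_B\log\tau_B}$, not $\log B$. The paper uses that the fixed cost is observed every round, so $\counterc{0}{t-1}$ grows like $t$. On rounds with $\Delta(S_t)\ge 4\lambda^\star\alpha\sqrt{1.5\log(t)/t}$, the term is absorbed into $\Delta(S_t)/2$, giving \eqref{gap_bound} with doubled constants. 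Every other round with positive gap must satisfy $t/\log t\lesssim(\lambda^\star/\Delta_{\min})^2$, where $\Delta_{\min}$ is the smallest positive gap, so those rounds contribute only a $B$-independent constant.

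Second, the claim that each round costs at least $\fixed$, so $\tau_B\le B/\fixed+1$, is false in this model. The vector $\bC_t$ is random and only $\EE{C_0}=\fixed$ is assumed, so $\tau_B$ has no deterministic bound. You need a bound in expectation instead. The paper quotes $\EE{\tau_B-1}\le(2B/\fixed+1)^2$. Alternatively, since $\sset{t\le\tau_B}$ is $\cH_{t-1}$-measurable, optional stopping gives $\fixed\EE{\tau_B}\le\EE{\sum_{t\le\tau_B}(\be_{S_t}\transpose\bC_t+\Fixed{t})}\le B+\abs{V}+1$. Then apply Jensen's inequality to the concave functions $\log$ and $\delta$ (on $[3,\infty)$) to turn your pathwise bounds in $\log\tau_B$ into $\cO(\log B)$.
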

\vspace{-.2cm}

A proof of Theorem~\ref{thm:regret} can be found in Appendix~\ref{app:regret}. Such analysis also holds in the non-budgeted setting, and maximizing the spread only instead of the ratio, we can build a policy $\pi$ satisfying the following (with the standard definition of the non-budgeted gaps):
\[R_{T,\varepsilon}\pa{\pi}=\cO\pa{\log{T}\sum_{i\in V}\frac{\abs{V}^2\abs{E}\log^2{\abs{V}}}{\Delta_{i,\min}}}.\]
 The regret rate is thus better than the one from \citet{wang2017improving}, gaining a factor $\abs{E}/\pa{\abs{V}\log^2\pa{\abs{V}}}$. 
 
 \section{Improvements using $\bonus_1$ and $\bonus_4$}
 
 In this section, we show that the use of $\bonus_1$ and $\bonus_4$ leads to a better regret leading term, at the cost of a large second order term. In the following, we propose \textsc{boim-cucb}$_1$ (resp. \textsc{boim-cucb}$_4$), that are the same approach as \textsc{boim-cucb}$_5$ with $\bonus_1\pa{~\cdot~;\vmeanw{t-1}}$ (resp. $\bonus_4\pa{~\cdot~;\vmeanw{t-1}}$) instead of $\bonus_5$, and where condition~\eqref{cond:counter} is replaced by
\[\exists j\in V, \counterw{j}{t-1}\leq  \abs{E}\delta\pa{t}.\]  
 
\subsection{$\bonus_1$ for low cardinality seed sets}\label{subsec:low}
In many real world scenarios, maximal cardinality of seed set is small compared to $\abs{V}$. Indeed, in the non-budgeted setting, it is limited by $m$, and it is usually assumed that $m$ is much smaller than $\abs{V}$. In the budgeted setting, we will see in section~\ref{sec:budget} how to limit the cost of the chosen seeds, and this is likely to also induce a limit on the cardinality of seeds. 
Using $\bonus_1$ is more appropriate in this situation, according to the approximation factor \eqref{bonus1}. 
We state in Theorem~\ref{thm:low} the regret bound for \textsc{boim-cucb}$_1$.
\begin{theorem}
\label{thm:low}
If $\pi$ is the policy \textsc{boim-cucb}$_1$,
and if all seeds selected have a cardinality bounded by $m,$
then we have \begin{align*}{R_{B,\varepsilon}}\pa{\pi}=\cO\left( \log{B}\left(\sum_{i\in V}m\frac{\lambda^\star +m\abs{V}^2d_i\log^2\pa{\abs{E}}}{\Delta_{i,\min}}\right.\right.\\\left.\left.+\lambda^\star{\abs{V}^2}\abs{E}\sizecorr{\sum_{i\in V}\frac{\lambda^\star K+\min\sset{\abs{V},K^2}\abs{V}d_i\log^2\pa{\abs{E}}}{\Delta_{i,\min}}}\right)\right).\end{align*}
\end{theorem}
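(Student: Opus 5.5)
Throughout, the event $\mathfrak{P}_t$ of Proposition~\ref{prop:counter} is replaced by its analogue with threshold $\abs{E}\delta\pa{t}$.

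\textbf{Reduction to a sum of gaps.} By Proposition~\ref{prop:budgeted_regret}, it suffices to bound $\EE{\sum_{t=1}^{\tau_B-1}\Delta\pa{S_t}}$ up to an additive $O(\abs{V}+\lambda^\star\abs{V})$. Since every round costs at least $\fixed>0$, $\tau_B\le B/\fixed+1$, so $\log t=O(\log B)$ for all relevant rounds. I split the rounds into three groups.

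\textbf{Bad and forced rounds.} First, the rounds where a high-probability event fails: Fact~\ref{degenne}, the Hoeffding intervals on the costs, or the $(1-1/e-\varepsilon)$ guarantee of the approximate oracle (Proposition~\ref{prop:greedy} with MC or sketches). Each fails with probability $O(1/(t\log^2 t))$ or $O(1/t^2)$. Since $\Delta\pa{S}\le\lambda^\star(1+\abs{V})$, their total contribution is $O(\lambda^\star\abs{V})$, which is absorbed.

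Second, the rounds where condition~\eqref{cond:counter} (now $\counterw{j}{t-1}\le\abs{E}\delta(t)$) triggers. Adding such a $j$ to the seed set makes $j$ reached, so $\counterw{j}{}$ increases. Hence there are at most $\abs{V}\abs{E}\delta(t)=O(\abs{V}\abs{E}\log B)$ such rounds, each costing at most $\lambda^\star(\abs{V}+1)$ in gap. This gives the second-order term $\lambda^\star\abs{V}^2\abs{E}\log B$.

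\textbf{Good rounds.} On the remaining rounds, $\mathfrak{P}_t$ holds. Optimism of both the accepted \textsc{boim-cucb} set and the $\bonus_1$-optimized set, together with the oracle guarantee, yields
\[\Delta\pa{S_t}\le \pa{\sigma(S_t;\bw_t)-\sigma(S_t;\bw^\star)}+\lambda^\star\pa{\be_{S_t\cup\sset{0}}\transpose(\bc^\star-\bc_t)}.\]
The spread excess is at most $2\bonus_1(S_t;\vmeanw{t-1})$: by \eqref{condition} in one case, and by \eqref{UCB} in the other.

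The cost excess is at most $\sum_{j\in S_t\cup\sset{0}}2\sqrt{1.5\log t/\counterc{j}{t-1}}$, a sum of at most $m+1$ Hoeffding terms. The standard triggering-counter argument of \citet{wang2017improving} then gives the contribution $m\lambda^\star/\Delta_{i,\min}\cdot\log B$ per node. This comes from the squared Cauchy--Schwarz $(\sum_{j\in S_t}x_j)^2\le m\sum_j x_j^2$, which yields one factor $m$.

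\textbf{Main obstacle: converting $\bonus_1$.} The remaining task is to turn $\bonus_1(S_t;\vmeanw{t-1})$ into a $\bw^\star$-based quantity. For each $j\in S_t$,
\[\bonus_1^{(j)}=\abs{V}\sqrt{\delta(t)\sum_i d_i\,p_i(\sset{j};\vmeanw{t-1})^2/\counterw{i}{t-1}}.\]
I compare $p_i(\sset{j};\vmeanw{t-1})$ to $p_i(\sset{j};\bw^\star)$ via Fact~\ref{wang} and Fact~\ref{degenne}. Under $\counterw{k}{t-1}>\abs{E}\delta(t)$ for all $k$, Cauchy--Schwarz gives $\sum_{kl}\abs{\bar w_{kl}-w^\star_{kl}}\le\sqrt{\abs{E}\,\delta(t)/\min_k\counterw{k}{t-1}}\le 1$. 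This bounds $p_i(\sset{j};\vmeanw{t-1})$ by $p_i(\sset{j};\bw^\star)$ plus a term of the same order as a bonus. This is exactly why the threshold was strengthened to $\abs{E}\delta(t)$, and it is the delicate step.

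Once the conversion is done, I get $\Delta(S_t)\le c\sum_{j\in S_t}\abs{V}\sqrt{\delta(t)\sum_i d_ip_i(\sset{j};\bw^\star)/\counterw{i}{t-1}}$. Applying $(\sum_{j\in S_t}\cdot)^2\le m\sum_j(\cdot)^2$ and then the refined triggering-probability-group argument of \citet{Degenne2016} yields $\sum_i m^2\abs{V}^2d_i\log^2\abs{E}\,\log B/\Delta_{i,\min}$. The $\log^2\abs{E}$ factor comes from the $\abs{E}\log\log t$ part of $\delta(t)$ together with the $\ell/\log^2\ell$ gain of that argument.

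Summing the three groups gives the bound stated in Theorem~\ref{thm:low}.
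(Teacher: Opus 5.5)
Your architecture is the paper's: reduction via Proposition~\ref{prop:budgeted_regret}, discarding the failure events, paying $\lambda^\star(\abs{V}+1)$ on the $\cO(\abs{V}\abs{E}\log B)$ forced rounds, a gap bound with a cost term plus $2\bonus_1\pa{S_t;\vmeanw{t-1}}$, and a factor $m$ in the bonus that becomes $m^2$ in the regret. The gap is in the conversion step you yourself single out. The inequality $\sum_{kl}\abs{\meanw{kl,t-1}-w^\star_{kl}}\le 1$ is true, but combined with Fact~\ref{wang} it only gives $\abs{p_i(\sset{j};\vmeanw{t-1})-p_i(\sset{j};\bw^\star)}\le 1$, which is vacuous. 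Plugging an additive error of order $1$ into $\bonus\pa{\sset{j};\cdot}$ creates an extra term $\abs{V}\sqrt{\delta(t)\sum_{i}d_i/\counterw{i}{t-1}}$. This term is summed over all nodes, not just triggered ones. Your threshold only bounds it by $\abs{V}$, and it need not shrink, since counters of rarely triggered nodes grow only at the forced rate $\abs{E}\delta(t)$. So no logarithmic bound follows.

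The threshold has to enter elsewhere. Keep the weights in Fact~\ref{wang} and apply Cauchy--Schwarz against the ellipsoid of Fact~\ref{degenne}:
\[p_i(S;\vmeanw{t-1})\le p_i(S;\bw^\star)+\sum_{kl\in E}p_k(S;\bw^\star)\abs{\meanw{kl,t-1}-w^\star_{kl}}\le p_i(S;\bw^\star)+\frac{1}{\abs{V}}\bonus\pa{S;\bw^\star}.\]
The additive term here is the same for every $i$. Minkowski then gives $\bonus\pa{S;\vmeanw{t-1}}\le\bonus\pa{S;\bw^\star}\pa{1+\sqrt{\delta(t)\sum_i d_i/\counterw{i}{t-1}}}$. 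Only at this point do $\counterw{i}{t-1}\ge\abs{E}\delta(t)$ and $\sum_i d_i=\abs{E}$ make the bracket at most $2$. The paper does this at the set level, after $\bonus_1\le m\,\bonus$ from \eqref{bonus1}. Your per-singleton version works equally well once repaired, and then $\sum_{j\in S_t}\bonus\pa{\sset{j};\bw^\star}\le m\,\bonus\pa{S_t;\bw^\star}$ by monotonicity of $p_i$ in the seed set.

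There are three smaller corrections.

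First, in your final display you dropped the square on $p_i(\sset{j};\bw^\star)$. After summing over $j\in S_t$ you get a $\bonus_3$-type quantity, with the expectation over triggering \emph{inside} the square root. That quantity dominates $\bonus_4$, so Jensen goes the wrong way. The triggering version of the Degenne--Perchet argument (Theorem~\ref{thm:jesen}) requires the expectation \emph{outside} the norm, so it cannot be invoked. If you keep the square, $\bonus\pa{S_t;\bw^\star}\le\bonus_4\pa{S_t;\bw^\star}$ by Jensen, which is exactly the required form with $\alpha_i=1/2$.

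Second, the $\log^2\abs{E}$ factor comes from the peeling factors $\log_2(4\sqrt{m_i})$ with $m_i\le\abs{E}$. It does not come from the $\abs{E}\log\log t$ part of $\delta(t)$.

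Third, $\tau_B\le B/\fixed+1$ is false, because $C_{0,t}$ is random with mean $\fixed$ and a round may cost less. Use $\EE{\tau_B-1}\le(2B/\fixed+1)^2$ together with concavity of $\log$ to control $\EE{\delta(\tau_B)}$ and the number of forced rounds.
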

\vspace{-.4cm}

A proof can be found in Appendix~\ref{app:low}. As previously, we can state the following non-budgeted version, with seed set cardinality constrained by $m$:
\begin{align*}{R_{T,\varepsilon}}\pa{\pi}\!=\!\cO\!\left(\! \log{T}\left(\sum_{i\in V}\!\frac{m^2\abs{V}^2d_i\log^2\pa{\abs{E}}}{\Delta_{i,\min}}\!+\!\abs{E}\abs{V}^2\sizecorr{\sum_{i\in V}\frac{\lambda^\star K+\min\sset{\abs{V},K^2}\abs{V}d_i\log^2\pa{\abs{E}}}{\Delta_{i,\min}}}\!\right)\!\!\right)\!.\end{align*}
\vspace{-.2cm}

Notice, for both settings, there is an improvement in the main term (the gap dependent one), in the case $m\leq \sqrt{\abs{V}}$. However, there is also a higher gap independent term that appears. 

\subsection{$\bonus_4$: the same performance as $\bonus$? }

We show here that the regret with $\bonus_4$ is of the same order as what we would have had with $\bonus$ (which is not submodular). However, $\bonus_4$ does not have the calculation guarantees of the other bonuses.
We state in Theorem~\ref{thm:jensen} the regret bound for the policy \textsc{boim-cucb}$_4$. Notice that we obtain a bound whose leading term improves by a factor $\abs{E}/\log^2\abs{E}$ that of \textsc{boim-cucb}.
\begin{theorem}
\label{thm:jensen}
If $\pi$ is the policy \textsc{boim-cucb}$_4$,
then we have \begin{align*}{R_{B,\varepsilon}}\pa{\pi}=\cO\left( \log{B}\left(\sum_{i\in V}\frac{\abs{V}\lambda^\star +\abs{V}^2d_i\log^2\pa{\abs{E}}}{\Delta_{i,\min}}\right.\right.\\\left.\left.+\lambda^\star{\abs{V}^2}\abs{E}\sizecorr{\sum_{i\in V}\frac{\lambda^\star K+\min\sset{\abs{V},K^2}\abs{V}d_i\log^2\pa{\abs{E}}}{\Delta_{i,\min}}}\right)\right).\end{align*}
\end{theorem}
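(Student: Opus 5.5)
We follow the same template as the proofs of Theorem~\ref{thm:CUCBregret} and Theorem~\ref{thm:regret}, and only the bonus-specific step changes. By Proposition~\ref{prop:budgeted_regret}, it suffices to bound $\EE{\sum_{t<\tau_B}\Delta\pa{S_t}}$ up to an additive constant. Since every round costs at least $\fixed>0$, we have $\tau_B\leq B/\fixed+1$, so all $\log t$ terms become $\cO(\log B)$.

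\textbf{Step 1: rounds where a high-probability event fails.} We decompose the rounds according to the following events:
\begin{itemize}
\item the confidence ellipsoid of Fact~\ref{degenne} holds;
\item the cost LCBs are valid;
\item the approximate maximization of the ratio succeeds (probability $1-1/(t\log^2 t)$);
\item the counter event $\mathfrak{P}_t$ holds.
\end{itemize}
The failure probabilities are summable, $\sum_t 1/(t\log^2 t)<\infty$, so these rounds contribute a constant times $\max_S\Delta(S)\leq \lambda^\star(1+\abs{V})$.

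\textbf{Step 2: the forced-exploration term.} Rounds in which some $\counterw{j}{t-1}\leq\abs{E}\delta(t)$ force $j$ into $S_t$, so node $j$ is triggered with probability one. Each such $j$ can therefore be forced at most $\abs{E}\delta(t)$ times. Summing over $j$ and bounding each of these rounds' gaps by $\lambda^\star\abs{V}$ gives the second-order term $\lambda^\star\abs{V}^2\abs{E}\log B$.

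\textbf{Step 3: the remaining good rounds.} On the remaining rounds all counters exceed $\abs{E}\delta(t)$. Optimism and the $(1-1/e-\varepsilon)$ guarantee of Proposition~\ref{prop:greedy} (applied to $\sigma(\cdot;\vmeanw{t-1})+\bonus_4$, which is submodular) give, in either branch of the acceptance test \eqref{condition},
\[\Delta(S_t)\lesssim \lambda^\star\,\be_{S_t}\transpose\pa{\bc^\star-\bc_t}+\bonus_4\pa{S_t;\vmeanw{t-1}}+\abs{V}\sum_{ij}p_i(S_t;\bw^\star)\abs{\meanw{ij,t-1}-w^\star_{ij}}.\]
The last term is at most a multiple of $\bonus(S_t;\bw^\star)$ by Fact~\ref{wang}, Cauchy--Schwarz and Fact~\ref{degenne}. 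The cost term is handled exactly as in Theorem~\ref{thm:CUCBregret} and yields the $\abs{V}\lambda^\star/\Delta_{i,\min}$ contribution.

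The main obstacle is to compare $\bonus_4(S_t;\vmeanw{t-1})$ with $\bonus(S_t;\bw^\star)$ up to constants, because the expectation in $\bonus_4$ is under $\vmeanw{t-1}$, not $\bw^\star$.
\begin{itemize}
\item First apply Jensen to get $\bonus_4(S;\bw)\leq\bonus_3(S;\bw)$.
\item Then use Fact~\ref{wang} to replace $p_i(S;\vmeanw{t-1})$ by $p_i(S;\bw^\star)$. The error is $\sum_{kl}p_k\abs{\meanw{kl}-w^\star_{kl}}$, and Fact~\ref{degenne} bounds it by $\sqrt{\delta(t)\abs{E}/\min_k\counterw{k}{t-1}}\leq 1$, which is where the threshold $\abs{E}\delta(t)$ in the modified condition is designed to apply.
\end{itemize}
Here I expect some subtlety, since $\bonus_3$ involves $p_i$ rather than $p_i^2$. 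To restore the $p_i^2$ form I would try the per-outcome (pre-Jensen) form of $\bonus_4$ directly, $\sqrt{\sum_{i:S\reach{\bW}i}x_i}$. Its expectation under $\bw^\star$ is controlled by splitting the sum into dyadic classes of $p_i$, which is the source of $\log^2\abs{E}$ in the style of \citet{Degenne2016}.

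\textbf{Conclusion.} Once $\Delta(S_t)\lesssim\lambda^\star\cdot(\text{cost width})+c\,\abs{V}\sqrt{\delta(t)\sum_i d_ip_i(S_t;\bw^\star)^2/\counterw{i}{t-1}}$ is established, we apply the triggering-probability-weighted counting argument of \citet{Degenne2016} and \citet{wang2017improving}. This uses geometric grouping of counters and the observation that, in expectation, $\counterw{i}{t}$ grows at rate $p_i(S_t;\bw^\star)$. It yields the leading term $\sum_i\abs{V}^2d_i\log^2(\abs{E})\log B/\Delta_{i,\min}$, which matches the statement of Theorem~\ref{thm:jensen}.
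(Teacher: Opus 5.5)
Your Steps 1--2 and your treatment of the cost term match the paper, but Step 3 rests on a comparison that is false. You aim for $\Delta(S_t)\lesssim\lambda^\star(\text{cost width})+c\,\bonus(S_t;\bw^\star)$, yet Jensen runs the other way. Write $X$ for the random vector with coordinates $\II{S\reach{\bW}i}\sqrt{\delta(t)d_i/\counterw{i}{t-1}}$. Then $\bonus(S;\bw)=\abs{V}\norm{\EE{X}}_2\leq\abs{V}\EE{\norm{X}_2}=\bonus_4(S;\bw)$, and no reverse inequality holds up to constants or logarithms. For example, take a seed $s$ with $n$ out-neighbours, each of out-degree $D\geq n$ (pointing to sinks). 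Let each neighbour be reached independently with probability $1/n$, with the typical counters $\counterw{i}{t-1}=\counterw{s}{t-1}/n$. With $\kappa=\abs{V}\sqrt{\delta(t)/\counterw{s}{t-1}}$, one gets $\bonus(\sset{s};\bw^\star)=\kappa\sqrt{n+D}$ but $\bonus_4(\sset{s};\bw^\star)\geq(1-1/e)\kappa\sqrt{nD}$. The ratio is at least of order $\sqrt n$, although all the neighbours' $p_i$ are equal, so no dyadic grouping by $p_i$ can absorb it. Since your per-round control is in terms of $\bonus_4(S_t;\vmeanw{t-1})$, the inequality announced in your Conclusion cannot be reached, and the detour through $\bonus_3$ only loses more. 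Your perturbation step is also too weak: an \emph{additive} error of at most $1$ on each $p_i$ is vacuous. (Minor: $\tau_B\leq B/\fixed+1$ does not hold surely, since realized costs can fall below $\fixed$; the paper uses $\EE{\tau_B-1}\leq(2B/\fixed+1)^2$.)

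The paper never passes to $\bonus$; it keeps the per-outcome form throughout, in two steps.

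First, it compares $\bonus_4$ at $\vmeanw{t-1}$ with $\bonus_4$ at $\bw^\star$. It writes $\bonus_4(S;\bw)=\sum_{k\geq0}\pa{g(S'_{k+1})-g(S'_k)}\sum_{S'\notin\sset{S'_0,\dots,S'_k}}p_{S\reach{}S'}(\bw)$, with $g(S')=\abs{V}\sqrt{\delta(t)\sum_{i\in S'}d_i/\counterw{i}{t-1}}$ and the $S'_k$ ordered by increasing $g$. It then proves a smoothness bound for the law of the reached set (Proposition~\ref{prop:jensen}): the probability of any collection of reached sets changes by at most $\sum_{ij\in E}p_i(S;\bw)\abs{w_{ij}-w'_{ij}}$. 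By Cauchy--Schwarz and Fact~\ref{degenne}, this is at most $\bonus(S_t;\bw^\star)/\abs{V}\leq\bonus_4(S_t;\bw^\star)/\abs{V}$, using Jensen in the correct direction. This is a \emph{multiplicative} error. The threshold $\counterw{i}{t-1}\geq\abs{E}\delta(t)$ is what bounds the telescoped sum of $g$-increments by $\abs{V}$, which yields $4\bonus_4(S_t;\vmeanw{t-1})\leq 5\bonus_4(S_t;\bw^\star)$.

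Second, it uses a counting result designed for the ``expectation outside the norm'' form (Theorem~\ref{thm:jesen}). It assumes $\Delta_t\leq\EEcc{\Lambda(A_t)}{\cF_t}$, where $\Lambda(A_t)$ is the $\ell_2$ norm of the widths over the \emph{realized} triggered edge set $A_t$. A reverse amortisation and a pigeonhole over $\cO(\log\abs{E})$ dyadic classes of the per-edge widths (relative to $\Lambda(A_t)$) reduce everything to counters of edges in $A_t$. Those counters do increment, so triggering probabilities never enter. The $\log^2\abs{E}$ comes from these dyadic classes appearing twice, not from grouping by $p_i$.
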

\vspace{-.45cm}
The proof is given in in Appendix~\ref{app:jensen}. As previously, we can state the following non-budgeted version. Notice that the cardinality constrain does not appear in the bound.
\begin{align*}{R_{T,\varepsilon}}\pa{\pi}\!=\!\cO\!\left(\! \log{T}\left(\sum_{i\in V}\!\frac{\abs{V}^2d_i\log^2\pa{\abs{E}}}{\Delta_{i,\min}}\!+\!\abs{E}\abs{V}^2\sizecorr{\sum_{i\in V}\frac{\lambda^\star K+\min\sset{\abs{V},K^2}\abs{V}d_i\log^2\pa{\abs{E}}}{\Delta_{i,\min}}}\!\right)\!\!\right)\!.\end{align*}
\vspace{-.45cm}
  
In spite of the superiority in terms of regret of the use of $\bonus_4$, we must point out that, in the worst case, the calculation of this bonus may require a number of sample (and thus a time complexity) polynomial in $t$, which does not meet the criterion of efficiency that we set ourselves at the beginning of the paper.
 \vspace{-.3cm}
\section{Knapsack constraint for known costs}\label{sec:budget}

In their setting, \citet{wang2020fast} considered the relaxed constraint 
\vspace{-.1cm}
\begin{align}\EE{\be_{S\cup\sset{0}}\bc^\star}\leq b,\label{rel:knap_cons}\end{align} instead of ratio maximization, where the expectation is over the possible randomness of $S$. When true costs are known to the agent, we can actually combine the two settings: a seed set $S$ can be chosen only if it satisfies \eqref{rel:knap_cons}. In this section, we describe modifications this new setting implies. First of all, the regret definition is impacted, and $F_B^\star$ is now maximal for policies respecting the constraint \eqref{rel:knap_cons} within each round.
Naturally, the definitions of $\lambda^\star$ and $S^\star$ are also modified accordingly.
Otherwise, apart from Algorithm~\ref{algo:greedy}, there is conceptually no change in the approaches that have been described in this paper. We now described the modification needed to make Algorithm~\ref{algo:greedy} works in this setting. The same sequence of set $S_k$ is considered, but instead of choosing the set that maximizes the ratio over all $k\in \sset{0,\dots,\abs{V}}$, we restrict the maximization to $k\in \sset{0,\dots,j}$, where $j$ is the first index such that
$\be_{S_{j}\cup\sset{0}}\transpose \bc
^\star> b$. If this maximizer is not $S_j$, then it satisfies the constraint and is output. Else, we output $S_j$ with probability $(b-\be_{S_{j-1}\cup\sset{0}}\transpose \bc^\star)/c^\star_{j}$ and $S_{j-1} $ with probability $1-(b-\be_{S_{j-1}\cup\sset{0}}\transpose \bc^\star)/c^\star_{j}$. This way, the expected cost of the output is $b$. We prove in Appendix~\ref{app:greedy_bis} the following Proposition~\ref{prop:greedy_bis}, giving an approximation factor of $1-1/e$ for the above modification of Algorithm~\ref{algo:greedy}. 

\begin{proposition}
\label{prop:greedy_bis}
The solution $S$ obtained by the modified Algorithm~\ref{algo:greedy} is such that:
\vspace{-.1cm}
\[\pa{1-e^{-1}}\frac{\EE{\sigma\pa{S^\star}}}{\EE{\be_{S^\star\cup\sset{0}}\transpose \bc^\star}}\leq \frac{\EE{\sigma\pa{S}}}{\EE{\be_{S\cup\sset{0}}\transpose \bc^\star}}\CommaBin\]
where the expectation is over the possible randomness of $S,S^\star$.
\end{proposition}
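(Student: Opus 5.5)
We reduce to the usual greedy analysis for a budgeted submodular objective. The key quantity is the fractional point $S_{j-1}+\theta\be_{j}$ with $\theta=(b-\be_{S_{j-1}\cup\sset{0}}\transpose\bc^\star)/c^\star_j$; here we write $j$ both for the first index exceeding the budget and for the element added at that step. Write $c(A)=\be_{A\cup\sset{0}}\transpose\bc^\star$ and $\lambda=\EE{\sigma(S^\star)}/\EE{c(S^\star)}$, where $S^\star$ is any (possibly randomized) feasible set, i.e.\ $\EE{c(S^\star)}\le b$.

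The first step is a per-step inequality for the greedy chain. For every $k<j$ we have $c(S_k)\le b$. Fix a realization $A$ of $S^\star$. By submodularity and monotonicity,
\[
\sigma(A)-\sigma(S_k)\le\sum_{i\in A\setminus S_k}\pa{\sigma(S_k\cup\sset{i})-\sigma(S_k)}.
\]
Because the greedy picks the element with the best bang-per-buck, the right-hand side is at most $\frac{\sigma(S_{k+1})-\sigma(S_k)}{c_{k+1}}\,\be_A\transpose\bc^\star$. Taking expectations over $S^\star$ and using $\EE{\be_{S^\star}\transpose\bc^\star}\le\EE{c(S^\star)}$ gives
\[
\sigma(S_{k+1})-\sigma(S_k)\ge\frac{c_{k+1}}{\EE{c(S^\star)}}\pa{\EE{\sigma(S^\star)}-\sigma(S_k)}.
\]
The same bound holds for the fractional last step with $c_{k+1}$ replaced by $\theta c_j$, since the interpolation is linear.

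The second step converts this into a ratio bound. This is where the nonzero $c_0$ matters, and the obstacle the paper flags for \citet{bai2016algorithms}. Let $x_k$ be the cumulative cost along the chain, starting at $x_0=c_0^\star$. The recursion, together with $1-u\le e^{-u}$, yields
\[
\sigma(S_k)\ge\pa{1-e^{-(x_k-c^\star_0)/\EE{c(S^\star)}}}\EE{\sigma(S^\star)}.
\]
Two cases follow.
\begin{itemize}
\item If the final expected cost $b'$ of the randomized output ($b'=b$ in the randomized case) is comparable to $\EE{c(S^\star)}$, we divide by it. This requires showing that the function $x\mapsto\pa{1-e^{-(x-c_0)/D}}/x$ with $D=\EE{c(S^\star)}$, evaluated on the realized chain, dominates $(1-1/e)/D$ at the best index.
\item I would rather avoid that computation and mimic the proof of Proposition~\ref{prop:greedy}, which we assume handles $c_0$. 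Apply it to the greedy chain truncated at index $j$ with the fractional last element. It gives a point on the piecewise-linear interpolation of $(c(S_k),\sigma(S_k))$, $k\le j$, whose ratio is at least $(1-1/e)\lambda$, provided $S^\star$ is compared with sets of cost at most the chain's reach. That proviso holds since $\EE{c(S^\star)}\le b$.
\end{itemize}
The key observation is that along a linear segment, the ratio of interpolated $\sigma$ to interpolated cost is at most the maximum of the endpoint ratios. The chain ratio is therefore maximized at some endpoint $S_k$ with $k<j$, or at the randomized point at cost $b$.

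The last step checks the algorithm's output. If the maximizer over $k\le j$ is some $S_k$ with $k<j$, then it is feasible. Its ratio dominates that of the fractional point, because the mixture ratio lies between the endpoint ratios, so its ratio is at least $(1-1/e)\lambda$. Otherwise the maximizer is $S_j$, so the ratio of $S_j$ exceeds that of $S_{j-1}$. The output is then a mixture of $S_{j-1}$ and $S_j$ with expected cost $b$. Its expected-value ratio $\EE{\sigma(S)}/\EE{c(S)}$ is exactly the ratio of the fractional point. By monotonicity along the segment, that ratio equals $\max$ over the chain points up to cost $b$. This again bounds it below by $(1-1/e)\lambda$.

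I expect the main obstacle to be the second step, namely obtaining the $1-1/e$ ratio with $c_0>0$ while restricting the comparison to budget $b$. In particular one must check that the interpolated greedy curve at cost $b$ dominates $(1-e^{-1})$ times the line $\lambda x$ at some point.
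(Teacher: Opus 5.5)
The step you flag as the main obstacle is where the argument actually fails, and the failure comes from your first step. There you bound $\EE{\be_{S^\star}\transpose\bc^\star}$ by $D=\EE{c(S^\star)}$, which discards exactly the fixed cost $c_0^\star$ that the ratio guarantee depends on. With that normalization your recursion only gives $\sigma(S_k)\ge\pa{1-e^{-(x_k-c_0^\star)/D}}\EE{\sigma(S^\star)}$, and the domination claimed in your first bullet is false in general. Write $c_0^\star=\gamma D$ and $x=c_0^\star+uD$; you would need $1-e^{-u}\ge(1-e^{-1})(\gamma+u)$ for some $u\ge 0$. Since $\max_{u\ge 0}\pa{1-e^{-u}-(1-e^{-1})u}\approx 0.078$, this fails for every $\gamma>1/8$. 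For instance, with $c_0^\star=1$ and a single node of cost $\epsilon$ and spread $1$, your bound certifies a ratio of order $\epsilon$ rather than $(1-e^{-1})/(1+\epsilon)$. Deferring to the proof of Proposition~\ref{prop:greedy}, as in your second bullet, does not rescue your inequality: that proof works precisely because it avoids this relaxation, since \eqref{last} is normalized by the modular cost $\be_{S^\star}\transpose\bc$ alone. The repair is to keep $D'=\EE{\be_{S^\star}\transpose\bc^\star}=D-c_0^\star$. Then $\EE{\sigma(S^\star)}-\sigma(S_{k+1})\le\pa{1-c_{k+1}/D'}\pa{\EE{\sigma(S^\star)}-\sigma(S_k)}$, also for a fractional step. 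At the point of the interpolated chain whose modular cost equals $D'$, the product of these factors is at most $e^{-1}$. That point therefore has value at least $(1-e^{-1})\EE{\sigma(S^\star)}$ at total cost exactly $\EE{c(S^\star)}\le b$. It thus lies on the chain truncated at the fractional point, with ratio at least $(1-e^{-1})\lambda$.

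With this repair your route is sound and genuinely different from the paper's. The paper splits into two cases. When $\EE{c(S^\star)}<b$, it argues that $S^\star$ must be deterministic and reuses Proposition~\ref{prop:greedy}. When $\EE{c(S^\star)}=b$, it imports Theorem~1 of \citet{wang2020fast}. Your expectation-level per-step inequality handles a randomized $S^\star$ directly and covers both cases at once, without the external theorem.

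One further claim in your last step still needs proof. When the maximizer over $k\le j$ is $S_j$, linearity on the last segment only shows that the fractional point beats $S_{j-1}$, and the guaranteed point when it lies on that segment. If the guaranteed point lies on an earlier segment, you need the fractional point to beat every $S_k$ with $k<j$, and monotonicity along one segment does not give that. It follows from concavity. By submodularity the bang-per-buck values along the greedy chain are non-increasing, so the interpolated curve $g$ is concave with $g(c_0^\star)=0$. Hence $x\mapsto g'(x)x-g(x)$ is non-increasing on its domain, and $g(x)/x$ is unimodal. An increase on the last segment then forces $g(x)/x$ to be non-decreasing on all of $[c_0^\star,b]$. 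The paper's case analysis relies on the same fact without stating it.
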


\vspace{-.3cm}
\section{Experiments}
\vspace{-.1cm}
\label{sec:exp}
\begin{figure}[h]
\begin{center}
\includegraphics[width=\columnwidth]{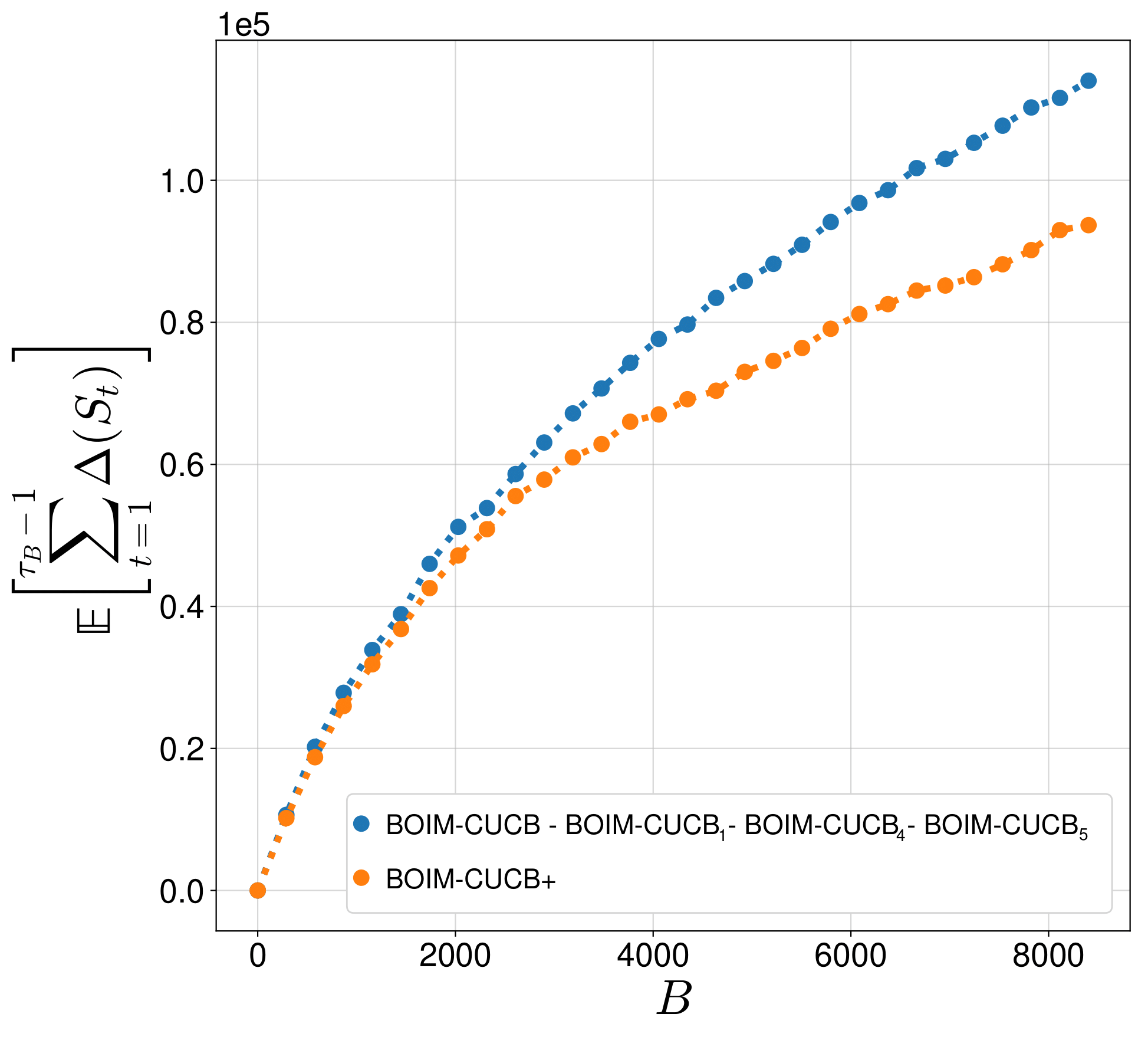}
\vspace{-.8cm}
\caption{Regret curves with respect to the budget $B$ (expectation computed by averaging over 10 independent simulations).}
\label{exp:regret}
\end{center}
\vspace{-.2cm}
\end{figure}

In this section, we present an experiment for Budgeted OIM. In Figure~\ref{exp:regret}, we plot  $\EE{\sum_{t=1}^{\tau_B-1} \Delta\pa{S_t}{}}$ with respect to the budget $B$ used, running over up to $T=10000$ rounds. This quantity is a good approximations to the true regret according to Proposition~\ref{prop:budgeted_regret}. Plotting the true regret would require to compute $F_B^\star$, which is NP-Hard to do.  
We consider a  subgraph of Facebook network \citep{snapnets}, with $\abs{V}=333$ and $\abs{E}=5038$, as in \citet{wen2017online}. We take $\bw^\star\sim U\pa{0,0.1}^{\otimes E}$ and take deterministic, known costs with $c^\star_0=1$, and $c^\star_i=d_i/\max_{j\in V} d_j$. 
\textsc{{boim-cucb}+} is the same approach as \textsc{boim-cucb}$_5$ with $\bonus\pa{~\cdot~;\vmeanw{t-1}}$ instead of $\bonus_5$, ignoring that $\bonus\pa{~\cdot~;\vmeanw{t-1}}$ is not submodular (it is only sub-additive).

We observed that in \textsc{boim-cucb}$_1$, \textsc{boim-cucb}$_4$, \textsc{boim-cucb}$_5$, 
Condition~\ref{condition} (with the correct bonus instead of $\bonus_5$)  always holds, meaning that those algorithms coincide with  
\textsc{{boim-cucb}} \emph{in practice}, and that the gain only appears through the analysis.
We thus plot a single curve for these 4 algorithms in Figure~\ref{exp:regret}. On the other hand, we observe only a slight gain of \textsc{{boim-cucb}+} compared to \textsc{{boim-cucb}}.

Our experiments confirm that Fact~\ref{wang} is less rough in practice, as we already anticipated. Indeed, our submodular bonuses 
are not tight enough to compete with \textsc{boim-cucb}, although we gain in the analysis. The slight gain that we have for $\textsc{{boim-cucb}+}$ suggests that the issue is not only about the tightness of a submodular upper bound, but rather about the tightness of Fact~\ref{wang}. This is supported by the following observation we made: for the Facebook subnetwork, for $1000$ random draws of a seed set and vector pairs in $[0,0.1]
^E$, the ratio of the RHS and the LHS in Fact~\ref{wang} is each time greater than $0.4\abs{V}$.  

In Appendix~\ref{app:exp}, we conducted further experiments on a synthetic graph comparing \textsc{boim-cucb} to \textsc{boim-{cucb}-regularized}, which greedily maximizes the regularized spread
$S\mapsto\sigma(S;\bw_t)-\lambda \be_S\bc_t,$
where $\lambda$ is a parameter to set.
We observed that for an appropriate choice of $\lambda$, a performance similar to \textsc{boim-cucb} can be obtained.

\vspace{-.2cm}
\section{Discussion and Future work}

We introduced a new Budgeted OIM problem, taking both the costs of influencers and fixed costs into account in the seed selection, instead of the usual cardinality constraint. This better represents the current challenges in viral marketing, since top influencers tend to be more and more costly. Our fixed cost can also be seen as the time that a round takes: A null fixed cost would mean that reloading the network to get a new independent instance is \emph{free} and \emph{instantaneous}.\footnote{In this case, using $\abs{S}$ rounds choosing each time a single different influencer $i\in S$ is better than choosing the whole $S$ in a single round.} Obviously, this is not realistic. 
We also provided an algorithm for Budgeted OIM under the IC model and the edge level semi-bandit feedback setting. 

Interesting future directions of research would be to explore other kinds of feedback or diffusion models for Budgeted OIM.
For practical scalability, it would also be good to investigate the incorporation of
the linear generalization framework \citep{wen2017online} into Budgeted OIM. Notice, this extension is \emph{not} straightforward if we want to keep our tighter confidence region. 
More precisely, we believe that a \emph{linear semi-bandit} approach that is aware of independence between edge observations should be developed (the linear generalization approach of \citet{wen2017online} treats each edge observation as \emph{arbitrary correlated}). 

In addition to this,  
exploring how the use of Fact~\ref{wang} \emph{in the Algorithm} might be avoided while still using confidence region given by Fact~\ref{degenne} would surely improve the algorithms. One possible way would be to use a Thompson Sampling (TS) approach \citep{wang2018,perrault2020statistical}, where the prior takes into account the mutual independence of weights. However, \citet{wang2018} proved in their Theorem~2 that TS gives linear approximation regret for some special approximation algorithms.
Thus, we would have to use some specific property of the \textsc{greedy} approximation algorithm we use.   

 \newpage

\section*{Acknowledgements} The research presented was supported by European CHIST-ERA project DELTA, French Ministry of Higher Education and Research, Nord-Pas-de-Calais Regional Council,  French National Research Agency project BOLD (ANR19-CE23-0026-04).

\bibliography{lib}

\begin{thebibliography}{}

\bibitem[Auer et~al., 2002]{auer2002finite}
Auer, P., Cesa-Bianchi, N., and Fischer, P. (2002).
\newblock {Finite-time analysis of the multiarmed bandit problem}.
\newblock {\em Machine Learning}, 47(2-3):235--256.

\bibitem[Bai et~al., 2016]{bai2016algorithms}
Bai, W., Iyer, R., Wei, K., and Bilmes, J. (2016).
\newblock Algorithms for optimizing the ratio of submodular functions.
\newblock In {\em International Conference on Machine Learning}, pages
  2751--2759.

\bibitem[Carpentier and Valko, 2016]{carpentier2016revealing}
Carpentier, A. and Valko, M. (2016).
\newblock {Revealing graph bandits for maximizing local influence}.
\newblock In {\em International Conference on Artificial Intelligence and
  Statistics}.

\bibitem[Chen et~al., 2010]{chen2010scalable}
Chen, W., Wang, C., and Wang, Y. (2010).
\newblock {Scalable influence maximization for prevalent viral marketing in
  large-scale social networks}.
\newblock In {\em Knowledge Discovery and Data Mining}.

\bibitem[Chen et~al., 2013]{chen13a}
Chen, W., Wang, Y., and Yuan, Y. (2013).
\newblock Combinatorial multi-armed bandit: General framework and applications.
\newblock In {\em Proceedings of the 30th International Conference on Machine
  Learning}, volume~28 of {\em Proceedings of Machine Learning Research}, pages
  151--159, Atlanta, Georgia, USA. PMLR.

\bibitem[Chen et~al., 2016]{Chen2015combinatorial}
Chen, W., Wang, Y., and Yuan, Y. (2016).
\newblock {Combinatorial multi-armed bandit and its extension to
  probabilistically triggered arms}.
\newblock {\em Journal of Machine Learning Research}, 17.

\bibitem[Cohen, 1997]{cohen1997size}
Cohen, E. (1997).
\newblock Size-estimation framework with applications to transitive closure and
  reachability.
\newblock {\em Journal of Computer and System Sciences}, 55(3):441--453.

\bibitem[Cohen, 2016]{cohen2016minhash}
Cohen, E. (2016).
\newblock Minhash sketches.

\bibitem[Cohen et~al., 2014]{cohen2014sketch}
Cohen, E., Delling, D., Pajor, T., and Werneck, R.~F. (2014).
\newblock Sketch-based influence maximization and computation: Scaling up with
  guarantees.
\newblock In {\em Proceedings of the 23rd ACM International Conference on
  Conference on Information and Knowledge Management}, pages 629--638. ACM.

\bibitem[Cohen and Kaplan, 2007]{cohen2007summarizing}
Cohen, E. and Kaplan, H. (2007).
\newblock Summarizing data using bottom-k sketches.
\newblock In {\em Proceedings of the twenty-sixth annual ACM symposium on
  Principles of distributed computing}, pages 225--234. ACM.

\bibitem[Combes et~al., 2015]{combes2015combinatorial}
Combes, R., Shahi, M. S. T.~M., Proutiere, A., and Others (2015).
\newblock {Combinatorial bandits revisited}.
\newblock In {\em Advances in Neural Information Processing Systems}, pages
  2116--2124.

\bibitem[Degenne and Perchet, 2016]{Degenne2016}
Degenne, R. and Perchet, V. (2016).
\newblock Combinatorial semi-bandit with known covariance.
\newblock In {\em Advances in Neural Information Processing Systems 29}, pages
  2972--2980. Curran Associates, Inc.

\bibitem[Ding et~al., 2013]{Ding2013}
Ding, W., Qiny, T., Zhang, X.-D., and Liu, T.-Y. (2013).
\newblock {Multi-armed bandit with budget constraint and variable costs}.

\bibitem[Feige, 1998]{feige1998threshold}
Feige, U. (1998).
\newblock {A threshold of $\ln n$ for approximating set cover}.
\newblock {\em Journal of the ACM (JACM)}, 45(4):634--652.

\bibitem[Fujishige, 2005]{fujishige2005submodular}
Fujishige, S. (2005).
\newblock {\em {Submodular functions and optimization}}.
\newblock Annals of discrete mathematics.

\bibitem[Goemans et~al., 2009]{goemans2009approximating}
Goemans, M.~X., Harvey, N.~J., Iwata, S., and Mirrokni, V. (2009).
\newblock Approximating submodular functions everywhere.
\newblock In {\em Proceedings of the twentieth annual ACM-SIAM symposium on
  Discrete algorithms}, pages 535--544. Society for Industrial and Applied
  Mathematics.

\bibitem[Gomez-Rodriguez et~al., 2012]{Gomez-Rodriguez2012}
Gomez-Rodriguez, M., Leskovec, J., and Krause, A. (2012).
\newblock Inferring networks of diffusion and influence.
\newblock {\em ACM Trans. Knowl. Discov. Data}, 5(4):21:1--21:37.

\bibitem[Goyal et~al., 2010]{Goyal2010}
Goyal, A., Bonchi, F., and Lakshmanan, L.~V. (2010).
\newblock Learning influence probabilities in social networks.
\newblock In {\em Proceedings of the Third ACM International Conference on Web
  Search and Data Mining}, WSDM '10, pages 241--250, New York, NY, USA. ACM.

\bibitem[{Goyal} et~al., 2011]{6137225}
{Goyal}, A., {Lu}, W., and {Lakshmanan}, L. V.~S. (2011).
\newblock Simpath: An efficient algorithm for influence maximization under the
  linear threshold model.
\newblock In {\em 2011 IEEE 11th International Conference on Data Mining},
  pages 211--220.

\bibitem[Kakade et~al., 2009]{kakade2009playing}
Kakade, S.~M., Kalai, A.~T., and Ligett, K. (2009).
\newblock Playing games with approximation algorithms.
\newblock {\em SIAM Journal on Computing}, 39(3):1088--1106.

\bibitem[Kempe et~al., 2003]{kempe2003maximizing}
Kempe, D., Kleinberg, J., and Tardos, {\'E}. (2003).
\newblock Maximizing the spread of influence through a social network.
\newblock In {\em Proceedings of the ninth ACM SIGKDD international conference
  on Knowledge discovery and data mining}, pages 137--146. ACM.

\bibitem[Khuller et~al., 1999]{khuller1999budgeted}
Khuller, S., Moss, A., and Naor, J.~S. (1999).
\newblock The budgeted maximum coverage problem.
\newblock {\em Information processing letters}, 70(1):39--45.

\bibitem[Krause and Guestrin, 2005]{krause2005note}
Krause, A. and Guestrin, C. (2005).
\newblock {A Note on the Budgeted Maximization of Submodular Functions}.
\newblock Technical Report June, CMU.

\bibitem[Leskovec et~al., 2007]{Leskovec2007}
Leskovec, J., Krause, A., Guestrin, C., Faloutsos, C., Faloutsos, C.,
  VanBriesen, J., and Glance, N. (2007).
\newblock Cost-effective outbreak detection in networks.
\newblock In {\em Proceedings of the 13th ACM SIGKDD International Conference
  on Knowledge Discovery and Data Mining}, KDD '07, pages 420--429, New York,
  NY, USA. ACM.

\bibitem[Leskovec and Krevl, 2014]{snapnets}
Leskovec, J. and Krevl, A. (2014).
\newblock {SNAP Datasets}: {Stanford} large network dataset collection.
\newblock \url{http://snap.stanford.edu/data}.

\bibitem[Lugosi et~al., 2019]{lugosi19a}
Lugosi, G., Neu, G., and Olkhovskaya, J. (2019).
\newblock Online influence maximization with local observations.
\newblock In {\em Proceedings of the 30th International Conference on
  Algorithmic Learning Theory}, volume~98 of {\em Proceedings of Machine
  Learning Research}, pages 557--580, Chicago, Illinois. PMLR.

\bibitem[Magureanu et~al., 2014]{magureanu2014}
Magureanu, S., Combes, R., and Proutiere, A. (2014).
\newblock Lipschitz bandits: Regret lower bound and optimal algorithms.
\newblock In {\em Proceedings of The 27th Conference on Learning Theory},
  volume~35 of {\em Proceedings of Machine Learning Research}, pages 975--999,
  Barcelona, Spain. PMLR.

\bibitem[Minoux, 1978]{minoux1978accelerated}
Minoux, M. (1978).
\newblock {Accelerated greedy algorithms for maximizing submodular set
  functions}.
\newblock {\em Optimization Techniques}, pages 234--243.

\bibitem[Mitzenmacher and Upfal, 2017]{mitzenmacher2017probability}
Mitzenmacher, M. and Upfal, E. (2017).
\newblock {\em Probability and computing: randomization and probabilistic
  techniques in algorithms and data analysis}.
\newblock Cambridge university press.

\bibitem[Nemhauser et~al., 1978]{Nemhauser1978}
Nemhauser, G.~L., Wolsey, L.~A., and Fisher, M.~L. (1978).
\newblock {An analysis of approximations for maximizing submodular set
  functions-I}.
\newblock {\em Mathematical Programming}, 14(1):265--294.

\bibitem[Netrapalli and Sanghavi, 2012]{Netrapalli2012}
Netrapalli, P. and Sanghavi, S. (2012).
\newblock Learning the graph of epidemic cascades.
\newblock {\em SIGMETRICS Perform. Eval. Rev.}, 40(1):211--222.

\bibitem[Nguyen and Zheng, 2013]{nguyen2013budgeted}
Nguyen, H. and Zheng, R. (2013).
\newblock On budgeted influence maximization in social networks.
\newblock {\em IEEE Journal on Selected Areas in Communications},
  31(6):1084--1094.

\bibitem[Perrault et~al., 2020a]{perrault2020statistical}
Perrault, P., Boursier, E., Perchet, V., and Valko, M. (2020a).
\newblock Statistical efficiency of thompson sampling for combinatorial
  semi-bandits.
\newblock {\em arXiv preprint arXiv:2006.06613}.

\bibitem[Perrault et~al., 2019a]{perrault2019exploiting}
Perrault, P., Perchet, V., and Valko, M. (2019a).
\newblock Exploiting structure of uncertainty for efficient matroid
  semi-bandits.
\newblock In {\em Proceedings of the 36th International Conference on Machine
  Learning}, volume~97 of {\em Proceedings of Machine Learning Research}, pages
  5123--5132, Long Beach, California, USA. PMLR.

\bibitem[Perrault et~al., 2019b]{pmlr-v89-perrault19a}
Perrault, P., Perchet, V., and Valko, M. (2019b).
\newblock Finding the bandit in a graph: Sequential search-and-stop.
\newblock In {\em Proceedings of Machine Learning Research}, volume~89 of {\em
  Proceedings of Machine Learning Research}, pages 1668--1677. PMLR.

\bibitem[Perrault et~al., 2020b]{perrault2020covariance}
Perrault, P., Perchet, V., and Valko, M. (2020b).
\newblock Covariance-adapting algorithm for semi-bandits with application to
  sparse outcomes.
\newblock {\em Proceedings of Machine Learning Research}, 125:1--33.

\bibitem[Saito et~al., 2008]{Saito2008}
Saito, K., Nakano, R., and Kimura, M. (2008).
\newblock Prediction of information diffusion probabilities for independent
  cascade model.
\newblock In {\em Knowledge-Based Intelligent Information and Engineering
  Systems}, pages 67--75, Berlin, Heidelberg. Springer Berlin Heidelberg.

\bibitem[Streeter and Golovin, 2009]{streeter2009online}
Streeter, M. and Golovin, D. (2009).
\newblock An online algorithm for maximizing submodular functions.
\newblock In {\em Advances in Neural Information Processing Systems}, pages
  1577--1584.

\bibitem[Tang et~al., 2015]{Tang2015}
Tang, Y., Shi, Y., and Xiao, X. (2015).
\newblock Influence maximization in near-linear time: A martingale approach.
\newblock In {\em Proceedings of the 2015 ACM SIGMOD International Conference
  on Management of Data}, SIGMOD '15, pages 1539--1554, New York, NY, USA. ACM.

\bibitem[Tang et~al., 2014]{tang2014influence}
Tang, Y., Xiao, X., and Shi, Y. (2014).
\newblock Influence maximization: Near-optimal time complexity meets practical
  efficiency.
\newblock In {\em Proceedings of the 2014 ACM SIGMOD international conference
  on Management of data}, pages 75--86. ACM.

\bibitem[Vaswani et~al., 2017]{vaswani2017model}
Vaswani, S., Kveton, B., Wen, Z., Ghavamzadeh, M., Lakshmanan, L.~V., and
  Schmidt, M. (2017).
\newblock Model-independent online learning for influence maximization.
\newblock In {\em Proceedings of the 34th International Conference on Machine
  Learning-Volume 70}, pages 3530--3539. JMLR. org.

\bibitem[Vaswani et~al., 2015]{vaswani2015influence}
Vaswani, S., Lakshmanan, L. V.~S., and {Mark Schmidt} (2015).
\newblock {Influence maximization with bandits}.
\newblock In {\em NIPS workshop on Networks in the Social and Information
  Sciences 2015}.

\bibitem[Wang and Chen, 2017]{wang2017improving}
Wang, Q. and Chen, W. (2017).
\newblock {Improving regret bounds for combinatorial semi-bandits with
  probabilistically triggered arms and its applications}.
\newblock In {\em Neural Information Processing Systems}.

\bibitem[Wang and Chen, 2018]{wang2018}
Wang, S. and Chen, W. (2018).
\newblock {Thompson Sampling for Combinatorial Semi-Bandits}.

\bibitem[Wang et~al., 2020]{wang2020fast}
Wang, S., Yang, S., Xu, Z., and Truong, V.-A. (2020).
\newblock Fast thompson sampling algorithm with cumulative oversampling:
  Application to budgeted influence maximization.
\newblock {\em arXiv preprint arXiv:2004.11963}.

\bibitem[Wen et~al., 2017]{wen2017online}
Wen, Z., Kveton, B., Valko, M., and Vaswani, S. (2017).
\newblock {Online influence maximization under independent cascade model with
  semi-bandit feedback}.
\newblock In {\em Neural Information Processing Systems}.

\bibitem[Xia et~al., 2016]{xia2016budgeted}
Xia, Y., Qin, T., Ma, W., Yu, N., and Liu, T.-Y. (2016).
\newblock {Budgeted multi-armed bandits with multiple plays}.
\newblock In {\em Proceedings of the Twenty-Fifth International Joint
  Conference on Artificial Intelligence}, pages 2210--2216. AAAI Press.

\end{thebibliography}

\clearpage
\appendix
\onecolumn

\section{Proof of Proposition~\ref{prop:budgeted_regret}}\label{app:budgeted_regret}\begin{proof}Let $\alpha={1-{1}/{e}-\varepsilon}$.
In the proof, 
we shall consider several policies $\pi$ one after the other. In each case, we will denote by $S_t$ the seed selected by $\pi$ at round $t$, and $\tau_B$ the random round where $\pi$ has exhausted its budget. We denote $\pa{\cH_{t}}_{t\geq 1}$ the filtration corresponding to $\pa{\bW_t,\bC_t}_{t\geq 1}$. Recall that $S_t$ and $B_{t-1}$ are both measurable with respect to $\cH_{t-1}$. 

Consider first the policy $\pi$ that selects $S_t=S^\star\in \argmax_{S\subset V}{\EE{\sigma\pa{S;\bW}}}{\EE{\be_{S\cup\sset{0}}\transpose \bC}}^{-1}\!\!$ at each round $t\geq 1$. We can write
\begin{align*}
    F_{B}^\star + \abs{V}
    &\geq
    F_{B}^\star + {\EE{\sigma\pa{S^\star;\bW}}}
    \\&\geq
    F_{B}\pa{\pi} + {\EE{\sigma\pa{S^\star;\bW}}} & \text{definition of }F_{B}^\star
    \\&=
     \sum_{t\geq 1} \EE{\sigma\pa{S^\star;\bW_t}\II{B_{t-1}\geq 0}}
     \\&=
     \sum_{t\geq 1} \EE{{\EE{\sigma\pa{S^\star;\bW}}}\II{B_{t-1}\geq 0}} & \text{conditioning on }\cH_{t-1}
     \\&=\lambda^\star
     \sum_{t\geq 1} \EE{{\EE{\be_{S^\star\cup\sset{0}}\transpose \bC}}\II{B_{t-1}\geq 0}}
     \\&= \lambda^\star
     \sum_{t\geq 1} \EE{\pa{\Fixed{t} + \be_{S^\star}\transpose\bC_t}\II{B_{t-1}\geq 0}} & \text{conditioning on }\cH_{t-1}
     \\&\geq  \lambda^\star B & \text{definition of }\tau_B.
\end{align*}
We can use the inequality \begin{align}F_{B}^\star + \abs{V}\geq \lambda^\star B \label{F_B*}\end{align}
with any policy
 $\pi$ in the following ways: 
 \begin{itemize}
     \item First, we can bound the cost part in the cumulative gap: \begin{align*}\alpha\lambda^\star\EE{\sum_{t=1}^{\tau_B-1} {\EE{\be_{S_t\cup\sset{0}}\transpose \bC}}} &\leq
     \alpha\lambda^\star\sum_{t\geq 1} \EE{{\EE{\be_{S_t\cup\sset{0}}\transpose \bC}}\II{B_{t-1}\geq 0}} & B_{t}\geq 0 \imp B_{t-1}\geq 0
      \\&= \alpha\lambda^\star\sum_{t\geq 1} \EE{\pa{\be_{S_t}\transpose \bC_t+\Fixed{t}}\II{B_{t-1}\geq 0}} & \text{conditioning on }\cH_{t-1}
      \\&\leq \alpha\lambda^\star B+\alpha\lambda^\star\pa{1 +\abs{V}} &\text{definition of }\tau_B,
      \\&\leq \alpha F_{B}^\star + \alpha\abs{V} +\alpha\lambda^\star\pa{1 +\abs{V}} &\text{inequality }\eqref{F_B*}.
 \end{align*}
\item Next, we can bound the reward part:
\begin{align*}\EE{\sum_{t=1}^{\tau_B-1}{\EE{\sigma\pa{S_t;\bW}}} }&\geq
    \EE{\sum_{t=1}^{\tau_B}{\EE{\sigma\pa{S_t;\bW}}} } -\abs{V}\\&=  \sum_{t\geq 1} \EE{{\EE{\sigma\pa{S_t;\bW}}}\II{B_{t-1}\geq 0}}-\abs{V}
    \\&= 
    \sum_{t\geq 1} \EE{\sigma\pa{S_t;\bW_t}\II{B_{t-1}\geq 0}}-\abs{V}
    & \text{conditioning on }\cH_{t-1}
    \\&\geq 
    \sum_{t\geq 1} \EE{\sigma\pa{S_t;\bW_t}\II{B_{t}\geq 0}}-\abs{V} & B_{t}\geq 0 \imp B_{t-1}\geq 0
    \\&=
    F_B\pa{\pi}-\abs{V}.
\end{align*} \end{itemize}
Adding these two inequalities, we get the following upper bound on the cumulative gap:
\[\EE{\sum_{t=1}^{\tau_B-1} \Delta\pa{S_t}{}} = \alpha\lambda^\star\EE{\sum_{t=1}^{\tau_B-1} {\EE{\be_{S_t\cup\sset{0}}\transpose \bC}}} - \EE{\sum_{t=1}^{\tau_B-1}{\EE{\sigma\pa{S_t;\bW}}} } \leq R_{B,\varepsilon}\pa{\pi} + (\alpha+1)\abs{V} +\alpha\lambda^\star\pa{1 +\abs{V}}.\]

In the same way, we can derive a lower bound on $\EE{\sum_{t=1}^{\tau_B-1} \Delta\pa{S_t}{}}$, considering first the policy $\pi$ such that $F_{B}(\pi)=F_B^\star$: 

\begin{align*}
     F_B^\star&=
     \sum_{t\geq 1} \EE{\sigma\pa{S_t;\bW_t}\II{B_t\geq 0}}
    \\&\leq 
     \sum_{t\geq 1} \EE{\sigma\pa{S_t;\bW_t}\II{B_{t-1}\geq 0}} & B_{t}\geq 0 \imp B_{t-1}\geq 0
    \\&=
     \sum_{t\geq 1} \EE{{\EE{\sigma\pa{S_t;\bW}}}\II{B_{t-1}\geq 0}} & \text{conditioning on }\cH_{t-1}
    \\&\leq
     \sum_{t\geq 1} \EE{{\lambda^\star}{\EE{\be_{S_t\cup\sset{0}}\transpose \bC}}\II{B_{t-1}\geq 0}} & \text{definition of }\lambda^\star
    \\&=
      {\lambda^\star}\EE{\sum_{t=1}^{\tau_B}\pa{\Fixed{t} + \be_{S_t}\transpose\bC_t}} & \text{conditioning on }\cH_{t-1}
     \\&\leq
      {\lambda^\star}\pa{B+1 +\abs{V}} & \text{definition of }\tau_B.
\end{align*}
i.e., \begin{align} F_{B}^\star -\lambda^\star\pa{1 +\abs{V}} \leq {\lambda^\star}{B}. \label{F_B*b}\end{align}
Considering any policy $\pi$:


\begin{align*}\alpha\lambda^\star\EE{\sum_{t=1}^{\tau_B-1} {\EE{\be_{S_t\cup\sset{0}}\transpose \bC}}} &\geq
     \alpha\lambda^\star\sum_{t\geq 1} \EE{{\EE{\be_{S_t\cup\sset{0}}\transpose \bC}}\II{B_{t-1}\geq 1 +\abs{V}}} & B_{t-1}\geq 1 +\abs{V}\imp B_{t}\geq 0 \
      \\&= \alpha\lambda^\star\sum_{t\geq 1} \EE{\pa{\be_{S_t}\transpose \bC_t+\Fixed{t}}\II{B_{t-1}\geq 1 +\abs{V}}} & \text{conditioning on }\cH_{t-1}
      \\&\geq \alpha\lambda^\star B-\alpha\lambda^\star\pa{1 +\abs{V}} &\text{definition of }\tau_B.
      \\&\geq \alpha{F_{B}^\star }-2\alpha\lambda^\star\pa{1 +\abs{V}}&\text{inequality }\eqref{F_B*b},
 \end{align*}
 and
\begin{align*}\EE{\sum_{t=1}^{\tau_B-1}{\EE{\sigma\pa{S_t;\bW}}} }&\leq
    \EE{\sum_{t=1}^{\tau_B}{\EE{\sigma\pa{S_t;\bW}}} } \\&=  \sum_{t\geq 1} \EE{{\EE{\sigma\pa{S_t;\bW}}}\II{B_{t-1}\geq 0}}
    \\&= 
    \sum_{t\geq 1} \EE{\sigma\pa{S_t;\bW_t}\II{B_{t-1}\geq 0}} 
    & \text{conditioning on }\cH_{t-1}
    \\&\leq 
    \sum_{t\geq 1} \EE{\sigma\pa{S_t;\bW_t}\II{B_{t}\geq 0}} +\abs{V}
    \\&=
    F_B\pa{\pi} +\abs{V}.
\end{align*}
Again adding these two inequalities, we get the desired lower bound:
\[\EE{\sum_{t=1}^{\tau_B-1} \Delta\pa{S_t}{}} = \alpha\lambda^\star\EE{\sum_{t=1}^{\tau_B-1} {\EE{\be_{S_t\cup\sset{0}}\transpose \bC}}} - \EE{\sum_{t=1}^{\tau_B-1}{\EE{\sigma\pa{S_t;\bW}}} } \geq R_{B,\varepsilon}\pa{\pi} -2\alpha\lambda^\star\pa{1 +\abs{V}} -\abs{V}.\]
\end{proof}

\section{Proof of Theorem~\ref{thm:CUCBregret}}\label{app:CUCBregret}
\begin{proof}
Let $\alpha={1-{1}/{e}-\varepsilon}$, and $t\geq 1$.
From Proposition~\ref{prop:budgeted_regret}, we have to upper bound \[\EE{\sum_{t=1}^{\tau_B-1} \Delta\pa{S_t}}.\]
Fix $t\geq 1$. We consider the following events:
\[\mathfrak{W}_{t}\triangleq \sset{\forall ij\in E,~0\leq w_{ij}^\star-w_{ij,t}\leq 2\sqrt{\frac{1.5\log\pa{t}}{\counterw{ij}{t-1}}}},\]
\[\mathfrak{C}_{t}\triangleq \sset{\forall i\in V\cup\sset{0},~0\leq c_i^\star-c_{i,t}\leq 2\sqrt{\frac{1.5\log\pa{t}}{\counterc{i}{t-1}}}}.\]
We also consider the event $\mathfrak{A}_{t}$ under which the $\alpha-$approximation in Algorithm~\ref{algo:CUCB} holds. We already saw that \[\PP{\neg\mathfrak{A}_{t}}\leq \frac{1}{t\log^2\pa{t}}\cdot \]
From Hoeffding inequality, $\mathfrak{C}_{t}$ doesn't hold with probability bounded by $2\pa{\abs{V}+1}/{t^2},$ and $\mathfrak{W}_{t}$ doesn't hold with probability bounded by $2\abs{E}/t^2$. Thus, under the event that either $\mathfrak{C}_{t}$, $\mathfrak{W}_{t}$ or $\mathfrak{A}_{t}$ doesn't hold, then the regret is bounded by a constant (since we have a convergent series). 

We thus now assume that $\mathfrak{C}_{t}$, $\mathfrak{W}_{t}$ and $\mathfrak{A}_{t}$ hold. In particular, using $\mathfrak{C}_{t}$ and $\mathfrak{W}_{t}$, we can write
$$\lambda^\star=\frac{\sigma\pa{S^\star;\bw^\star}}{\be_{S^\star\cup\sset{0}}\transpose\bc^\star}\leq \frac{\sigma\pa{S^\star;\bw_t}}{\be_{S^\star\cup\sset{0}}\transpose\bc_t}\cdot$$
We can use this relation to write
\begin{align*}
    \Delta\pa{S_t}&=\lambda^\star\alpha\pa{\be_{S_t}\transpose \bc^\star+\fixed} - {\sigma\pa{S_t;\bw^\star}}\\&=
    \lambda^\star\alpha\pa{\pa{\be_{S_t}\transpose \bc^\star+\fixed} - \frac{\sigma\pa{S_t;\bw_t}}{\alpha\lambda^\star}} +\pa{\sigma\pa{S_t;\bw_t}-{\sigma\pa{S_t;\bw^\star}}}\\&\leq
    \lambda^\star\alpha\pa{\pa{\be_{S_t}\transpose \bc^\star+\fixed} - \frac{
    \sigma\pa{S_t;\bw_t}}{\alpha{\sigma\pa{S^\star;\bw_t}}}{\pa{{\be_{S^\star}\transpose \bc_t+c_{0,t}}}}} +\pa{\sigma\pa{S_t;\bw_t}-{\sigma\pa{S_t;\bw^\star}}}\\
    &\leq
    \lambda^\star\alpha\pa{\pa{\be_{S_t}\transpose \bc^\star+\fixed} - {\pa{{\be_{S_t}\transpose \bc_t+c_{0,t}}}}} +\pa{\sigma\pa{S_t;\bw_t}-{\sigma\pa{S_t;\bw^\star}}},
\end{align*}
where the last inequality is from $\mathfrak{A}_{t}$. Notice here that in the case the costs are known, the first term in this bound disappears, and we can then safely take $\lambda^\star=$0, explaining why in the final bound the term in front of $\lambda^\star$ disappears. We now use Fact~\ref{wang}, and then $\mathfrak{C}_{t}$, $\mathfrak{W}_{t}$ to further get the bound
$$\Delta\pa{S_t}\leq \underbrace{\lambda^\star\alpha\sum_{i\in S_t}{\pa{1\wedge 2\sqrt{\frac{1.5\log\pa{t}}{\counterc{i}{t-1}}}}}}_{\numterm{term1}} +  \underbrace{\abs{V}\sum_{ij\in E} p_i(S_t;\bw^\star)\pa{1\wedge 2\sqrt{\frac{1.5\log\pa{t}}{\counterw{ij}{t-1}}}}}_{\numterm{term2}}\cdot$$

Then, necessarily either $\Delta(S_t)\leq 2\cdot\eqref{term1}$ or $\Delta(S_t)\leq 2\cdot\eqref{term2}$ is true. The first event can be handle exactly as in standard combinatorial semi bandit settings, using the following upper bound on the expectation of the random horizon \citep{pmlr-v89-perrault19a}:
\[\EE{{\tau_B -1}}\leq \pa{2B/\fixed+1}^2.\]
This allows us to get a term of order \[\lambda^\star\log(B/\fixed)\sum_{i\in V
} \frac{\abs{V}}{\Delta_{i,\min}}\CommaBin\] in the regret upper bound.

For the second event, the analysis of \cite{wang2017improving} uses triggering probability group to deal with $p_i(S_t;\bw^\star)$. We propose here another method, simpler, that allows us to transform the factor $\abs{E}$, present in the bound of \cite{wang2017improving}, into $\max_{S\subset V}\sum_{i\in V}d_i p_i(S;\bw^\star)$, a potentially much smaller quantity. More precisely, we first use a reverse amortisation: 
\begin{align*}\Delta(S_t)\leq -\Delta(S_t) + 4\cdot\eqref{term2} &= 4\abs{V}\sum_{ij\in E} p_i(S_t;\bw^\star)\pa{1\wedge2\sqrt{\frac{1.5\log\pa{t}}{\counterw{ij}{t-1}}}-\frac{\Delta(S_t)}{4\abs{V}\sum_{k\in V}d_k p_k(S_t;\bw^\star)}}\\&\leq 4\abs{V}\sum_{ij\in E} p_i(S_t;\bw^\star)\II{\counterw{ij}{t-1} \leq 1.5\log(t)\pa{\frac{8\abs{V}\sum_{k\in V}d_k p_k(S_t;\bw^\star)}{\Delta(S_t)}}^2}\pa{1\wedge2\sqrt{\frac{1.5\log\pa{t}}{\counterw{ij}{t-1}}}}.\end{align*}
Since we have
\[\EE{\sum_{t=1}^{\tau_B-1} \Delta\pa{S_t}} \leq \EE{\sum_{t=1}^{\tau_B} \EEcc{\Delta\pa{S_t}}{\cH_{t-1}}},\]
and that 
$$\EEcc{p_i(S_t;\bw^\star)\II{\counterw{ij}{t-1} \leq 1.5\log(t)\pa{\frac{8\abs{V}\sum_{k\in V}d_k p_k(S_t;\bw^\star)}{\Delta(S_t)}}^2}{\pa{1\wedge 2\sqrt{\frac{1.5\log\pa{t}}{\counterw{ij}{t-1}}}}}}{\cH_{t-1}}$$
equals
$$\EEcc{\II{S_t\reach{\bw^\star} i}\II{\counterw{ij}{t-1} \leq 1.5\log(t)\pa{\frac{8\abs{V}\sum_{k\in V}d_k p_k(S_t;\bw^\star)}{\Delta(S_t)}}^2}\pa{1\wedge2\sqrt{\frac{1.5\log\pa{t}}{\counterw{ij}{t-1}}}}}{\cH_{t-1}},$$
it is sufficient to bound the quantity
$$\sum_{t=1}^{\tau_B} 4\abs{V}\sum_{ij\in E,~p_i(S_t;\bw^\star)>0} \II{S_t\reach{\bw^\star} i}\II{\counterw{ij}{t-1} \leq 1.5\log(t)\pa{\frac{8\abs{V}\sum_{k\in V}d_k p_k(S_t;\bw^\star)}{\Delta(S_t)}}^2}{\pa{1\wedge2\sqrt{\frac{1.5\log\pa{t}}{\counterw{ij}{t-1}}}}}\cdot$$

Therefore, counters $\counterw{ij}{t-1}$ are ensured to increase thanks to the event $\sset{S_t\reach{\bw^\star} i}$. We can now handle this exactly as in standard combinatorial semi bandit setting, to get a bound of order

$$\log(B/c_0^\star)\sum_{i\in V}d_i \frac{\abs{V}^2 \max_{S\subset V,~p_i(S;\bw^\star)>0}\sum_{k\in V}d_k p_k(S;\bw^\star) }{\Delta_{i,\min}}.$$
\end{proof}

\paragraph{Problem-independent bound}
The problem-independent bound of
$\cO\pa{\abs{V}\sqrt{B\log B\sum_{i\in V} d_i p_{i,\max}}}$
 is an immediate consequence of our problem-dependent bound, decomposing, classically, the regret in two terms by filtering by whether or not $\Delta(S_t)\leq \delta$, and then taking the worst regime for $\delta$.

\section{Proof of Proposition~\ref{prop:counter}}\label{app:counter}
\label{app:regret}
\begin{proof} First, notice that we trivially have
\[ \PP{\mathfrak{P}_{t}\text{ and }{p_i\pa{\sset{j};\bw^\star}}\leq \frac{8\delta\pa{t}}{\counterc{j}{t-1}}\text{ and }\frac{\delta\pa{t}p_i\pa{\sset{j};\bw^\star}}{\counterw{i}{t-1}}> \frac{8\delta\pa{t}}{\counterc{j}{t-1}}}=0.\]
Thus, 
let's  prove that
\[\PP{\mathfrak{P}_{t}\text{ and }{p_i\pa{\sset{j};\bw^\star}}> \frac{8\delta\pa{t}}{\counterc{j}{t-1}}\text{ and }\frac{\delta\pa{t}p_i\pa{\sset{j};\bw^\star}}{\counterw{i}{t-1}}> \frac{8\delta\pa{t}}{\counterc{j}{t-1}}}\leq 1/t^2.\]
We define another counter for $\pa{i,j}\in V^2$ as follows:\[N_{i,j,t-1}\triangleq \sum_{t'=1}^{t-1}\II{j\in S_{t'},~\sset{j}\reach{\bW_{t'}}i}.\]
Note that we have $N_{i,j,t-1} \leq \pa{ \counterw{i}{t-1}\wedge\counterc{j}{t-1}}$.
We can thus remove $\mathfrak{P}_{t}$ and replace $\counterw{i}{t-1}$ by $N_{i,j,t-1}$, since this can only increases the probability. By an union bound we have, 
\begin{align*}
    \PP{{p_i\pa{\sset{j};\bw^\star}}> \frac{8\delta\pa{t}}{\counterc{j}{t-1}}\text{ and }\frac{p_i\pa{\sset{j};\bw^\star}}{N_{i,j,t-1}}> \frac{8}{\counterc{j}{t-1}}}&\leq \sum_{t'>\frac{8\delta\pa{t}}{p_i\pa{\sset{j};\bw^\star}}}\PP{\counterc{j}{t-1}=t',~\frac{t'p_i\pa{\sset{j};\bw^\star}}{8}> N_{i,j,t-1}}.
\end{align*}
Since the random variables $\II{\sset{j}\reach{\bW_{t''}}i}$ are bernouillies of mean $p_i\pa{\sset{j};\bw^\star},$ we can apply the Fact~\ref{chernoff} to get
\[\PP{\counterc{j}{t-1}=t',~\frac{t'p_i\pa{\sset{j};\bw^\star}}{8}> N_{i,j,t-1}}\leq \exp\pa{-\pa{7/8}^28\delta\pa{t}/2}<1/t^3.\]
By taking $t'$ over $\sset{0,\dots,t-1}$, the proposition holds. 
\begin{fact}[Multiplicative Chernoff Bound \citep{mitzenmacher2017probability}]\label{chernoff}
Let $X_1,\dots,X_t$ be Bernoulli random variables, of parameter $\mu$, then for $Y=X_1+\cdots+X_t$, we have with $\delta\in (0,1),$\[\PP{Y\leq \pa{1-\delta}t\mu }\leq e^{-\delta^2 t \mu/2}.\]
\end{fact}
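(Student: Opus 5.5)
The plan is the standard exponential-moment (Cramér--Chernoff) argument for the lower tail. I will assume the $X_k$ are independent; the statement leaves this implicit, but it is needed and it holds in the application in the proof of Proposition~\ref{prop:counter}, where the indicators $\II{\sset{j}\reach{\bW_{t''}}i}$ come from independent rounds.

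First, fix $s>0$. Since $x\mapsto e^{-sx}$ is decreasing, Markov's inequality gives
\[\PP{Y\leq (1-\delta)t\mu}=\PP{e^{-sY}\geq e^{-s(1-\delta)t\mu}}\leq e^{s(1-\delta)t\mu}\,\EE{e^{-sY}}.\]
Second, independence lets me factor the moment generating function: $\EE{e^{-sY}}=\prod_{k=1}^t\EE{e^{-sX_k}}$. Each factor equals $1-\mu(1-e^{-s})$, and the elementary inequality $1+x\leq e^{x}$ bounds it by $\exp\pa{-\mu(1-e^{-s})}$. Hence
\[\PP{Y\leq (1-\delta)t\mu}\leq \exp\pa{t\mu\pa{s(1-\delta)-1+e^{-s}}}.\]

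Third, I optimize over $s$ by taking $s=-\log(1-\delta)>0$, which is valid since $\delta\in(0,1)$. Then $e^{-s}=1-\delta$, and the exponent becomes $t\mu\pa{-\delta-(1-\delta)\log(1-\delta)}$. This yields the classical bound $\pa{e^{-\delta}/(1-\delta)^{1-\delta}}^{t\mu}$.

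The only step that needs a small calculation, and so the main obstacle, is the last one. There I must show
\[g(\delta)\triangleq(1-\delta)\log(1-\delta)+\delta-\delta^2/2\geq 0 \quad\text{on }[0,1).\]
I would check that $g(0)=0$, $g'(\delta)=-\log(1-\delta)-\delta$, so $g'(0)=0$, and $g''(\delta)=\frac{1}{1-\delta}-1\geq 0$. Thus $g'\geq 0$ and $g\geq 0$. Plugging this in gives $-\delta-(1-\delta)\log(1-\delta)\leq-\delta^2/2$, hence $\PP{Y\leq(1-\delta)t\mu}\leq e^{-\delta^2t\mu/2}$. This is exactly the form used with $\delta=7/8$ in the proof of Proposition~\ref{prop:counter}.
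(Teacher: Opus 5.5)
Your proof is correct, and it is the standard Cram\'er--Chernoff argument: Markov's inequality on $e^{-sY}$, then $1+x\le e^{x}$ on each factor, then $s=-\log(1-\delta)$, then $(1-\delta)\log(1-\delta)\ge -\delta+\delta^2/2$. This is the proof in the cited Mitzenmacher--Upfal text, which the paper invokes without reproving. You are also right that independence must be added to the statement; note, however, that in Proposition~\ref{prop:counter} the rounds with $j\in S_{t'}$ are selected adaptively. What is needed there is that the indicators at the successive selections of $j$ still form an i.i.d.\ Bernoulli sequence, which holds because $S_{t'}$ is $\cH_{t'-1}$-measurable and $\bW_{t'}$ is independent of $\cH_{t'-1}$; it is not enough that distinct rounds are independent.
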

\end{proof}

\section{Proof of Proposition~\ref{prop:greedy}}\label{app:greedy}
\begin{proof} In the proof, we use the notation $\sigma\ret{i}{S}\triangleq \sigma\pa{\sset{i}\cup S}-\sigma\pa{S}$. For any $k\in [\abs{V}]$, 
\begin{align*}\nonumber
    \sigma\pa{S^\star} - \sigma\pa{S_{k-1}} &\leq \sum_{i\in S
    ^\star\backslash S_{k-1} }\sigma\ret{i}{S_{k-1}} &\text{Submodularity, monotonicity of }\sigma
    \\&\leq\nonumber \frac{\sigma\ret{i_k}{S_{k-1}} }{c_{i_k}}
    \sum_{i\in S
    ^\star\backslash S_{k-1} } c_{i} &\text{Algorithm~\ref{algo:greedy}}
    \\&\leq \nonumber \frac{\sigma\ret{i_k}{S_{k-1}} }{c_{i_k}}
    \sum_{i\in S
    ^\star } c_{i}.
\end{align*}
i.e., for all $k\in[\abs{V}]$ such that 
$\sigma\pa{S^\star} - \sigma\pa{S_{k-1}}\geq 0$,
\begin{align}\frac{c_{i_{k}}}{\be_{S^\star}\transpose\bc}\leq \frac{  \sigma\ret{i_k}{S_{k-1}} }{\sigma\pa{S^\star} - \sigma\pa{S_{k-1}}}
   \cdot \label{last}
\end{align}

There must be an index $\ell\in\sset{0,1,\dots,\abs{V}-1}$ such that $\be_{S_\ell}\transpose\bc\leq \be_{S^\star}\transpose\bc\leq \be_{S_{\ell+1}}\transpose\bc$. Let $\beta\in [0,1]$ be such that \begin{align}\be_{S^\star}\transpose\bc = \pa{1-\beta}\be_{S_\ell}\transpose\bc + \beta \be_{S_{\ell+1}}\transpose\bc.\label{combconv}\end{align}  If
$\sigma\pa{S^\star} - \pa{1-\beta}\sigma\pa{S_{\ell}} - \beta \sigma\pa{S_{\ell+1}}\leq 0$, then we have 
\[\pa{1-e^{-1}}\frac{\sigma\pa{S^\star}}{\be_{S^\star}\transpose\bc }\leq\frac{\sigma\pa{S^\star}}{\be_{S^\star}\transpose\bc }\leq \frac{\pa{1-\beta}\sigma\pa{S_{\ell}} + \beta \sigma\pa{S_{\ell+1}}}{\pa{1-\beta}\be_{S_\ell}\transpose\bc + \beta \be_{S_{\ell+1}}\transpose\bc}\cdot\] Else, \begin{align}\sigma\pa{S^\star} - \pa{1-\beta}\sigma\pa{S_{\ell}} - \beta \sigma\pa{S_{\ell+1}}> 0\quad\text{and}\quad\sigma\pa{S^\star} - \sigma\pa{S_{k}}> 0\quad\text{for all }k \in[\ell],\label{pos}\end{align} so
we can write the following, 
\begin{align*}\frac{  \sigma\pa{S^\star} - (1-\beta) \sigma\pa{S_{\ell}} -\beta \sigma\pa{S_{\ell+1}}}{\sigma\pa{S^\star}}&\leq
  \frac{  \sigma\pa{S^\star} - (1-\beta) \sigma\pa{S_{\ell}} -\beta \sigma\pa{S_{\ell+1}} }{\sigma\ret{S^\star}{\emptyset}} \\&=\frac{  \sigma\pa{S^\star} - \sigma\pa{S_{\ell}} -\beta \sigma\ret{i_{\ell+1}}{S_\ell} }{  \sigma\pa{S^\star} - \sigma\pa{S_{\ell}}  }\prod_{k\in[\ell]}\frac{  \sigma\pa{S^\star} - \sigma\pa{S_{k}} }{\sigma\pa{S^\star}- \sigma\pa{S_{k-1}}} 
  \\&= \pa{1-\frac{  \beta \sigma\ret{i_{\ell+1}}{S_{\ell}} }{\sigma\pa{S^\star} - \sigma\pa{S_{\ell}}}}
  \prod_{k\in[\ell]}\pa{1-\frac{  \sigma\ret{i_k}{S_{k-1}} }{\sigma\pa{S^\star} - \sigma\pa{S_{k-1}}}}
  \\&\leq
  \pa{1-\frac{\beta c_{i_{\ell+1}}}{\be_{S^\star}\transpose\bc}}
  \prod_{k\in[\ell]}\pa{1-\frac{ c_{i_{k}}{}}{\be_{S^\star}\transpose\bc}}&\eqref{last}\text{ and }\eqref{pos}
  \\&\leq
  \exp\pa{-\frac{{ \beta c_{i_{\ell+1}}{} + \sum_{k\in[\ell]} c\pa{i_{k}}{}}}{\be_{S^\star}\transpose\bc}} &1-x\leq e^{-x}
    \\&= \exp\pa{-\frac{{\pa{1-\beta}\be_{S_\ell}\transpose\bc + \beta \be_{S_{\ell+1}}\transpose\bc}}{\be_{S^\star}\transpose\bc}}
   =e^{-1}. &\eqref{combconv}
\end{align*}
Rearranging the inequality, we obtain the following:
\begin{align}
    \pa{1-e^{-1}}\sigma\pa{S^\star} \leq  (1-\beta) \sigma\pa{S_{\ell}} +\beta \sigma\pa{S_{\ell+1}}.
    \label{final}
\end{align}
i.e.,\[\pa{1-e^{-1}}\frac{\sigma\pa{S^\star}}{\be_{S^\star\cup\sset{0}}\transpose \bc }\leq \frac{\pa{1-\beta}\sigma\pa{S_{\ell}} + \beta \sigma\pa{S_{\ell+1}}}{\pa{1-\beta}\be_{S_\ell\cup\sset{0}}\transpose\bc + \beta \be_{S_{\ell+1}\cup\sset{0}}\transpose\bc}\cdot\]
The output $S$ of Algorithm~\ref{algo:greedy} maximizes the ratio of $\sigma\pa{S_k}/\be_{S_k\cup\sset{0}}\transpose\bc$ over $k$. 
Thus,
\[\max_{k\leq \ell+1}\frac{\sigma\pa{S_k}}{\be_{S_k\cup\sset{0}}\transpose\bc }\leq \frac{\sigma\pa{S}}{\be_{S\cup\sset{0}}\transpose \bc }\cdot\]
We end the proof remarking that
\[\max_{k\in \sset{\ell,\ell+1}}\frac{\sigma\pa{S_k}}{\be_{S_k\cup\sset{0}}\transpose\bc }\geq \frac{\pa{1-\beta}\sigma\pa{S_{\ell}} + \beta \sigma\pa{S_{\ell+1}}}{\pa{1-\beta}\be_{S_\ell\cup\sset{0}}\transpose\bc + \beta \be_{S_{\ell+1}\cup\sset{0}}\transpose\bc}\cdot\]
\end{proof}

\section{Proof of Theorem~\ref{thm:regret}}\label{app:regret}
\begin{proof}
Let $\alpha={1-{1}/{e}-\varepsilon}$, and $t\geq 1$.
From Proposition~\ref{prop:budgeted_regret}, we have to upper bound \[\EE{\sum_{t=1}^{\tau_B-1} \Delta\pa{S_t}}.\]
In the proof, in addition to $\mathfrak{P}_{t}\triangleq\sset{\forall i\in V, {\counterw{i}{t-1}} \geq \delta\pa{t}}$, we consider the following events:
\[\mathfrak{W}_{t}\triangleq \sset{\sum_{ij\in E}\counterw{i}{t-1}\pa{{w_{ij}^\star-\meanw{ij,t-1}}}^2 \leq 2 \delta\pa{t}},\]
\[\mathfrak{C}_{t}\triangleq \sset{\forall i\in V\cup\sset{0},~0\leq c_i^\star-c_{i,t}\leq 1\wedge 2\sqrt{\frac{1.5\log\pa{t}}{\counterc{i}{t-1}}}}.\]
\[\mathfrak{B}_{t}\triangleq \sset{\forall i,j\in V,~\frac{\delta\pa{t}p_i\pa{\sset{j};\bw^\star}}{\counterw{i}{t-1}}\leq \frac{8\delta\pa{t}}{\counterc{j}{t-1}}}.\]
As previously, we have the following upper bound on the expectation of the random horizon:
\[\EE{{\tau_B -1}}\leq \pa{2B/\fixed+1}^2.\]
For each node $i\in V,$ if $\counterw{i}{t-1}\geq \delta\pa{\tau_B-1}$, then $i$ will not be intentionally added to the seed set in \textsc{boim-cucb}$_5$.  Then, each node is intentionally added
for at most $\delta\pa{\tau_B-1}+1$ times. Thus, we can write
\[\EE{\sum_{t=1}^{\tau_B-1} \Delta\pa{S_t}\II{\neg\mathfrak{P}_{t}}}\leq \EE{\pa{\delta\pa{\tau_B-1}+1}\abs{V}\lambda^\star\pa{\abs{V}+1}}\leq \pa{\delta\pa{\pa{2B/\fixed+1}^2}+1}\abs{V}\lambda^\star\pa{\abs{V}+1}.\]
We can therefore assume that $\mathfrak{P}_{t}$ holds. In this case, we have by Proposition~\ref{prop:counter} that $\mathfrak{B}_{t}$ doesn't hold with probability bounded by $\abs{V}^2/t^2$. On the other hand, from Fact~\ref{degenne},
 $\mathfrak{W}_{t}$ doesn't hold with probability bounded by ${1}/\pa{t\log^2\pa{t}}$, and from Hoeffding inequality, $\mathfrak{C}_{t}$ doesn't hold with probability bounded by $2\pa{\abs{V}+1}/{t^2}.$ We can consider the event $\mathfrak{A}_{t}$ under which the $\alpha-$approximation in \textsc{boim-cucb}$_5$ holds. We already saw that \[\PP{\neg\mathfrak{A}_{t}}\leq \frac{1}{t\log^2\pa{t}}\cdot \]
The regret in the case one of the events $\neg\mathfrak{A}_{t},\neg\mathfrak{B}_{t},\neg\mathfrak{W}_{t},\neg\mathfrak{C}_{t}$ holds is thus bounded by a constant depending on~$\abs{V}$ and $\lambda^\star$.
It thus remains to upper bound \[\EE{\sum_{t=1}^{\tau_B-1} \Delta\pa{S_t}\II{\mathfrak{A}_{t},\mathfrak{B}_{t},\mathfrak{W}_{t},\mathfrak{C}_{t}}}.\]
For this, notice that from $\mathfrak{A}_{t}$, $S_t$ which is the seed set chosen by our policy at round $t$, is an $\alpha$-approximate maximizer of $A\mapsto f(A)/\pa{{\be_{A}\transpose \bc_t+c_{0,t}}}$, where $f$ is one of the optimistic spreads considered in \textsc{boim-cucb}$_5$. We thus have
\[ \frac{f(S_t)}{{{\be_{S_t}\transpose \bc_t+c_{0,t}}}} \geq \alpha\frac{f(S^\star)}{{{\be_{S^\star}\transpose \bc_t+c_{0,t}}}}\CommaBin\]
where $S^\star\in \argmax_{S\subset V}\frac{{\sigma\pa{S;\bw^\star}}}{{{\be_{S}\transpose \bc^\star+c^\star_{0}}}}\cdot$
Since under $\mathfrak{W}_{t},$ $f\pa{S^\star}\geq \sigma\pa{S^\star;\bw^\star}$, we can derive the following upper bound on the gap:
\begin{align*}
    \Delta\pa{S_t}&=\lambda^\star\alpha\pa{\be_{S_t}\transpose \bc^\star+\fixed} - {\sigma\pa{S_t;\bw^\star}}\\&=
    \lambda^\star\alpha\pa{\pa{\be_{S_t}\transpose \bc^\star+\fixed} - \frac{f\pa{S_t}}{\alpha\lambda^\star}} +\pa{f\pa{S_t}-{\sigma\pa{S_t;\bw^\star}}}\\&\leq
    \lambda^\star\alpha\pa{\pa{\be_{S_t}\transpose \bc^\star+\fixed} - \frac{f\pa{S_t}}{{\alpha f(S^\star)}}{\pa{{\be_{S^\star}\transpose \bc_t+c_{0,t}}}}} +\pa{f\pa{S_t}-{\sigma\pa{S_t;\bw^\star}}}&\mathfrak{W}_{t},\mathfrak{C}_{t}\\
    &\leq
    \lambda^\star\alpha\pa{\pa{\be_{S_t}\transpose \bc^\star+\fixed} - {\pa{{\be_{S_t}\transpose \bc_t+c_{0,t}}}}} +\pa{f\pa{S_t}-{\sigma\pa{S_t;\bw^\star}}}.
    &\mathfrak{A}_{t}
\end{align*}
From this point, we can use the condition satisfied by $f$ in \textsc{boim-cucb}$_5$:
\[f\pa{S_t}\leq {\sigma\pa{S_t;\vmeanw{t-1}} + \bonus_5\pa{S_t}}.\]
Using Fact~\ref{wang} with Fact~\ref{degenne}, we can further have with Cauchy-Schwartz inequality
\[{\sigma\pa{S_t;\vmeanw{t-1}}}- {\sigma\pa{S_t;\bw^\star}}\leq \bonus_5\pa{S_t}.\]
This allows us to get, using $\mathfrak{C}_{t},$
\begin{align*}
    \Delta\pa{S_t}&\leq \lambda^\star\alpha \sum_{i\in S_t\cup\sset{0}} 1\wedge 2\sqrt{\frac{1.5\log\pa{t}}{\counterc{i}{t-1}}} + 2\bonus_5\pa{S_t}.
    \end{align*}
    Since we have $\counterc{0}{t-1}=t,$ we can remove $\sset{0}$ in $S_t\cup\sset{0}$, by looking at the regret under the event $\sset{2\lambda^\star\alpha\sqrt{1.5\log\pa{t}/t}\leq \Delta\pa{S_t}/2}$, giving
    \begin{align}\label{gap_bound}
    \Delta\pa{S_t}&\leq 2\lambda^\star\alpha \sum_{i\in S_t} 1\wedge 2\sqrt{\frac{1.5\log\pa{t}}{\counterc{i}{t-1}}} + 4\bonus_5\pa{S_t}.
    \end{align}
    The regret upper bound in the case this event doesn't hold is bounded by a constant depending on the inverse of the squared minimum gap and $\lambda^\star$. The first part in \eqref{gap_bound} can be handle exactly as in standard combinatorial budgeted semi bandit settings, 
to get a term of order \[\lambda^\star\log(B/\fixed)\sum_{i\in V
} \frac{\abs{V}}{\Delta_{i,\min}}\CommaBin\] in the regret upper bound. We can use the analysis of \citet{Degenne2016} to deal with the second part, to get a term of order
\[\delta\pa{B/\fixed}\abs{V}^2\abs{E}\sum_{i\in V}\frac{\log^2\pa{\abs{V}}}{\Delta_{i,\min}}\CommaBin\]
in the regret upper bound. We thus get the desired result.
\end{proof}

\section{Proof of Theorem~\ref{thm:low}}\label{app:low}
\begin{proof}Let $\alpha={1-{1}/{e}-\varepsilon}$, and $t\geq 1$.
The beginning of the proof is the same as in Theorem~\ref{thm:regret}, except we no longer consider the event $\mathfrak{B}_{t}$, and we consider a new event:
\[\mathfrak{R}_{t}\triangleq \sset{\forall i\in V, \counterw{i}{t-1}\geq  \abs{E}\delta\pa{t}}.\]
As for Theorem~\ref{thm:regret}:
\begin{itemize}
    \item 
The regret in the case $\mathfrak{R}_{t}$ doesn't hold can be bounded by a term of order \[\lambda^\star{\abs{V}^2}\abs{E}\log\pa{B}.\]
\item
When all the events hold, the same analysis gives
\[\Delta\pa{S_t}\leq 2\lambda^\star\alpha \sum_{i\in S_t} 1\wedge 2\sqrt{\frac{1.5\log\pa{t}}{\counterc{i}{t-1}}} + 4\bonus_1\pa{S_t;\vmeanw{t-1}},\]
and the first term can be handled in the same way. 
\end{itemize}
The second term can be analyzed in the following way:
After bounding it by $4m\bonus\pa{S_t;\vmeanw{t-1}}$, see that using Fact~\ref{wang} on the quantity $p_i\pa{S;\vmeanw{t-1}}$ present in this bonus, we get
$$p_i\pa{S_t;\vmeanw{t-1}}\leq p_i\pa{S_t;\bw^\star}+\frac{1}{\abs{V}}\bonus\pa{S_t;\bw^\star}.$$
By subadditivity, and from $\mathfrak{R}_{t}$, we have
\begin{align*}4m\bonus\pa{S_t;\vmeanw{t-1}}&\leq4m\bonus\pa{S_t;\bw^\star}+ 4m\abs{V}\sqrt{\delta\pa{t}\sum_{i\in V}d_i\frac{\bonus\pa{S_t;\bw^\star}^2}{\abs{V}^2\counterw{i}{t-1}}}\\&\leq
4m\bonus\pa{S_t;\bw^\star}+ 4m\abs{V}\sqrt{\sum_{i\in V}d_i\frac{\bonus\pa{S_t;\bw^\star}^2}{\abs{V}^2\abs{E}}}
=8m\bonus\pa{S_t;\bw^\star}.
\end{align*}
We can now use the analysis of \citet{Degenne2016}, together with the one from \citet{wang2017improving} to deal with probabilistically triggered arms, to get in the regret upper bound a term of order
\[\delta\pa{B/\fixed}m^2\abs{V}^2\sum_{i\in V}d_i\frac{\log^2\pa{\abs{E}}}{\Delta_{i,\min}}\cdot\]
\vspace{-.6cm}

\noindent
\end{proof}

\section{\textsc{skim} for influence maximization with cost}\label{app:sketch}

In this section, we provide an adaptation of \textsc{skim} \citep{cohen2014sketch} to our ratio maximization setting. 
Let $\pa{\bw,\textsc{bonus}}=\pa{\bw_t,0}$ or $\pa{\vmeanw{t-1},\bonus_5}$ (depending if we want to maximize the \textsc{cucb} ratio or the $\bonus_5$ based ratio), \textsc{skim} for IC with weights $\bw$ can be used \citep{cohen2014sketch}, but instead of taking $k-1$ as the threshold for the length of the sketch of node $i$ (i.e. $i$ is chosen as soon as $\abs{\text{sketch}[i]}>k-1$), we rather consider  
\[{k c_i - {\frac{{\textsc{bonus}\pa{\sset{i}\cup S}-\textsc{bonus}\pa{S}}}{\abs{V}}\text{sketch}[i][-1]}},\]
where $\text{sketch}[i][-1]$ is the last added rank in $\text{sketch}[i]$ ($0$ if the sketch is empty), and $S$ is the current seed set built so far.
This way, we can estimate the (non-optimistic) marginal gain spread of the chosen node $i$ as
\[\frac{k \abs{V} c_i - {\pa{{\textsc{bonus}\pa{\sset{i}\cup S}-\textsc{bonus}\pa{S}}}{}\text{sketch}[i][-1]}}{\text{sketch}[i][-1]}\cdot\]
We get the optimistic version by adding ${{\textsc{bonus}\pa{\sset{i}\cup S}-\textsc{bonus}\pa{S}}}$, i.e., it is
\(\frac{k \abs{V} c_i }{\text{sketch}[i][-1]}\cdot\)
Finally, we get the ratio marginal gain by dived by $c_i$, giving $\frac{k \abs{V}  }{\text{sketch}[i][-1]}$. 
We do maximize the ratio marginal gain by doing this all procedure, because the ranks are examined in ascending order.
Notice we have to normalize costs at the beginning of the loop for finding $i$, such that each of the threshold are greater than $k-1$, to ensure that the length of the chosen sketch is at least $k$ and that our estimation hold with high probability.

It should be noted that the main difficulty is to go through the ranks in the right order while taking into account the cost. An alternative would have been to draw classical ranks and change them at each update of the sketch of $u$ according to the cost of $u$. This procedure works for estimating the spread, but we can't guarantee that the ranks are in the descending order of the marginal ratio, so all the marginal ratios must be estimated before selecting the maximizer.

Adapting Lemma 4.2 from \citet{cohen2014sketch}, we obtain that the expected total number of rank insertions at a particular node is $\cO\pa{k\log\pa{\abs{V}k\frac{c_{\max}}{c_{\min}}}}$, thus giving a global complexity of 
$\cO\pa{\abs{E} k\log\pa{\abs{V}k\frac{c_{\max}}{c_{\min}}}}$. We note the dependence in $c_min$, which although not desired, is only logarithmic. When all the costs are equal, we recover the standard \textsc{skim} complexity.

\section{Evaluating bonuses with sketches}
\label{app:evalbo}
In this section, we give details on the bonuses evaluation.
Notice that the optimistic spread
\begin{align}\sigma\pa{S;\vmeanw{t-1}}+\bonus_2\pa{S;\vmeanw{t-1}}=&\sum_{i\in V} p_i\pa{S;\vmeanw{t-1}}\pa{1+\abs{V}\sqrt{\frac{\delta\pa{t}d_i}{\counterw{i}{t-1}}}}\label{weighted_spread}\end{align}
\vspace{-.5cm}

\noindent
is actually a weighted spread, with weights $1+\abs{V}\sqrt{\frac{\delta\pa{t}d_i}{\counterw{i}{t-1}}}\cdot$
Thus, ranks used in the sketching have to be drawn from a distribution that
depends on these weights \citep{cohen2016minhash}. This can be done using the exponential or uniform distribution \citep{cohen1997size,cohen2007summarizing}. $\bonus_3\pa{A}$ is a square root of a weighted spread, and the same as above holds.
For $\bonus_1\pa{A}$, in addition to the above weighted consideration, with weights ${\delta\pa{t}d_i}/{\counterw{i}{t-1}},$ we have to take care of the squared probability
$p_i\pa{\sset{j};\vmeanw{t-1}}^2$. To do so, the graph is replicated in each instance of the sketching, i.e., instead of considering combined reachability sets with a single graph per instance, we consider two independent graphs per instance, and look at node-instance pairs satisfying the reachability on \emph{both} graphs.
Notice, we leverage here on the fact that we only evaluate on sets that are singletons $A=\sset{j}$.
Indeed, in this case, for a node $i\in V$, the probability that $A$ reaches $i$ on one instance, squared, is equal to the probability that some node in $A$ reaches $i$ on both instances. 

\vspace{-.2cm}
\section{Further experiments}
\label{app:exp}
In this section, we present other experiments that we conducted on a complete 10 node graph, with known costs $c_0^\star=1$, and for all $i\in V$, $c_i^\star$ is randomly drawn in $(0,1)$. We also chose $\bw^\star\sim U\pa{0,0.1}^{\otimes E}$, as in Section~\ref{sec:exp}. We compare 
 the \textsc{boim-cucb} algorithm to \textsc{boim-{cucb}-regularized}, another algorithm that might challenge \textsc{boim-cucb} in our setting. 
\textsc{boim-{cucb}-regularized} is exactly as \textsc{boim-cucb} except that the objective that is optimized is
$S\mapsto\sigma(S;\bw_t)-\lambda \be_S\bc_t,$
for $\lambda$ being an input parameter to the algorithm.
We can see that as for \textsc{boim-cucb}, this algorithm have the willingness to maximize the function $\sigma$ while minimizing the cost function. The fundamental difference is on the importance given to one or the other function, controled by $\lambda$.
 We use a greedy maximization in \textsc{boim-{cucb}-regularized}.
A greedy optimization of the objective $S\mapsto\sigma(S;\bw_t)-\lambda \be_S\bc_t$ is a heuristic which, although not supported in theory, performs well in practice. 

We run experiments over up to $T=10000$ rounds, on five different draws for $\bw^\star$ and $\bc^\star$, and $3$ different values $\lambda=2,3,4$. Results are shown in Figure~\ref{exp:regretbis}. We observe that \textsc{boim-cucb} is in general better than \textsc{boim-{cucb}-regularized}. If the variable $\lambda$ is properly chosen, performances similar to \textsc{boim-cucb} can be obtained.
This is not surprising since 
\textsc{boim-cucb} aims (but only approximatively) to select $S^\star_t\in \argmax_{S\subset V} \sigma(S;\bw_t)/ \be_{S\cup \sset{0}}$. If 
$\lambda = \sigma(S_t^\star;\bw_t)/ \be_{S_t^\star\cup \sset{0}}\bc_t$, then we also have that \textsc{boim-{cucb}-regularized} aims at choosing $S^\star_t$, since one can notice that
$S^\star_t \in \argmax_{S\subset V} \sigma(S;\bw_t)-\lambda \be_S\bc_t.$
\vspace{-.2cm}
\begin{figure}[t]
\begin{center}
\includegraphics[scale=.14]{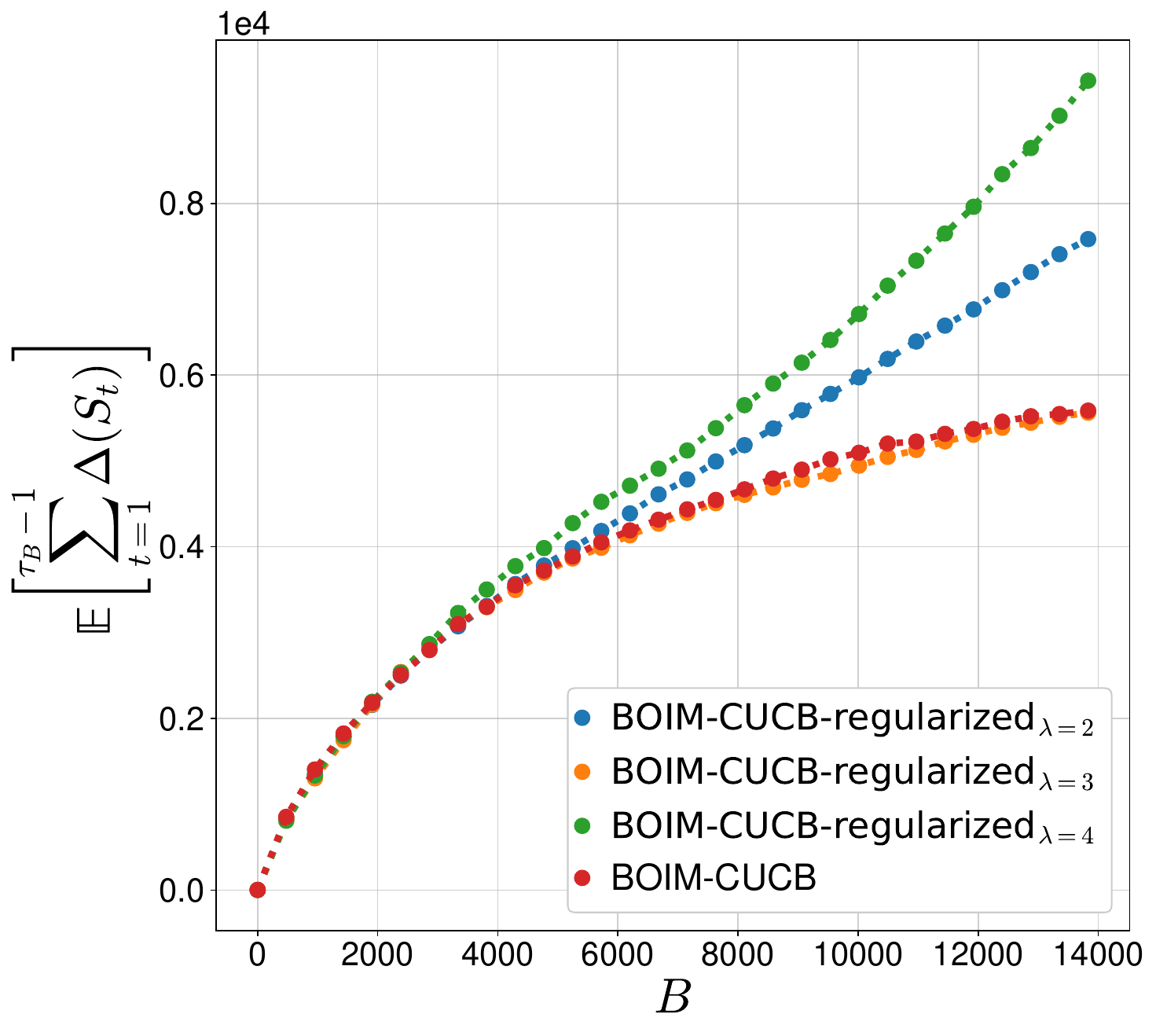}
\includegraphics[trim={3.28cm 0 0 0},clip,scale=.14]{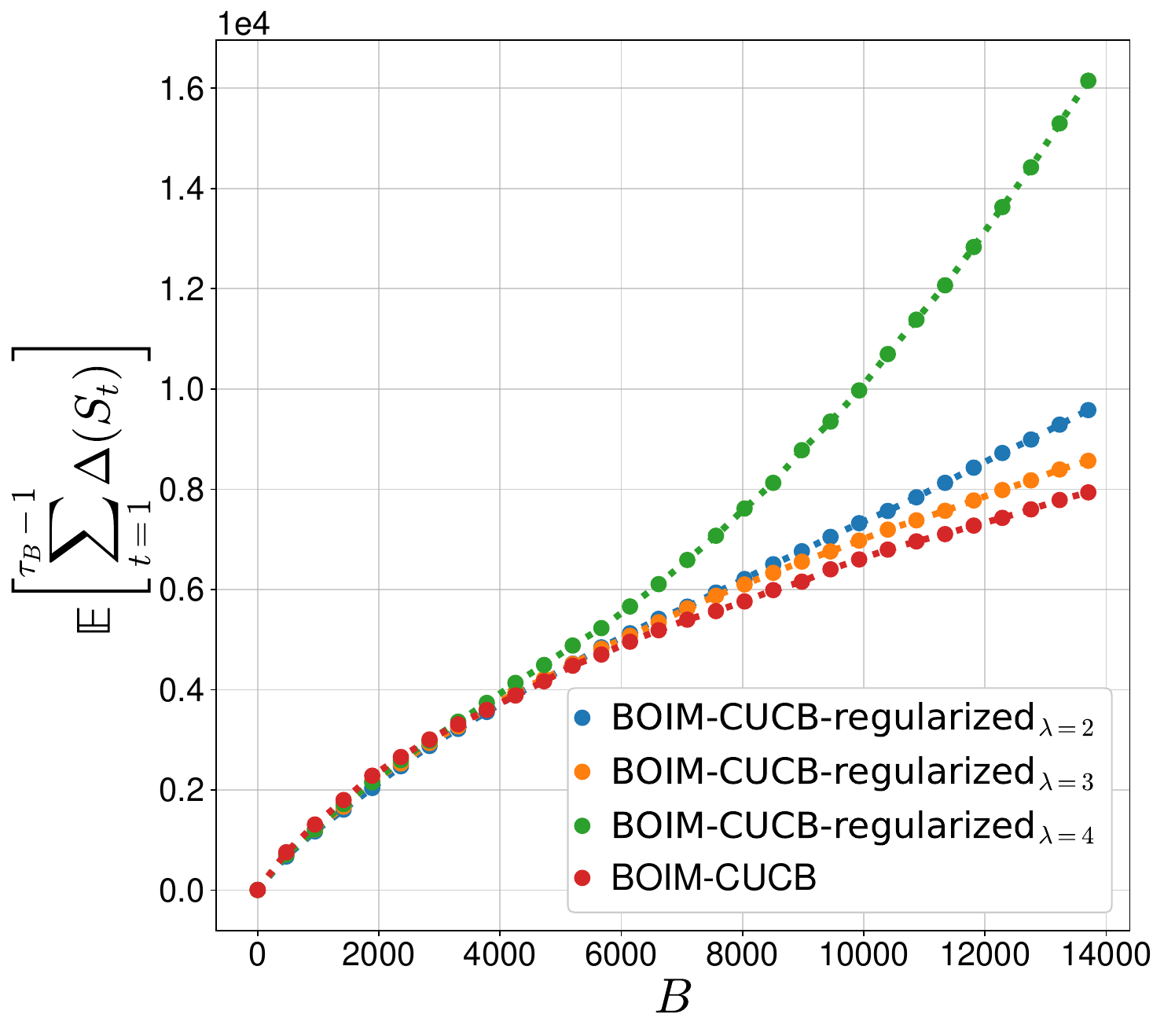}
\includegraphics[trim={3.28cm 0 0 0},clip,scale=.14]{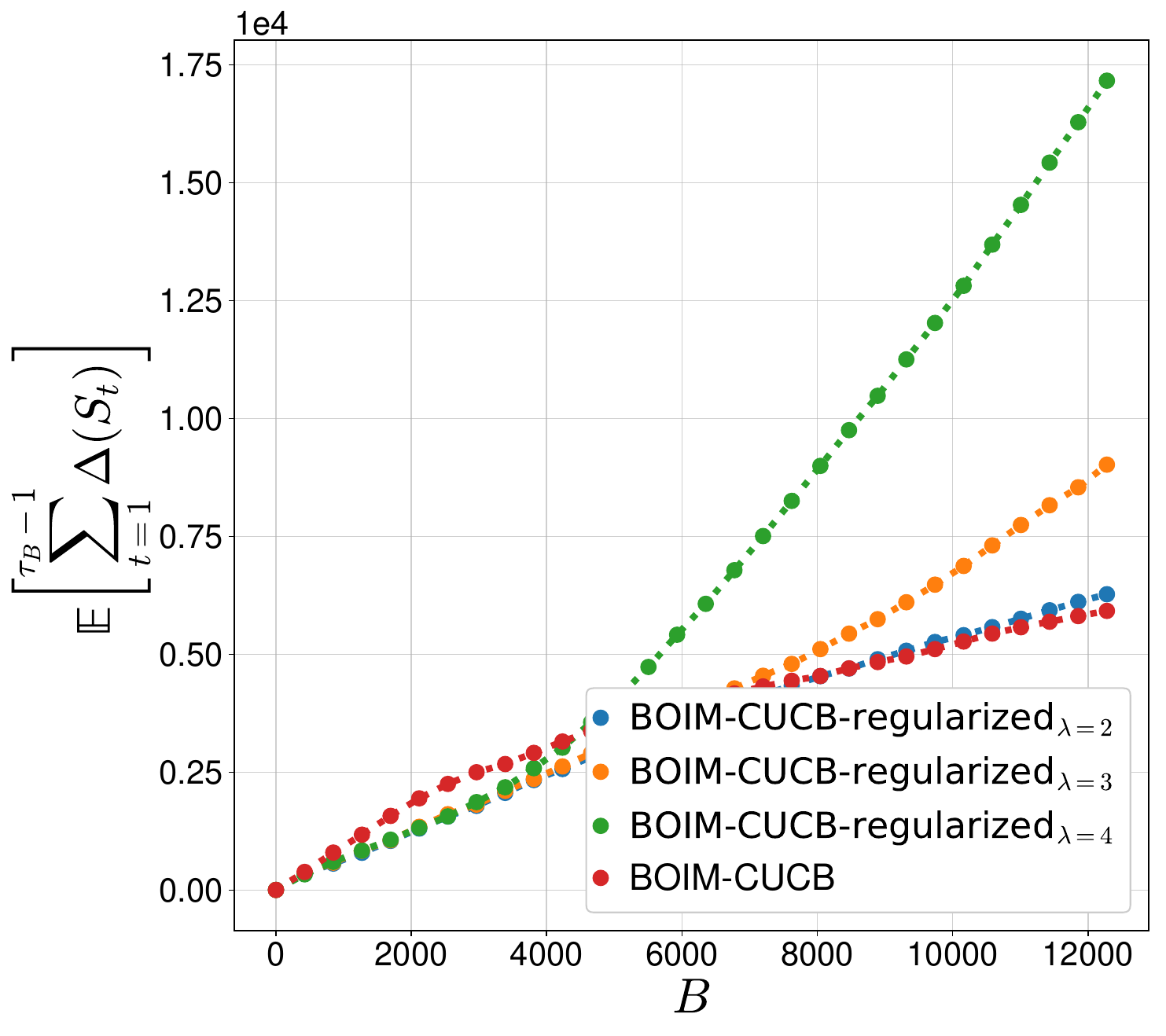}
\includegraphics[trim={3.28cm 0 0 0},clip,scale=.14]{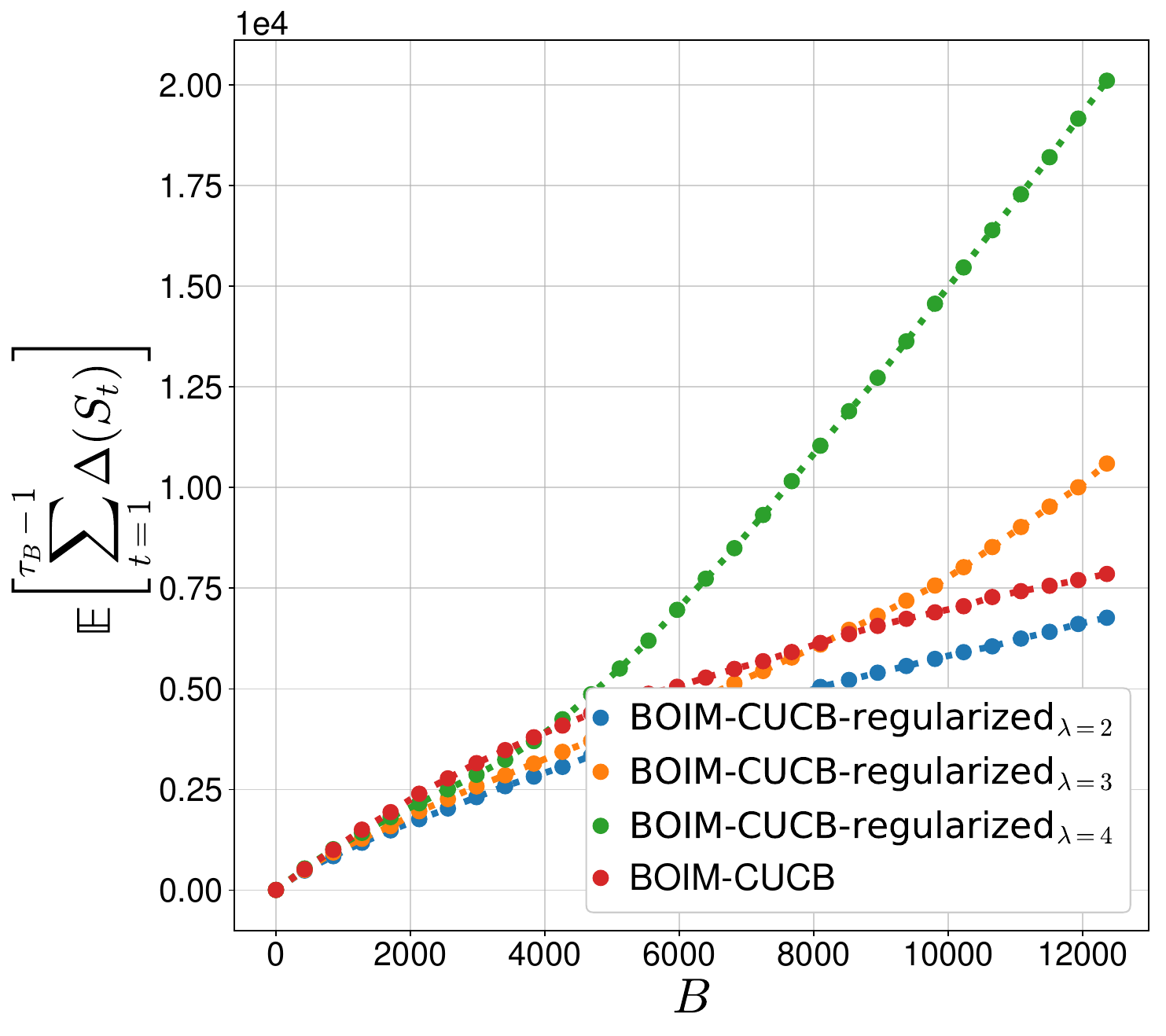}
\includegraphics[trim={3.28cm 0 0 0},clip,scale=.14]{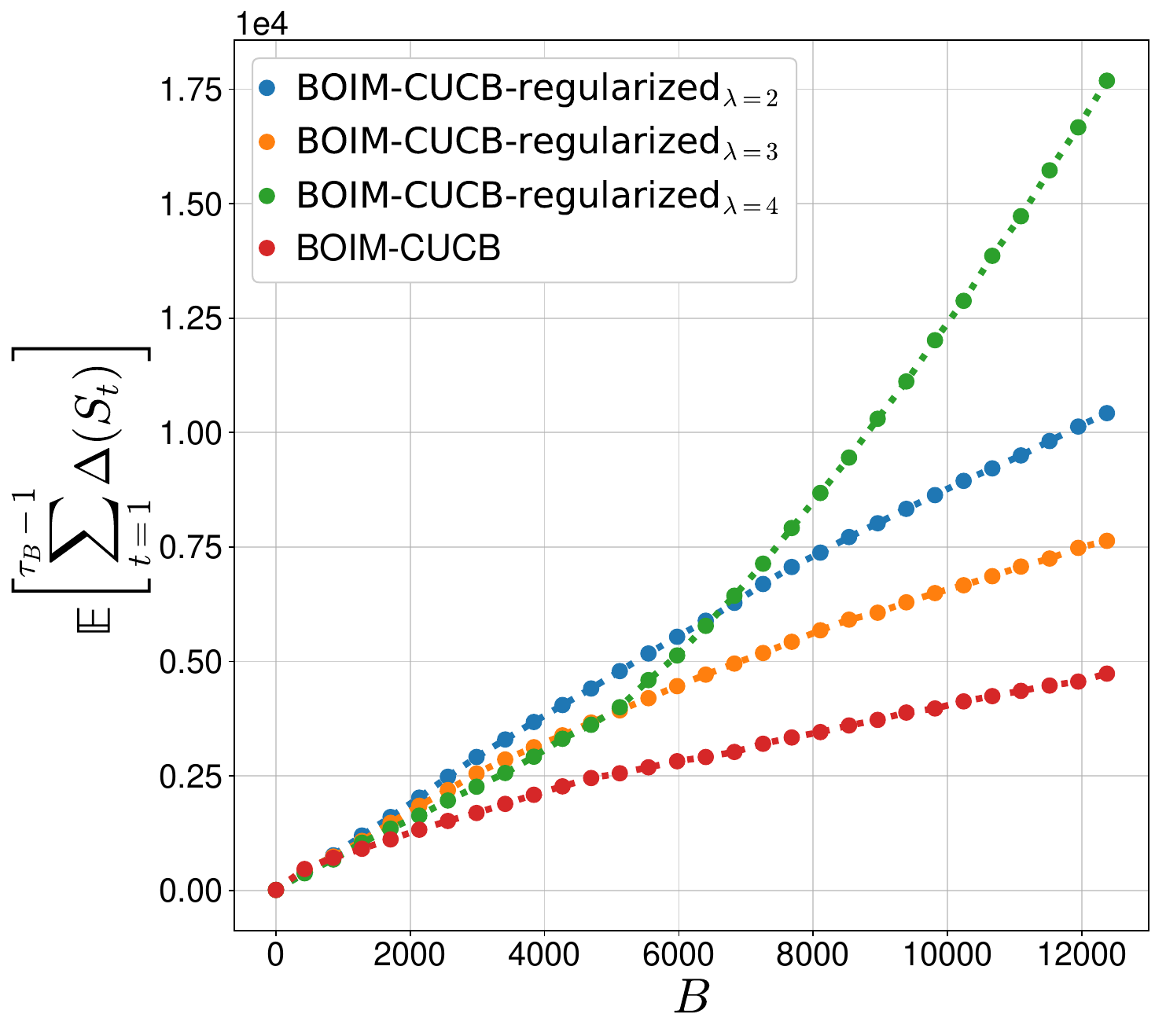}
\caption{Regret curves on five different problem instances, with respect to the budget $B$ (expectation computed by averaging over 10 independent simulations).}
\label{exp:regretbis}
\end{center}
\end{figure}

\section{Proof of Theorem~\ref{thm:jensen}}
\label{app:jensen}
\subsection{Preliminaries}
If we let $p_{S\reach{} S'}(\bw)\triangleq \PP{\sset{i\in V,~S\reach{\bW} i}=S'}$, then another expression is
\begin{align*}&\bonus_4(S;\bw)=\sum_{S'\supset S}p_{S\reach{} S'}(\bw)\underbrace{{\abs{V}\sqrt{\delta\pa{t}\sum_{i\in S'}\frac{d_i}{\counterw{i}{t-1}}}}}_{g(S')}\\&
=\sum_{k\geq 0} \pa{g(S_{k+1}')-g(S_{k}')} \sum_{S'\notin \sset{S_0',\dots,S_{k}'}}p_{S\reach{} S'}(\bw).
\end{align*}
where $g(S)=g(S_1')\leq g(S_2')\leq \dots $ and $S'_{0}=\emptyset$.

Since this bonus shall be used with $\bar\bw_{t-1}$, we need a smoothness inequality to link $p_{S\reach{} S'}(\bar\bw_{t-1})$ to $p_{S\reach{} S'}(\bw^\star)$. We prove here the following such inequality.
\begin{proposition}\label{prop:jensen} For all $S\subset V$, all $\bw,\bw'\in [0,1]^E$ and all collection of subsets of vertices $\cS$, we have
$$\abs{\sum_{S'\in \cS}\!\pa{p_{S\reach{} S'}(\bw)\!-\!p_{S\reach{} S'}(\bw')}}\!\leq\!\! \sum_{ij\in E}\!p_i(S;\!\bw)\abs{w'_{ij}\!-\!w_{ij}}.$$
\end{proposition}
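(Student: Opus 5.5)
We prove the inequality by a hybrid (telescoping) argument over the edges together with a coupling, which is how Fact~\ref{wang} is usually proved. Enumerate the edges $E=\sset{e_1,\dots,e_{\abs{E}}}$. For $k\in\sset{0,\dots,\abs{E}}$, let $\bw^{(k)}$ agree with $\bw'$ on $e_1,\dots,e_k$ and with $\bw$ on the remaining edges, so that $\bw^{(0)}=\bw$ and $\bw^{(\abs{E})}=\bw'$. Write $P_{\cS}(\bw)\triangleq\sum_{S'\in\cS}p_{S\reach{}S'}(\bw)=\PP{R_S(\bW)\in\cS}$, where $R_S(\bW)$ is the random set of $S$-reachable nodes. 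Then
\[P_{\cS}(\bw)-P_{\cS}(\bw')=\sum_{k=1}^{\abs{E}}\pa{P_{\cS}(\bw^{(k-1)})-P_{\cS}(\bw^{(k)})},\]
and by the triangle inequality it suffices to bound each increment, for the edge $e_k=ij$, by $p_i(S;\bw)\abs{w_{ij}-w'_{ij}}$.

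\textbf{One-edge change.} The two weight vectors $\bw^{(k-1)}$ and $\bw^{(k)}$ differ only on $ij$. Couple the two diffusions: sample all other edges identically, and for $ij$ use a single uniform $U$, opening the edge iff $U\leq w^{(k-1)}_{ij}$ (respectively $U\leq w^{(k)}_{ij}$). The event $\sset{R_S\in\cS}$ can differ between the two coupled samples only if two things happen. First, the status of $ij$ differs, which has probability $\abs{w_{ij}-w'_{ij}}$ and depends only on $U$. Second, $i$ is reachable from $S$ without using the edge $ij$; otherwise the edge has no influence on $R_S$. Reachability of $i$ never uses an out-edge of $i$, because a simple path to $i$ does not leave $i$. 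So the second event depends only on the other edges and is independent of $U$. Its probability is $\PP{S\reach{\bW^{(k)}} i}$ computed under the weights $\bw^{(k)}$ with $ij$ ignored, which is the same under $\bw^{(k-1)}$. The increment is therefore at most $p_i(S;\bw^{(k)})\abs{w_{ij}-w'_{ij}}$, where $p_i(\cdot)$ does not depend on $w_{ij}$.

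\textbf{Main obstacle.} The hybrid weight $\bw^{(k)}$ mixes $\bw$ and $\bw'$, so $p_i(S;\bw^{(k)})$ is not directly $p_i(S;\bw)$. This is where I expect difficulty. My first attempt is to order the edges so that this discrepancy disappears, namely in a topological/BFS-like order in which the edges changed before $ij$ cannot affect reachability of $i$. That only works for DAGs, so for general graphs I would instead use a different coupling. Run a single exploration process (BFS from $S$) in which each edge is revealed when its tail is first reached. Make the $\bw$-process and the $\bw'$-process share uniforms $U_{ij}$, and stop at the first edge where the two processes disagree. Up to that edge the two explorations are identical and are driven by $\bw$. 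The probability that the first disagreement occurs at $ij$ is then at most
\[\PP{i\text{ reached in the }\bw\text{-exploration}}\cdot\abs{w_{ij}-w'_{ij}}=p_i(S;\bw)\abs{w_{ij}-w'_{ij}}.\]
If no disagreement occurs, then $R_S(\bW)=R_S(\bW')$. A union bound over $ij\in E$ then gives
\[\abs{P_{\cS}(\bw)-P_{\cS}(\bw')}\leq\PP{R_S(\bW)\neq R_S(\bW')}\leq\sum_{ij\in E}p_i(S;\bw)\abs{w'_{ij}-w_{ij}},\]
uniformly in $\cS$. The delicate point is justifying that, conditionally on the exploration reaching $i$ with identical history, $U_{ij}$ is still fresh and therefore independent of that history. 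This holds because each edge is revealed at most once in the exploration.
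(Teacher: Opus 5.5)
Your final argument is correct and is essentially the paper's proof. Couple the two diffusions, locate the first edge $ij$ at which they disagree, observe that its tail $i$ is $S$-reachable under $\bw$ while the status of $ij$ is independent of that event (a path to $i$ never needs an out-edge of $i$), and take a union bound over edges; the hybrid/telescoping preamble can simply be dropped. If anything, your shared-uniform BFS coupling is cleaner than the paper's, which assumes $\bw'\geq\bw$ ``w.l.o.g.'' (not really harmless, since the right-hand side involves $p_i(S;\bw)$ and the two vectors need not be comparable) and routes the bound through a monotonicity claim on $p_{S\reach{} S'}$, whereas you bound everything directly by $\PP{R_S(\bW)\neq R_S(\bW')}$.
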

\begin{proof}
We assume w.l.o.g. that $\bw'\geq \bw$.
We consider the random graph $G_{\bW}=\pa{V,\sset{ij\in E, W_{ij}=1}}$, where  $\bW\sim\otimes_{ij \in E}\Bernoulli\pa{w_{ij}}$. We build $G_{\bW'}$ from $G_{\bW}$ by adding edges $ij$ independently with probability $\frac{w'_{ij}-w_{ij}}{1-w_{ij}}$ for each $ij$ that is not an edge in $G_{\bW}$. 
Now, see that
$$\sum_{S'\in \cS} p_{S\reach{} S'}(\bw)=\PP{S\reach{\bW} S'}-\sum_{S'\notin \cS} p_{S\reach{} S'}(\bw),$$
where $S\reach{\bW} S'$ means $S\reach{\bW} i$ for all $i\in S'$. Thus, 
\begin{align*}
    0\leq p_{S\reach{} S'}(\bw')-p_{S\reach{} S'}(\bw) &= \PP{S\reach{\bW'} S'}-\PP{S\reach{\bW} S'} -\pa{\sum_{S'\notin \cS} p_{S\reach{} S'}(\bw')-\sum_{S'\notin \cS} p_{S\reach{} S'}(\bw)}
    \\&\leq \PP{S\reach{\bW'} S'}-\PP{S\reach{\bW} S'}
    \\&= \PP{S\reach{\bW'} S'~\text{but not}~S\reach{\bW} S'}
    \\&\leq \PP{\text{There is an edge }ij \text{ s.t. }  S\reach{\bW} i,~W_{ij}=0,\text{ and }W'_{ij}=1}. 
\end{align*}
The last inequality is by noticing that if $S\reach{\bW'} S'~\text{but not}~S\reach{\bW} S'$, then there must be a edge $ij$ accessible from $S$ in $G_{\bW'}$ such that $W_{ij}=0$ and $W'_{ij}=1$. Taking the first such edge $ij$ (watching the contagion spread from $S$ step by step), we see that $ij$ must be accessible from $S$ in $G_{\bW}$ as well (since otherwise there's a previous accessible edge $k\ell$ that verifies $W_{k\ell}=0$ and $W'_{k\ell}=1$).

We have that $  S\reach{\bW} i$ is independent from $W_{ij},W'_{ij}$. Since $\PP{W_{ij}=0,\text{ and }W'_{ij}=1}=(1-w_{ij})\frac{w'_{ij}-w_{ij}}{1-w_{ij}}=w'_{ij}-w_{ij}$, we have
$$\PP{\text{There is an edge }ij \text{ s.t. }  S\reach{\bW} i,~W_{ij}=0,\text{ and }W'_{ij}=1}\leq \sum_{ij\in E}p_i(S;\bw)\pa{w'_{ij}-w_{ij}}.$$
\end{proof}

\subsection{Main proof of Theorem~\ref{thm:jensen}}
\begin{proof}
We apply a similar analysis as above. When all the events hold, the same analysis gives
\[\Delta\pa{S_t}\leq 2\lambda^\star\alpha \sum_{i\in S_t} 1\wedge 2\sqrt{\frac{1.5\log\pa{t}}{\counterc{i}{t-1}}} + 4\bonus_4\pa{S_t;\vmeanw{t-1}},\]
and the first term can be handled in the same way. 
The second term can be analyzed in the following way: Using Proposition~\ref{prop:jensen} with $\cS=\sset{S'\subset V,~S'\notin \sset{S'_0,\dots,S'_k}}$, we get
\begin{align*}4\bonus_4\pa{S_t;\vmeanw{t-1}}&\leq 4\bonus_4\pa{S_t;\bw^\star} + \sum_{k\geq 0} \pa{g(S_{k+1}')-g(S_{k}')} \frac{1}{\abs{V}}\bonus_4\pa{S_t;\bw^\star}\\&=\pa{4+\sqrt{\delta(t)\sum_{i\in V}\frac{d_i}{\counterw{i}{t-1}}}}\bonus_4\pa{S_t;\bw^\star}\\
&\leq 5\bonus_4\pa{S_t;\bw^\star},\end{align*}
where the last inequality uses the event \[\mathfrak{R}_{t}\triangleq \sset{\forall i\in V, \counterw{i}{t-1}\geq  \abs{E}\delta\pa{t}}.\] 

Relying on Theorem~\ref{thm:jesen}, we can deal with this last term and obtain a term of order
$$ \delta{B/c_0^\star}\sum_{i\in V}\frac{\abs{V}^2d_i\log^2\pa{\abs{E}}}{\Delta_{i,\min}}.$$
\end{proof}

\section{Proof of Proposition~\ref{prop:greedy_bis}}\label{app:greedy_bis}
\begin{proof}
There are two possibilities for $S^\star$: either $\EE{\be_{S^\star\cup\sset{0}}\transpose \bc^\star}<b$, or $\EE{\be_{S^\star\cup\sset{0}}\transpose \bc^\star}=b$. In the first case, we know 
 that $S^\star$ is not random. Indeed, if it is not the case,  then
 $\frac{\EE{\sigma(S^\star)}}{\EE{\be_{S^\star\cup\sset{0}}\transpose \bc^\star}}$ is a convex combination of some $\frac{{\sigma(S)}}{{\be_{S\cup\sset{0}}\transpose \bc^\star}}$ for $S$ in the support of the distribution of $S^\star$. Necessarily, the maximizer (over $S$ in the support) of the ratio is such that $\be_{S\cup\sset{0}}\transpose \bc^\star>b$, since otherwise this maximizer contradicts the definition of $S^\star$. Therefore, increasing the coefficient of this maximizer in the convex combination increases $\EE{\be_{S^\star\cup\sset{0}}\transpose \bc^\star}$, which can thus be set to $b$.
 Since this also increases
 $\frac{\EE{\sigma(S^\star)}}{\EE{\be_{S^\star\cup\sset{0}}\transpose \bc^\star}}$, we get a contradiction since we improved the solution $S^\star$ while still satisfying the constraint.
 \begin{itemize}
     \item 
 Consider the first case. 
We have as for the proof of Proposition~\ref{prop:greedy}, that
\[\pa{1-e^{-1}}\frac{\sigma\pa{S^\star}}{\be_{S^\star\cup\sset{0}}\transpose \bc^\star }\leq \frac{\pa{1-\beta}\sigma\pa{S_{\ell}} + \beta \sigma\pa{S_{\ell+1}}}{\pa{1-\beta}\be_{S_\ell\cup\sset{0}}\transpose\bc^\star + \beta \be_{S_{\ell+1}\cup\sset{0}}\transpose\bc^\star}\CommaBin\]
where $\ell\in\sset{0,1,\dots,\abs{V}-1}$ is such that $\be_{S_\ell}\transpose\bc\leq \be_{S^\star}\transpose\bc\leq \be_{S_{\ell+1}}\transpose\bc$, and $\be_{S^\star\cup\sset{0}}\transpose \bc^\star =\pa{1-\beta}\be_{S_\ell\cup\sset{0}}\transpose\bc^\star + \beta \be_{S_{\ell+1}\cup\sset{0}}\transpose\bc^\star$.
In the case $S_\ell$ has a greater ratio than $S_{\ell+1}$, it is chosen by our algorithm and has the desired approximation. In the case $S_{\ell+1}$ has the better ratio, it is chosen if its cost is lower than $b$. If its cost is greater than~$b$, then $\ell+1=j$ and the algorithm chooses $S_\ell$ with some probability $1-\beta'$ and $S_{\ell+1}$ with probability $\beta'$. The goal is to show that the coefficient $\beta'$ we use for $S_{\ell+1}$ is greater than $\beta$. This must be the case since $\pa{1-\beta'}\be_{S_\ell\cup\sset{0}}\transpose\bc^\star + \beta' \be_{S_{\ell+1}\cup\sset{0}}\transpose\bc^\star=b>\be_{S^\star\cup\sset{0}}\transpose \bc^\star = \pa{1-\beta}\be_{S_\ell\cup\sset{0}}\transpose\bc^\star + \beta \be_{S_{\ell+1}\cup\sset{0}}\transpose\bc^\star.$ 
\item For the second case, we let $S$ be the output of the Algorithm~1 considered by \citet{wang2020fast}. We thus have from their Theorem~1 that
$\pa{1-e^{-1}}\EE{\sigma\pa{S^\star}}\leq \EE{\sigma(S)}$.
Since $\EE{\be_{S\cup\sset{0}}\transpose\bc^\star}=\EE{\be_{S^\star\cup\sset{0}}\transpose\bc^\star}=b$, we have
$$\pa{1-e^{-1}}\frac{\EE{\sigma\pa{S^\star}}}{\EE{\be_{S^\star\cup\sset{0}}\transpose\bc^\star}}\leq \frac{\EE{\sigma(S)}}{\EE{\be_{S\cup\sset{0}}\transpose\bc^\star}}\cdot$$
If the expected cost of the output $S'$ of our algorithm is $b$, then both algorithms coincides and we have the desired result. Else, we have that $S'$ maximizes the ratio over $\sset{S_0,\dots,S_{j}}$, which contains the support of $S$ (that is $\sset{S_{j-1},S_{j}}$), so the ratio evaluated at $S'$ is greater than $\frac{\EE{\sigma(S)}}{\EE{\be_{S\cup\sset{0}}\transpose\bc^\star}}$, giving again the desired result.
 \end{itemize}
\end{proof}

\section{Generalities on combinatorial multi-armed bandits}
In this section, $i$ represent an ``arm", i.e., an edge in our OIM context. $A_t$ is the random set of edges that are triggered at round $t$. Here, the horizon $T$ can be random.
Finally, $b_i(S_t)$ is simply some non-negative function (for our $\bonus_4$, this is $\abs{V}$ times the square root of the out-degree) and $m_i$ is the maximum number of edges that can be reached when $i$ is activated. The following theorem is based on \citet{perrault2020covariance}, Theorem~4.
\begin{theorem}[Regret bound for $\ell_2$-bonus, with expectation outside the norm]\label{thm:jesen}For all $i\in [n]$, let $\pa{\alpha_i,\beta_{i,T}}\in [1/2,1)\times \R_+$.
For $t\geq 1$,
consider the event
$$\mathfrak{A}_t\triangleq\sset{\Delta_t\leq
\EEcc{\norm{\sum_{i\in A_t,N_{i,t-1}>0}\frac{b_i(S_t)\beta_{i,T}^{\alpha_i}\be_i}{N_{i,t-1}^{\alpha_i}}}_2}{\cF_t}},$$

Then, if $\sset{t\leq T}\in \cF_t$, we have $$\EE{\sum_{t=1}^T\II{\mathfrak{A}_t}\Delta_t}\leq \sum_{i\in [n]}4\log_2(4\sqrt{m_i})\max_{S\in \cS,~p_i(S)>0} b_i(S)^{\frac{1}{\alpha_i}}{\EE{\beta_{i,T}}\eta_i},$$
where
\begin{align*}\eta_i=
 \left\{
    \begin{array}{ll}
        {32\log_2(4\sqrt{m_i})}{\Delta_{i,\min}^{-1}} &\mbox{if } \alpha_i=1/2 \\
        2^{\frac{2}{\alpha_i}}\pa{\pa{1-2^{\frac{1}{\alpha_i}-2}}\pa{1-\alpha_i}\Delta_{i,\min}^{\frac{1-\alpha_i}{\alpha_i}}}^{-1} & \mbox{if }  1/2<\alpha_i<1.
    \end{array}
\right.
\end{align*}
\end{theorem}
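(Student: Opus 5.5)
Theorem~\ref{thm:jesen} is a regret bound for a combinatorial semi-bandit with probabilistically triggered arms and an $\ell_2$-type bonus. I plan to prove it by adapting the layered counting argument of \citet{Degenne2016} and \citet{perrault2020covariance}, Theorem~4. There are two genuinely new features: the conditional expectation sits outside the norm, and the horizon $T$ is random.

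\textbf{Removing the expectation.} On $\mathfrak{A}_t$, write $x_i = b_i(S_t)\beta_{i,T}^{\alpha_i} N_{i,t-1}^{-\alpha_i}$ for triggered arms $i\in A_t$. I will use reverse amortisation, as in the proof of Theorem~\ref{thm:CUCBregret}:
\[\Delta_t \le 2\,\EEcc{\norm{\textstyle\sum_{i\in A_t} x_i\be_i}_2}{\cF_t} - \Delta_t .\]
Since $\Delta_t$ is $\cF_t$-measurable, I can move it inside the conditional expectation. Summing over $t\le T$ and using $\sset{t\le T}\in\cF_t$ with the tower rule, it then suffices to bound
\[\EE{\sum_{t\le T}\pa{2\norm{\textstyle\sum_{i\in A_t}x_i\be_i}_2-\Delta_t}^+}.\]
In this quantity only arms that are actually triggered appear, so their counters $N_{i,t}$ do increase. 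This step is the analogue of replacing $p_i(S_t;\bw^\star)$ by $\II{S_t\reach{} i}$ in the earlier proof.

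\textbf{Layered decomposition.} Fix a round $t$ where the positive part is nonzero. Then $\norm{x}_2 > \Delta_t/2$, and each $x_i\le b_i(S_t)\beta_{i,T}^{\alpha_i}N_{i,t-1}^{-\alpha_i}$. At most $m_i$ arms are triggered together with $i$, so I sort the triggered arms into dyadic layers $k=1,\dots,\ceil{\log_2(4\sqrt{m_i})}$. Layer $k$ collects the arms with $x_i^2 \ge 4^{-k}\Delta_t^2/\cdots$, normalised so that some layer has at least $2^{k}$-type many arms whose counters satisfy
\[N_{i,t-1}\le \pa{c_k\, b_i(S_t)\beta_{i,T}^{\alpha_i}/\Delta_t}^{1/\alpha_i}.\]
The tail of small entries, below $\Delta_t/(4\sqrt{m_i})$ in magnitude, contributes at most $\Delta_t/4$ to the norm and is absorbed into the $-\Delta_t$ term. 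Next, each round's contribution $\Delta_t$ is charged to the arms of the active layer, each arm receiving $\Delta_t$ divided by the layer size. For a fixed arm $i$ I then sum over the successive values of its counter, bounding $\Delta_t$ through the inverse of the counter condition. This produces a sum of the form $\sum_N (b_i^{1/\alpha_i}\beta_{i,T}/N)^{\alpha_i}\wedge\dots$, cut off at the counter level corresponding to $\Delta_{i,\min}$.

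\textbf{Evaluating the sums.} For $\alpha_i=1/2$ the sum gives $b_i^2\beta_{i,T}/\Delta_{i,\min}$, times a layer factor $\log_2(4\sqrt{m_i})$, which yields $\eta_i$ in that case. For $\alpha_i>1/2$ it gives the stated geometric constant $2^{2/\alpha_i}/\pa{(1-2^{1/\alpha_i-2})(1-\alpha_i)}$ times $\Delta_{i,\min}^{-(1-\alpha_i)/\alpha_i}$. Finally I bound $b_i(S_t)$ by its maximum over $S$ with $p_i(S)>0$, and take the expectation of $\beta_{i,T}$ last, since $T$ is random and $\beta_{i,T}$ depends on it.

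\textbf{Main obstacle.} The delicate step is the layer/charging argument with triggered arms. I must check that the counting uses $m_i$, the number of arms co-triggered with $i$, rather than the total number $n$. I must also check that charging each round to counters that actually increment stays valid even though only triggered arms are observed. For both checks I would follow the proof of Theorem~4 of \citet{perrault2020covariance} line by line, restricting every layer to $A_t$.
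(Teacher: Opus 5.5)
Your reduction step is sound and matches the paper, and so is taking $\EE{\beta_{i,T}}$ at the very end. The gap is in what you charge afterwards. Write $x=\sum_{i\in A_t}x_i\be_i$. Once the conditional expectation is removed, the per-round quantity is $\pa{2\norm{x}_2-\Delta_t}^+$; the paper uses $\II{\Lambda(A_t)\geq\Delta_t}\Lambda(A_t)$ with $\Lambda(A_t)=2\norm{x}_2$. This quantity is \emph{not} bounded by a multiple of $\Delta_t$, because the realized norm can exceed $\Delta_t$ by an arbitrary factor. That is exactly what putting the expectation outside the norm allows. For instance, $\Delta_t\leq\EEcc{\norm{x}_2}{\cF_t}$ holds if $\norm{x}_2=M$ with conditional probability $\Delta_t/M$ and $0$ otherwise. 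So you cannot fall back on $\II{\mathfrak{A}_t}\Delta_t\lesssim\EEcc{\Delta_t\II{2\norm{x}_2>\Delta_t}}{\cF_t}$. Your layers are measured against $\Delta_t$, your counter condition is $N_{i,t-1}\leq\pa{c_kb_i(S_t)\beta_{i,T}^{\alpha_i}/\Delta_t}^{1/\alpha_i}$, and you charge $\Delta_t$ and bound it ``through the inverse of the counter condition''. All of this only controls $\sum_t\Delta_t\II{\cdot}$, which is the usual expectation-inside analysis. It says nothing about $\sum_t\norm{x}_2$. Moreover, with thresholds relative to $\Delta_t$ the top layer is unbounded above, so layer cardinalities cannot control the realized norm.

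The missing idea is to charge $\Lambda(A_t)$ itself and to build the layers relative to it.
\begin{itemize}
\item Every coordinate is at most the norm, so $4x_i\leq2\Lambda(A_t)$ for all $i\in A_t$. This caps the top layer.
\item The small coordinates are discarded through $\Lambda(A_t)\leq\norm{\pa{4x-\Lambda(A_t)/\sqrt{\abs{A_t}}}^+}_2$.
\item Set $J_{k,t}=\sset{i\in A_t:~2^{1-k}\Lambda(A_t)\geq4x_i\geq2^{-k}\Lambda(A_t)}$ for $0\leq k\leq\ceil{\log_2\sqrt{\abs{A_t}}}$. Then some layer satisfies $\abs{J_{k,t}}\geq4^{k-1}/\pa{1+\ceil{\log_2\sqrt{\abs{A_t}}}}$.
\end{itemize}
This is a $4^k$-type cardinality, not the $2^k$-type count you mention. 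A $2^k$ count would leave a polynomial factor in $m_i$ (e.g.\ $\sqrt{m_i}$ when $\alpha_i=1/2$) instead of $\log_2(4\sqrt{m_i})$. Also, the number of layers is governed by $\abs{A_t}$, and it becomes $m_i$ only after you fix $i\in A_t$.

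Every arm of that layer satisfies $N_{i,t-1}^{\alpha_i}\leq2^{k+2}b_i(S_t)\beta_{i,T}^{\alpha_i}/\Lambda(A_t)$. So the charged amount $\Lambda(A_t)$, not $\Delta_t$, is what the counter controls. Proposition~\ref{prop:sumcounter} is then applied with $\delta_t=\Lambda(A_t)$, which is at least $\Delta_{i,\min}$ because $\Lambda(A_t)\geq\Delta_t$. Its bound does not depend on the largest value of $\delta_t$, since $x^{-1/\alpha_i}$ is integrable at infinity for $\alpha_i<1$. Summing over $k$ gives a geometric series for $\alpha_i>1/2$, and a constant per layer for $\alpha_i=1/2$, which yields $\eta_i$.
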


\begin{proof} 
Let $t\geq 1$. With a first reverse amortisation, we start by restricting the set of possibles for $A_t$ by only taking those whose error is at least twice as large as $\Delta_t$: assuming that $\mathfrak{A}_t$ holds, we have
\begin{align*}
    \Delta_t &\leq   \EEcc{2\norm{\sum_{i\in A_t,N_{i,t-1}>0}\frac{b_i(S_t)\beta_{i,T}^{\alpha_i}\be_i}{N_{i,t-1}^{\alpha_i}}}_2-\Delta_t}{\cF_t}
    \\&\leq \EEcc{\II{\norm{\sum_{i\in A_t,N_{i,t-1}>0}\frac{2b_i(S_t)\beta_{i,T}^{\alpha_i}\be_i}{N_{i,t-1}^{\alpha_i}}}_2\geq\Delta_t}\norm{\sum_{i\in A_t,N_{i,t-1}>0}\frac{2b_i(S_t)\beta_{i,T}^{\alpha_i}\be_i}{N_{i,t-1}^{\alpha_i}}}_2}{\cF_t}
\end{align*}
We now
define $$\Lambda\pa{A_t}\triangleq {\norm{\sum_{i\in A_t,N_{i,t-1}>0}\frac{2b_i(S_t)\beta_{i,T}^{\alpha_i}\be_i}{N_{i,t-1}^{\alpha_i}}}_2},$$
and have for any $j\in A_t$ that 
\begin{align}\Lambda\pa{A_t}\geq {\frac{2b_j(S_t)\beta_{j,T}^{\alpha_j}}{N_{j,t}^{\alpha_j}}}.\label{LowerLambda_t}\end{align}
Then, we can write:  
\begin{align*}
    \Lambda\pa{A_t}&=-\Lambda\pa{A_t}+{\norm{\sum_{i\in A_t,N_{i,t-1}>0}\frac{4b_i(S_t)\beta_{i,T}^{\alpha_i}\be_i}{N_{i,t-1}^{\alpha_i}}}_2}
    \\
    &={
    -\norm{\sum_{i\in A_t}\frac{\Lambda\pa{A_t} \be_i}{\norm{\be_{A_t}}_2}}_2+\norm{\sum_{i\in A_t,N_{i,t-1}>0}\frac{4b_i(S_t)\beta_{i,T}^{\alpha_i}\be_i}{N_{i,t-1}^{\alpha_i}}}_2}
        \\
    &\leq {
    \norm{\sum_{i\in A_t,N_{i,t-1}>0}\pa{\frac{4b_i(S_t)\beta_{i,T}^{\alpha_i}}{N_{i,t-1}^{\alpha_i}}-\frac{\Lambda\pa{A_t}}{\norm{\be_{A_t}}_2}}^+\be_i }_2 }
          \\
              &= {
    \norm{\sum_{i\in A_t,N_{i,t-1}>0}\pa{\frac{4b_i(S_t)\beta_{i,T}^{\alpha_i}}{N_{i,t-1}^{\alpha_i}}-\frac{\Lambda\pa{A_t}}{\norm{\be_{A_t}}_2}}^+\II{\Lambda\pa{A_t}\geq \frac{2b_i(S_t)\beta_{i,T}^{\alpha_i}}{N_{i,t-1}^{\alpha_i}} }\be_i }_2} &\text{Using } \eqref{LowerLambda_t}
          \\
    &\leq {\norm{\sum_{i\in A_t,N_{i,t-1}>0}\II{2\Lambda\pa{A_t}\geq\frac{4b_i(S_t)\beta_{i,T}^{\alpha_i}}{N_{i,t-1}^{\alpha_i}}\geq \frac{\Lambda\pa{A_t}}{\norm{\be_{A_t}}_2}}\frac{4b_i(S_t)\beta_{i,T}^{\alpha_i}\be_i}{N_{i,t-1}^{\alpha_i}} }_2}.
\end{align*}
We now consider the following partition of the set of indices:
$$\II{i\in A_t,N_{i,t-1}>0,~2\Lambda\pa{A_t}\geq\frac{4b_i(S_t)\beta_{i,T}^{\alpha_i}}{N_{i,t-1}^{\alpha_i}}\geq \frac{\Lambda\pa{A_t}}{\norm{\be_{A_t}}_2}}\subset \bigcup_{k=0}^{\ceil{\log_2\pa{\norm{\be_{A_t}}_2}}} J_{k,t}, $$
where for all integer $1\leq k\leq {\ceil{\log_2\pa{\norm{\be_{A_t}}_2}}} $,
$$J_{k,t}\triangleq \sset{i\in A_t,N_{i,t-1}>0,~2^{1-k}\Lambda\pa{A_t}\geq\frac{4b_i(S_t)\beta_{i,T}^{\alpha_i}}{N_{i,t-1}^{\alpha_i}}\geq {2^{-k}\Lambda\pa{A_t}}}.$$
We bound $\Lambda\pa{A_t}^2$ as
\begin{align*}\Lambda\pa{A_t}^2&\leq {\norm{\sum_{i\in A_t,N_{i,t-1}>0}\II{2\Lambda\pa{A_t}\geq\frac{4b_i(S_t)\beta_{i,T}^{\alpha_i}}{N_{i,t-1}^{\alpha_i}}\geq \frac{\Lambda\pa{A_t}}{\norm{\be_{A_t}}_2}}\frac{4b_i(S_t)\beta_{i,T}^{\alpha_i}\be_i}{N_{i,t-1}^{\alpha_i}} }_2^2}
\\&= {\sum_{k=0}^{\ceil{\log_2\pa{\norm{\be_{A_t}}_2}}}\norm{\sum_{i\in J_{k,t}}{\frac{4b_i(S_t)\beta_{i,T}^{\alpha_i}\be_i}{N_{i,t-1}^{\alpha_i}}}}_2^2}
\\&\leq {
\sum_{k=0}^{\ceil{\log_2\pa{\norm{\be_{A_t}}_2}}}{2^{2-2k}}\Lambda(A_t)^2\norm{\be_{J_{k,t}} }_2^2} .
\end{align*}
So there exists one integer $k_t$ such that $\abs{J_{k_t,t}}=\norm{\be_{J_{k_t,t}}}^2_2\geq 2^{2k_t-2}\pa{1+\ceil{\log_2\pa{\norm{\be_{A_t}}_2}}}^{-1}$. 

\begin{align*}{\sum_{t=1}^T\II{\mathfrak{A}_t}\Delta_t}&\leq\sum_{t=1}^T\EEcc{\II{\Lambda(A_t)\geq \Delta_t}\Lambda(A_t)}{\cF_t}
\\&\leq
\sum_{t=1}^T\EEcc{\sum_{k=0}^{\ceil{\log_2\pa{\norm{\be_{A_t}}_2}}}\II{k_t=k,~\Lambda(A_t)\geq \Delta_t}\Lambda(A_t)}{\cF_t}
\\&\leq\!
\sum_{t=1}^T\!\EEcc{\!\sum_{k=0}^{\ceil{\log_2\pa{\norm{\be_{A_t}}_2}}}\!\!\!\!\!\II{k_t\!=\!k,~\Lambda(A_t)\!\geq\! \Delta_t}\frac{\sum_{i\in [n]}\II{i\in J_{k,t}}}{2^{2k-2}\pa{1\!+\!\ceil{\log_2\pa{\norm{\be_{A_t}}_2}}}^{-1}}\Lambda(A_t)}{\cF_t}
\\&\leq
\sum_{t=1}^T\!\sum_{i=1}^n\!\EEcc{\!\sum_{k=0}^{\ceil{\log_2\pa{\norm{\be_{A_t}}_2}}}\!\frac{\II{i\!\in\! A_t,~0<N_{i,t-1}^{\alpha_i}\!\leq\! \frac{2^{k+2}b_i(S_t)\beta_{i,T}^{\alpha_i}}{\Lambda(A_t)},~\Lambda(A_t)\!\geq\! \Delta_t}}{2^{2k-2}\pa{1+\ceil{\log_2\pa{\norm{\be_{A_t}}_2}}}^{-1}}\Lambda(A_t)}{\cF_t}.
\end{align*}
Taking the expectation of the above, and using $\sset{t\leq T}\in\cF_t $, we have the bound
\begin{align*}
&\EE{\sum_{t=1}^T\II{\mathfrak{A}_t}\Delta_t}\leq\sum_{i=1}^n\!\EE{\sum_{t=1}^T\!\!{\sum_{k=0}^{\ceil{\log_2\pa{\norm{\be_{A_t}}_2}}}\!\frac{\II{i\!\in\! A_t,~0<N_{i,t-1}^{\alpha_i}\!\leq\! \frac{2^{k+2}b_i(S_t)\beta_{i,T}^{\alpha_i}}{\Lambda(A_t)},~\Lambda(A_t)\!\geq\! \Delta_t}}{2^{2k-2}\pa{1+\ceil{\log_2\pa{\norm{\be_{A_t}}_2}}}^{-1}}\Lambda(A_t)}}
\\&\leq
\sum_{i=1}^n\!\!\sum_{k=0}^{\ceil{\log_2\pa{\sqrt{m_i}}}}\!\!\!\EE{\frac{1\!+\!\ceil{\log_2\pa{\sqrt{m_i}}}}{2^{2k-2}}\underbrace{\sum_{t=1}^T\!{{\II{i\!\in\! A_t,~0<N_{i,t-1}^{\alpha_i}\!\leq\! \frac{2^{k+2}b_i(S_t)\beta_{i,T}^{\alpha_i}}{\Lambda(A_t)},~\Lambda(A_t)\!\geq\! \Delta_t}}\Lambda(A_t)}}_{\numterm{sumcounter}_{i,k}}}
.
\end{align*}
Applying Proposition~\ref{prop:sumcounter} gives
$$\eqref{sumcounter}_{i,k}\leq \frac{\max_{S\in \cS,~p_i(S)>0} b_i(S)^{\frac{1}{\alpha_i}}\beta_{i,T}2^{\frac{k+2}{\alpha_i}}}{1-\alpha_i}\Delta_{i,\min}^{1-1/\alpha_i}  ,$$
So  using $\ceil{\log_2\pa{\sqrt{m_i}}}+1\leq \log_2\pa{4\sqrt{m_i}}$,
we get
$$\EE{\sum_{t=1}^T\II{\mathfrak{A}_t}\Delta_t}\leq \sum_{i\in [n]}4\log_2(4\sqrt{m_i})\max_{S\in \cS,~p_i(S)>0} b_i(S)^{\frac{1}{\alpha_i}}{\EE{\beta_{i,T}}\eta_i},$$
where
\begin{align*}\eta_i=
 \left\{
    \begin{array}{ll}
        {32\log_2(4\sqrt{m_i})}{\Delta_{i,\min}^{-1}} &\mbox{if } \alpha_i=1/2 \\
        2^{\frac{2}{\alpha_i}}\pa{\pa{1-2^{\frac{1}{\alpha_i}-2}}\pa{1-\alpha_i}\Delta_{i,\min}^{\frac{1-\alpha_i}{\alpha_i}}}^{-1} & \mbox{if }  1/2<\alpha_i<1.
    \end{array}
\right.
\end{align*}
\end{proof}
\begin{proposition} Let $i\in [n]$ and $f_i: \R_+\to \R_+$ be a non increasing function, integrable on an interval $[\delta_{i,\min},\delta_{i,\max}]\subset \R_+^\star$. Then for any sequence of real numbers $\pa{\delta_t}\in \pa{[\delta_{i,\min},\delta_{i,\max}]\cup\sset{0}}^T$,
$$\sum_{t=1}^T{\II{i\in A_t,~ 1\leq N_{i,t-1}\leq f_i(\delta_t)}}{\delta_t} \leq
{f_i(\delta_{i,\min})}\delta_{i,\min}+
\int_{\delta_{i,\min}}^{\delta_{i,\max}}{f_i(x)\emph{d}x}.$$
In particular, \\
\begin{itemize}
\item If $f_i(x)=\beta_{i,T}x^{-1/\alpha_i}$, $\alpha_i\in (0,1)$ and $\beta_{i,T}\geq 0$, then 
\begin{align*}\sum_{t=1}^T{\II{i\in A_t,~ 1\leq N_{i,t-1}\leq f_i(\delta_t)}}{\delta_t} &\leq
\delta_{i,\min}^{1-1/\alpha_i}\frac{\beta_{i,T}}{1-\alpha_i}-\delta_{i,\max}^{1-1/\alpha_i}\frac{\alpha_i\beta_{i,T}}{1-\alpha_i}\\&\leq \delta_{i,\min}^{1-1/\alpha_i}\frac{\beta_{i,T}}{1-\alpha_i}
.\end{align*}

\item If $f_i(x)=\beta_{i,T}x^{-1}$, $\beta_{i,T}\geq 0$, then 
$$\sum_{t=1}^T{\II{i\in A_t,~ 1\leq N_{i,t-1}\leq f_i(\delta_t)}}{\delta_t} \leq\beta_{i,T}\pa{1+\log\pa{\frac{\delta_{i,\max}}{\delta_{i,\min}}}}
.$$
\end{itemize}
  \label{prop:sumcounter}
 \end{proposition}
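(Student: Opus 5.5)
The plan is to prove the general inequality through a charging (layer-cake) argument on the values $\delta_t$, and then to obtain the two special cases by evaluating the integral. Only the rounds with $i\in A_t$, $N_{i,t-1}\geq 1$ and $\delta_t>0$ (so that $\delta_t\in[\delta_{i,\min},\delta_{i,\max}]$) contribute to the sum. Let $t_1<t_2<\dots<t_K$ be these rounds. The key observation is that the counter $N_{i,t-1}$ increases by one each time $i\in A_t$. Consequently, along the contributing rounds the counters take distinct values, and the $k$-th contributing round has $N_{i,t_k-1}\geq k$. The constraint $N_{i,t_k-1}\leq f_i(\delta_{t_k})$ then gives $k\leq f_i(\delta_{t_k})$.

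The next step is to convert this constraint into a bound on $\delta_{t_k}$. Define the generalized inverse $g(k)\triangleq\sup\sset{x\in[\delta_{i,\min},\delta_{i,\max}],~f_i(x)\geq k}$. Since $f_i$ is non-increasing, $k\leq f_i(\delta_{t_k})$ implies $\delta_{t_k}\leq g(k)$, and the whole sum is then at most $\sum_{k\geq 1,~g(k)\text{ defined}}g(k)$. I split this sum into two parts.
\begin{itemize}
\item For $k\leq f_i(\delta_{i,\min})$, I bound $g(k)$ crudely, pairing each term with $\delta_{i,\min}$ plus an excess. Precisely, $g(k)=\delta_{i,\min}+\pa{g(k)-\delta_{i,\min}}$, and summing the first part gives at most $f_i(\delta_{i,\min})\delta_{i,\min}$.
\item For the excesses, I use the layer-cake identity $\sum_{k\geq1}\pa{g(k)-\delta_{i,\min}}^+\leq\int_{\delta_{i,\min}}^{\delta_{i,\max}}\#\sset{k\geq 1: g(k)>x}\,\mathrm{d}x\leq\int_{\delta_{i,\min}}^{\delta_{i,\max}}f_i(x)\,\mathrm{d}x$. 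This holds because $g(k)>x$ forces $f_i(x)\geq k$ by monotonicity, so the number of such $k$ is at most $f_i(x)$.
\end{itemize}
Adding the two pieces gives the claimed bound. The main obstacle is the correct handling of the generalized inverse and its boundary effects (non-strict monotonicity, and $f_i$ possibly non-integer). I would check this step carefully, possibly by arguing directly via a sorting or rearrangement: order the contributing $\delta_t$ decreasingly, observe that the $k$-th largest satisfies $f_i(\delta)\geq k$, and compare the sum with a Riemann sum of the decreasing function $f_i$ read along the vertical axis.

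For the particular cases, I plug in the two choices of $f_i$. With $f_i(x)=\beta_{i,T}x^{-1/\alpha_i}$, the first term is $\beta_{i,T}\delta_{i,\min}^{1-1/\alpha_i}$. The integral equals $\frac{\alpha_i\beta_{i,T}}{1-\alpha_i}\pa{\delta_{i,\min}^{1-1/\alpha_i}-\delta_{i,\max}^{1-1/\alpha_i}}$, where I use $1-1/\alpha_i<0$. The sum of the two is $\frac{\beta_{i,T}}{1-\alpha_i}\delta_{i,\min}^{1-1/\alpha_i}-\frac{\alpha_i\beta_{i,T}}{1-\alpha_i}\delta_{i,\max}^{1-1/\alpha_i}$, and dropping the negative term gives the second inequality. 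With $f_i(x)=\beta_{i,T}/x$, the first term is $\beta_{i,T}$ and the integral is $\beta_{i,T}\log(\delta_{i,\max}/\delta_{i,\min})$, which gives the stated bound.
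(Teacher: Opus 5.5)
Your proof is correct. It rests on the same two ingredients as the paper's. First, each counter value is realized at most once, because $N_{i,t}$ is incremented whenever $i\in A_t$. Second, each counter value is charged the largest gap compatible with it, and the order of summation is then exchanged to produce $\int f_i$.

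The difference is in the bookkeeping. The paper works on the finite grid $\delta_{i,1}\geq\dots\geq\delta_{i,K_i}$ of values taken by $\delta_t$. It splits the counter range into the intervals $(f_i(\delta_{i,k-1}),f_i(\delta_{i,k})]$ and counts the integers in each, which gives $\sum_k(\lfloor f_i(\delta_{i,k})\rfloor-\lfloor f_i(\delta_{i,k-1})\rfloor)\delta_{i,k}$. It then needs an Abel summation and a right Riemann sum to reach the integral. That expression is exactly your $\sum_k g(k)$ with the supremum in $g$ restricted to the grid, and its Abel-summed form is the discrete version of your layer-cake identity. By using the continuous generalized inverse and Fubini directly, you skip the grid and the Riemann-sum comparison at no cost in rigor; the paper's version is purely discrete.

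The step you flagged as delicate is already fine. The value $\delta_{t_k}$ itself lies in the set defining $g(k)$. Also, $g(k)>x$ yields some $y>x$ with $f_i(y)\geq k$, hence $f_i(x)\geq k$. Two small points should be made explicit:
\begin{itemize}
\item Define the contributing rounds as those where the whole indicator equals one, so that $N_{i,t_k-1}\leq f_i(\delta_{t_k})$ holds for all of them.
\item Note that nonemptiness of the set defining $g(k)$ forces $k\leq f_i(\delta_{i,\min})$ by monotonicity. This is what bounds your first piece by $f_i(\delta_{i,\min})\delta_{i,\min}$.
\end{itemize}
The two special cases are computed correctly.
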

\begin{proof}
Consider $\delta_{i,\max}=\delta_{i,1}\geq \delta_{i,2}\geq \dots \geq\delta_{i,K_i}=\delta_{i,\min}$ being all possible values for $\delta_t$ when $\delta_t\neq 0$.
We define a dummy gap $\delta_{i,0}=\infty$ and let $f_i\pa{\delta_{i,0}}=0$.
In \eqref{rangebreak}, we look at times $t$ where $\delta_t\neq 0$ and first break the range $(0,f_i(\delta_t)]$ of the counter $N_{i,t-1}$ into sub intervals: $$(0,f_i(\delta_t)]= (f_i(\delta_{i,0}),f_i(\delta_{i,1})]\cup\dots\cup (f_i(\delta_{i,k_t-1}),f_i(\delta_{i,k_t})],$$ where $k_t$ is the index such that $\delta_{i,k_t} = \delta_t$. {This index $k_t$ exists by assumption that the subdivision contains all possible values for $\delta_t$ when $\delta_t\neq 0$. Notice that in \eqref{rangebreak}, we do not explicitly use $k_t$, but instead sum over all $k\in [K_i]$ and filter against the event $\sset{\delta_{i,k}\geq \delta_t}$, which is equivalent to summing over $k\in [k_t].$}  

\begin{align}
&\sum_{t=1}^T{\II{i\in A_t,~ N_{i,t-1}\leq f_i(\delta_t)}}{\delta_t}\nonumber
\\&=\label{rangebreak}
\sum_{t=1}^T\sum_{k=1}^{K_i}\II{i\in A_t,~f_i(\delta_{i,k-1})< N_{i,t-1}\leq f_i(\delta_{i,k}),\delta_{i,k}\geq \delta_t}{\delta_t}.
\end{align}
Over each event that $N_{i,t-1}$ belongs to the interval $(f_i(\delta_{i,k-1}),f_i(\delta_{i,k})]$, we upper bound the  gap $\delta_t$ by $\delta_{i,k}$. 

\begin{align}
\eqref{rangebreak}&\leq\label{uppergap}
\sum_{t=1}^T\sum_{k=1}^{K_i}\II{i\in A_t,~f_i(\delta_{i,k-1})< N_{i,t-1}\leq f_i(\delta_{i,k}),\delta_{i,k}\geq \delta_t}{\delta_{i,k}}.
\end{align}
Then, we further upper bound the summation by adding events that $N_{i,t-1}$ belongs to the remaining intervals
$(f_i(\delta_{i,k-1}),f_i(\delta_{i,k})]$ for $k_t<k\leq K_i$, associating them to a suffered gap $\delta_{i,k}$. This is equivalent to removing the filtering against the event $\sset{\delta_{i,k}\geq \delta_t}$. 

\begin{align}
\eqref{uppergap}&\leq\label{addremainingintervals}
\sum_{t=1}^T\sum_{k=1}^{K_i}\II{i\in A_t,~f_i(\delta_{i,k-1})< N_{i,t-1}\leq f_i(\delta_{i,k})}{\delta_{i,k}}.\end{align}
Now, we invert the summation over $t$ and the one over $k$. 

\begin{align}
\eqref{addremainingintervals}&=\label{sumswitch}
\sum_{k=1}^{K_i}\sum_{t=1}^T\II{i\in A_t,~f_i(\delta_{i,k-1})< N_{i,t-1}\leq f_i(\delta_{i,k})}{\delta_{i,k}}.
\end{align}
For each $k\in[K_i]$, the number of times $t\in [T]$ that the counter $N_{i,t-1}$ belongs to $(f_i(\delta_{i,k-1}),f_i(\delta_{i,k})]$ can be upper bounded by the number of integers in this interval. This is due to the event $\sset{i\in A_t}$, imposing that $N_{i,t-1}$ is incremented, so $N_{i,t-1}$ cannot be worth the same integer for two different times $t$ satisfying $i\in A_t$. We use the fact that for all $x,y\in \R$, $x\leq y$, the number of integers in the interval $(x,y]$ is exactly $\floor{y}-\floor{x}$. 

\begin{align}
\eqref{sumswitch}&\leq\label{intcount}
\sum_{k=1}^{K_i}\pa{\floor{f_i(\delta_{i,k})}-\floor{f_i(\delta_{i,k-1})}}{\delta_{i,k}}.
\end{align}
We then simply expand the summation, and some terms are cancelled (remember that $f_i\pa{\delta_{i,0}}=0$).

\begin{align}
\eqref{intcount}&=\label{expand}
\floor{f_i(\delta_{i,K_i})}\delta_{i,K_i}+
\sum_{k=1}^{K_i-1}\floor{f_i(\delta_{i,k})}\pa{\delta_{i,k}-\delta_{i,k+1}}
\end{align}
We use $\floor{x}\leq x$ for all $x\in \R$. Finally, we recognize a right Riemann sum, and use the fact that $f_i$ is non increasing
to upper bound each $f_i(\delta_{i,k})\pa{\delta_{i,k}-\delta_{i,k+1}}$ by $\int_{\delta_{i,k+1}}^{\delta_{i,k}}f_i(x)\text{d}x$, for all $k\in [K_i-1]$.
\begin{align}
\eqref{expand}&\leq\label{upperfloor}
{f_i(\delta_{i,K_i})}\delta_{i,K_i}+
\sum_{k=1}^{K_i-1}{f_i(\delta_{i,k})}\pa{\delta_{i,k}-\delta_{i,k+1}}
\\&\leq\label{Riemann}
{f_i(\delta_{i,K_i})}\delta_{i,K_i}+
\int_{\delta_{i,K_i}}^{\delta_{i,1}}{f_i(x)\text{d}x}.
\end{align}
\end{proof}

\end{document}